\DeclareMathOperator{\E}{\mathbb{E}}
\DeclareMathOperator{\Prob}{\mathbb{P}}
\newcommand{\Wass}{\mathcal{W}}
\newcommand{\Causal}{\mathcal{C}_b}
\newcommand{\cN}{\mathcal{N}}
\newcommand{\norm}[1]{\|{#1}\|}
\newcommand{\indep}{\perp \!\!\! \perp}
\theoremstyle{plain}
\newtheorem{theorem}{Theorem}[section]
\newtheorem{proposition}[theorem]{Proposition}
\newtheorem{remark}[theorem]{Remark}
\theoremstyle{definition}
\newtheorem{definition}[theorem]{Definition}
\newcommand{\mg}[1]{\textcolor{blue}{\textbf{MG:} #1}}
\newcommand{\am}[1]{\textcolor{purple}{\textbf{Alan:} #1}}
\def\mb{\mathbb}
\def\mc{\mathcal}
\renewcommand{\P}{\mb{P}}
\newcommand{\cn}{\textcolor{red}{[\raisebox{-0.2ex}{\tiny\shortstack{citation\\[-1ex]needed}}]}}
\newcommand{\todo}[1]{\textcolor{red}{\textbf{TODO:} #1}}
\newcommand{\bcd}{\textit{FairBiT}}
\newcommand{\fairbit}{\textit{FairBiT}}
\newcommand{\fairlp}{\textit{FairLeap}}
\newcommand{\cmnt}[1]{\ignorespaces}  
\newcommand{\cmark}{\ding{51}}%
\newcommand{\xmark}{\ding{55}}%
\def\cddw{$\mathsf {CDD}^{\mathsf {wass}}_{f}$}
\def\cddlp{$\mathsf {CDD}^{\mathsf {\ell_p}}_{f}$}
\def\cddwtext{$\mathsf {CDD}^{\mathsf {wass}}$}
\def\cddlptext{$\mathsf {CDD}^{\ell_p}$}
\title{Auditing and Enforcing Conditional Fairness via Optimal Transport}
\author{
    Mohsen Ghassemi\equalcontrib\textsuperscript{\rm 1}, Alan Mishler\equalcontrib\textsuperscript{\rm 1}, Niccol\`o Dalmasso\equalcontrib\textsuperscript{\rm 1}, Luhao Zhang\textsuperscript{\rm 2}\thanks{Work done during an internship at J.P.Morgan AI Research.}\\
    Vamsi K. Potluru\textsuperscript{\rm 1},
    Tucker Balch\textsuperscript{\rm 1},
    Manuela Veloso\textsuperscript{\rm 1}
}
\begin{document}

\maketitle

\begin{abstract}
Conditional demographic parity (CDP) is a measure of the demographic parity of a predictive model or decision process when conditioning on an additional feature or set of features. Many algorithmic fairness techniques exist to target demographic parity, but CDP is much harder to achieve, particularly when the conditioning variable has many levels and/or when the model outputs are continuous. The problem of auditing and enforcing CDP is understudied in the literature. In light of this, we propose novel measures of {conditional demographic disparity (CDD)} which rely on statistical distances borrowed from the optimal transport literature. We further design and evaluate regularization-based approaches based on these CDD measures. Our methods, \fairbit{} and \fairlp{}, allow us to target CDP even when the conditioning variable has many levels. When model outputs are continuous, our methods target full equality of the conditional distributions, unlike other methods that only consider first moments or related proxy quantities. We validate the efficacy of our approaches on real-world datasets. 
\end{abstract}

\section{Introduction}
\cmnt{
\todo{
\begin{itemize}
    \item overfull hbox
    \item Make sure the order of items in the appendix makes sense and we refer to every section in the main paper.
\end{itemize}
}
}

Algorithmic decision-making has an increasing impact on individuals' lives, in areas such as finance, healthcare, and hiring. The growing field of algorithmic fairness aims to define, measure, and prevent discrimination in such systems.

Much of the early work in algorithmic fairness adopted as a fairness criterion \textit{demographic parity} (DP) (aka statistical parity), which requires the outputs of a model or decision process to be statistically independent of a sensitive feature such as race, gender, or disability status \cite{calders2009BuildingClassifiersIndependency, kamiran2010ClassificationNoDiscrimination, kamiran2012DecisionTheoryDiscriminationAware}. Demographic parity remains arguably the most widely studied fairness criterion to date \cite{hort2022BiasMitigation}, but it can permit an algorithm to behave in intuitively unfair ways \cite{dwork2012FairnessAwareness, hardt2016EqualityOpportunity}. Consider the hypothetical loan approval process summarized in Table \ref{tab:synthetic_example}. The overall approval rate is 50\% for both male and female applicants, \cmnt{so this process satisfies demographic parity;} but within each income level, males are more likely \cmnt{than females} to get approved. \cmnt{This is an example of Simpson's paradox; it is possible because income is positively correlated with approval probability, and females in this example are at a higher income level on average than males.} If applicants within income levels are equally qualified, then this process appears to discriminate against females.

\textit{Conditional demographic parity} (CDP) instead requires independence of the output and the sensitive feature conditional on a \textit{legitimate} or \textit{explanatory} feature or set of features, such as income in the loan example \cite{zliobaite2011HandlingConditionalDiscrimination, kamiran2013QuantifyingExplainableDiscrimination}. \cmnt{For example, CDP might require the loan approval rate to be the same for males and females within each income level, while permitting differences in approval rates across levels as well as at the population level. 
}One way to achieve CDP is to use only the legitimate features as model inputs, but this may result in unacceptably poor predictive performance. \cmnt{Additionally, it is frequently desirable for a practitioner to have a tunable knob for navigating fairness-performance trade-offs \cite{ZhaoInherent2019, kim2020FACTDiagnosticGroupa, liu2021AccuracyFairnessTradeoffs}}
When the legitimate features have a small number of levels, CDP can also be satisfied by maintaining a separate model for each subgroup defined by values of the legitimate feature and applying a method that targets DP to each subgroup. However, this approach may be infeasible when the legitimate feature has many levels. 

\cmnt{ ### This was  Alan's:
\textit{Conditional demographic parity} (CDP) instead requires independence of the output and the sensitive feature conditional on a \textit{legitimate} or \textit{explanatory} feature or set of features, such as income in the loan example \cn. \cmnt{For example, CDP might require the loan approval rate to be the same for males and females within each income level, while permitting differences in approval rates across levels as well as at the population level. 
}One obvious way to achieve CDP is to use only the legitimate features as model inputs, but this may result in unacceptably poor predictive performance. \cmnt{Additionally, it is frequently desirable for a practitioner to have a tunable knob for navigating fairness-performance trade-offs \cite{ZhaoInherent2019, kim2020FACTDiagnosticGroupa, liu2021AccuracyFairnessTradeoffs}} 
When the legitimate features have a small number of levels, CDP can also be satisfied by applying a method that targets DP to each subgroup defined by values of the legitimate feature . However, this approach may be infeasible when the legitimate feature has many levels \textcolor{red}{For example, some methods would require training a separate model for each level of the legitimate feature/would require partitioning the data based on the legitimate feature and processing each portion of the data separately \cn}.Additionally, most existing methods for DP are designed for classification settings; or, when the output is continuous, they seek only to ensure that different groups have the same average output, rather than targeting independence. \cmnt{For example, if the quantity of interest in the loan setting were loan amount rather than binary loan approval, they might only aim to ensure that the average loan amount was equal for males and females.}
}

To our knowledge, there is only one existing method designed to target conditional demographic parity even in the presence of many-valued legitimate features and/or continuous outputs \cite{xu2020algorithmic}. This method uses a regularizer that is derived from a particular characterization of conditional independence. We show, however, that except when the regularizer is exactly zero, minimizing this quantity does not provide guarantees with respect to the actual disparities.

Additionally, quantifying (un)fairness in the sense of conditional demographic parity remains underexplored in the literature. As it is generally impossible to exactly satisfy a fairness criterion without sacrificing model performance \cite{corbett2017algorithmic, ZhaoInherent2019}, it is crucial to define appropriate measures of \textit{conditional demographic disparity (CDD)}, i.e. to quantify violations of CDP. 
Quantifying CDD entails measuring disparities at every level of the legitimate features. While any existing measure of demographic disparity may be used at each level, aggregating the level-wise disparities to obtain a single CDD value requires careful attention. We are not aware of any previous work investigating this issue.


\cmnt{
A natural extension to demographic parity is \textit{conditional demographic parity} (CDP), which requires outputs to be statistically independent of the sensitive feature conditional on some additional \textit{legitimate} or \textit{explanatory} feature or set of features \cite{zliobaite2011HandlingConditionalDiscrimination, kamiran2013QuantifyingExplainableDiscrimination, corbett2017algorithmic}. For example, CDP might require the loan approval rate to be the same for males and females within each income level, while permitting differences in approval rates across levels as well as at the population level. \cmnt{In the loan example, if income level were the legitimate feature, then conditional demographic parity would require the approval rate to be the same for males and females within each income level, while permitting differences in approval rates across levels as well as different marginal approval rates for males vs. females. Conditional demographic parity allows an algorithm to discriminate on the basis of the designated legitimate features, while ensuring parity within subgroups defined by the values of those features.}

It follows that one way to satisfy conditional demographic parity is to apply a method that targets demographic parity to each subgroup defined by values of the legitimate feature. However, this approach is impractical when the legitimate feature has many levels, since the amount of data per level will be small, and it is impossible when the legitimate feature is continuous, since in general no two data points will share the same feature value. There has been relatively little work developing methods that are explicitly designed to satisfy conditional demographic parity, and the methods that have been developed generally operate within levels of the legitimate feature, so they suffer the same limitation \cite{zliobaite2011HandlingConditionalDiscrimination, kamiran2013QuantifyingExplainableDiscrimination}.

Satisfying (conditional) demographic parity is especially challenging when the model outputs are continuous, because it requires the full distribution of outputs to be the same for different levels of the sensitive feature. For example, suppose in the loan setting described above that the quantity of interest was loan amount rather than binary loan approval. Then demographic parity would require the distributions of amounts to be the same for males and females, while conditional demographic parity would require the distribution of amounts to be the same within each income level. Many existing methods for demographic parity are specific to classification settings and could not be applied in such a setting (e.g. \cite{hardt2016EqualityOpportunity, zafar2017FairnessDisparateTreatment, bechavod2018PenalizingUnfairnessBinary}. Among methods which can accommodate continuous model outputs, many seek only to equalize the first moments of distributions, i.e. ensuring that the average loan amount is the same for males vs. females rather than ensuring that the distributions are equal (e.g. \cite{coston2021CharacterizingFairnessSet}). Although there are a few methods which aim to equalize full distributions of continuous outputs \cite{chzhen2020neurips, Romano2020neurips}, as described above these methods cannot be practically applied toward conditional demographic parity when the legitimate feature has many levels or is continuous. To the best of our knowledge, there is no existing method designed to target conditional demographic parity in such a setting.}

\begin{table}[ht]
\centering
\begin{tabular}{@{}lcccc@{}}
\toprule
\multirow{2}{*}{Income Level} & \multicolumn{2}{c}{Approval Rate} & \multicolumn{2}{c}{Applicants} \\ 
\cmidrule(l){2-5} 
              & Male    & Female   & Male     & Female    \\ \cmidrule(r){1-5}
High Income   & 80\% & 60\% & 10\%  & 60\%  \\
Medium Income & 60\% & 40\% & 30\%  & 30\%  \\
Low Income    & 40\% & 20\% & 60\%  & 10\%  \\
Overall         & 50\% & 50\% & 100\% & 100\% \\ \bottomrule
\end{tabular}
\caption{Toy example illustrating a process which satisfies demographic parity but not conditional demographic parity. Males and females have the same marginal (``Overall'') loan approval rate (50\%), but within each income level, males have higher approval rates than females. \cmnt{This is possible because females have higher income on average than males, and higher income leads to a greater approval probability. This is an example of the well-known Simpson's paradox.}}
\label{tab:synthetic_example}
\end{table}

\subsection{Contributions} 
\cmnt{\subsubsection{(Version 1 of this subsection)}
We first introduce two new generic measures of conditional demographic \emph{disparity} (CDD): the \emph{CDD in the $\ell^p$  sense} and the \emph{CDD in the Wasserstein sense}. Both measures take the entire (conditional) distributions into account and work for both classification and regression. We propose and evaluate regularizers designed to minimize different versions of these disparities.

One set of regularizers consists of weighted sums of distributional distances at each level of the legitimate feature. We additionally propose a regularizer which considers the entire joint distributions while enforcing conditional independence. We call this last method FairBiT: Fairness through Bi-causal Transport. The FairBiT regularizer is based on the \emph{bi-causal transport distance}, a distributional distance recently studied in the optimal transport literature.

All our regularizers work regardless of whether the algorithm output is continuous or discrete, and unlike the existing state of the art \cite{xu2020algorithmic}, they produce models which do not require access to the sensitive feature at inference time. \textcolor{red}{FairBiT in particular works even when the legitimate feature has many levels and a small amount of data per level.} We show that our methods generally provide better fairness-performance tradeoffs than other methods on a range of synthetic and real datasets.} 

We first introduce two new generic measures of CDD: the \emph{CDD in the Wasserstein sense} and the \emph{CDD in the $\ell^p$  sense}. Both measures take the entire (conditional) distributions into account and work for both classification and regression. We propose and evaluate regularizers designed to minimize different versions of these disparities.

For CDD in the Wasserstein sense, minimizing the disparity is nontrivial. We propose a regularizer based on the \emph{bi-causal transport distance}, a distributional distance recently studied in the optimal transport literature. We call this method \textit{FairBiT: conditional Fairness through Bi-causal Transport}. For CDD in the $\ell_p$ sense, we propose \textit{\fairlp: conditional Fairness in the $\ell_p$ sense}, where the regularizer is essentially a weighted $\ell_p$ norm of the vector of level-wise disparities.

Our regularization-based approaches provide practitioners with a tunable knob for navigating fairness-performance trade-offs \cite{ZhaoInherent2019, kim2020FACTDiagnosticGroupa, liu2021AccuracyFairnessTradeoffs}. All our regularizers work regardless of whether the algorithm output is continuous or discrete, and since they utilize the entire dataset, they can be meaningfully applied even when the legitimate feature has many levels. Unlike the existing state of the art \cite{xu2020algorithmic}, 
our proposed methods do not require access to the sensitive feature at inference time. We show that our methods generally provide better fairness-performance trade-offs than other methods on a range of datasets. 

\cmnt{We propose a method for targeting conditional demographic parity called \bcd: Fairness through Bi-causal Transport. \bcd\ utilizes a single regularization term that is based on the \textit{bi-causal transport distance}, a distributional distance recently studied in the optimal transport literature. FairBiT has the following features: (1) it works regardless of whether the algorithm output is continuous or discrete;\mg{should we say sth short about allowing many levels for $L$?} (2) the regularization term can be added to the objective function for any model, so it is not tied to any particular model class; (3) the regularization coefficient allows users to investigate and tune trade-offs between fairness and predictive performance; and (4) it does not require access to the sensitive features at inference time. We show that FairBiT generally provides better fairness-performance tradeoffs than other methods on a range of synthetic and real datasets. We provide straightforward theoretical analyses to support these results. 

We introduce a new measure of conditional demographic disparity that takes the entire (conditional) distributions into account and works for both classification and regression. 
Moreover, we modify an existing method called Adversarial Debiasing \cite{zhang2018mitigating} (designed originally for enforcing DP and equalized odds) to make it suitable for CDP. We further consider how existing methods for targeting DP may be used to approximately target CDP.}

\cmnt{\textbf{Contributions:} We propose a method for targeting conditional demographic parity called \bcd: Fairness through Bi-causal Transport. Our main contributions are as follows.
\begin{enumerate}
    \item Regardless of whether the algorithm output and/or legitimate feature(s) are continuous or discrete 
    \item utilizes a single regularization term that is based on the \textit{bi-causal transport distance}, a distributional distance defined in the optimal transport literature under the so-called causality constraint. The regularization term can be added to the objective function for any model, so our method is not tied to any particular model class. \mg{connecting two literatures, possible extensions}
    \item A single regularization coefficient allows users to investigate and tune trade-offs between fairness and predictive performance.
    \item we propose a new measure of conditional demographic disparity that is designed to be robust to imbalance in the distribution of the sensitive feature.
    \item No need to access the sensitive features at inference time. Also sufficient to have blackbox access during training.
    \item Adaptations of DP approaches for CDP
\end{enumerate}}

\subsection{Related Work}

\paragraph{Demographic parity (DP) and conditional demographic parity (CDP)}
There are a vast number of methods designed to target DP \cite{hort2022BiasMitigation}, but there has been very little investigation of CDP \cite{zliobaite2011HandlingConditionalDiscrimination, kamiran2013QuantifyingExplainableDiscrimination}. \cmnt{(See \cite{hort2022BiasMitigation, pessach2023ReviewFairnessMachine} for recent surveys of fairness definitions and methods. Note that the terms demographic and statistical parity came into usage after some of these methods were developed.)} \cmnt{Most methods designed for CDP operate by stratifying on the legitimate feature and applying methods designed to achieve DP within each stratum. In principle, any method for DP can be applied in this way, but this may be infeasible when the legitimate feature has many levels. , and it has the disadvantage that the amount of data per level is much smaller than the total sample size. By contrast, our method applies a single regularizer to the entire dataset, and it can handle legitimate features that are discrete with many levels.}
A natural way to target CDP is to stratify on the legitimate feature and apply methods designed to achieve DP within each stratum. However, this may result in poor overall model performance, especially if there is a small amount of data in each level. Our methods utilize the entire dataset during model training. Additionally, the majority of methods for DP are designed for classification or only designed to equalize the first moments of the two distributions, whereas we target full conditional independence in both classification and regression settings. 

The closest work to ours is \citet{xu2020algorithmic}, who propose DCFR, the Derivable Conditional Fairness Regularizer. DCFR also accommodates rich legitimate features and continuous outcomes and also involves a regularizer applied to the entire dataset. However, we show in Section \ref{sec:DCFR} that the regularizer is equivalent to a proxy quantity that is distinct from the disparity of interest. Small values of the proxy quantity do not necessarily imply small values of the disparity. By contrast, we utilize regularizers that directly target the distances between the relevant conditional distributions. Additionally, DCFR requires access to the sensitive feature at inference time, which our methods do  not.

\paragraph{Fairness for binary vs. continuous model outputs}
The majority of the algorithmic fairness literature considers classification settings in which the final model outputs are binary, though there is a growing set of methods that can handle regression settings \cite{chzhen2020fair,chzhen2022minimax,Romano2020neurips,fukuchi2024demographic,xian2024differentially,coston2021CharacterizingFairnessSet, mishler2021FADEFAirDouble,franklin2023ontology,jin2023fairness}. Many methods which can handle continuous outputs only aim to equalize the first moments of the output distributions across levels of the sensitive feature (e.g. the average loan amount for males vs. females); a smaller but growing number target full equality of the output distributions \cite{chzhen2020fair,chzhen2022minimax, Romano2020neurips,fukuchi2024demographic}. Our methods falls in the latter category, aiming to ensure equality of conditional output distributions across levels of the legitimate feature, regardless of whether the output is discrete or continuous. \cmnt{(Note that when the output is binary, equalizing the first moments is equivalent to equalizing the full distributions, since the distribution of a Bernoulli random variable is fully specified by its mean.)}

\cmnt{In general, fairness methods may be divided into \textit{pre-processing} methods, which transform the training data; \textit{in-processing} methods, which use constraints or regularization terms to guide the model training process; and \textit{post-processing} methods, which learn transformations of existing models. There are dozens or even hundreds of methods in each category, so once again we refer the reader to recent surveys of these methods \cite{hort2022BiasMitigation, pessach2023ReviewFairnessMachine}. Our method is an in-processing approach, since we use a regularizer during model training.}

\paragraph{Optimal transport for fairness}
Optimal transport has been increasingly used for fairness-related applications, including fair edge prediction, training fair predictors, and uncovering discrimination in predictors \cmnt{training fair classifiers, and uncovering discrimination in classifiers}  \cite{black2020fliptest,silvia2020general,laclau2021all,si2021testing,zehlike2020matching,chiappa2021fairness,buyl2022optimal,yang2022obtaining,rychener2022metrizing,jiang2020wasserstein,johndrow2019algorithm,gordaliza2019obtaining,jiang2020wasserstein,rychener2022metrizing,miroshnikov2022wasserstein,jourdan2023optimal,langbridge2024optimal,wang2024wasserstein}. \cmnt{Many methods work by imposing a small optimal transport distances between output distributions, in order to impose (approximate) statistical independence with respect to sensitive attributes. This allows users to target demographic parity as well as other criteria, whether by pre-processing(\cite{feldman2015certifying,johndrow2019algorithm,gordaliza2019obtaining}), in-processing (\cite{rychener2022metrizing}), or post-processing (\cite{jiang2020wasserstein,chzhen2020fair}) methods.} Optimal transport techniques generally require the specification of a distributional distance function. Most papers that apply optimal transport to fairness rely on the \textit{Wasserstein distance}.
Our proposed methods utilize the Wasserstein distance at each level of the legitimate feature, and aggregate these distances using an outer measure. One of our disparity measures, the CDD in the Wasserstein sense (Definition \ref{def:CDD_wass}) gives rise to a nested Wasserstein distance; to target this disparity, we utilize a distance known as the \textit{bi-causal transport distance} \cite{backhoff2017causal}. This distance does not appear to have been used previously in the context of algorithmic fairness. 

\section{Notation and Problem Setup} \label{section:notation_and_setup}
Throughout, we consider data drawn from a distribution $(X, A, Y) \sim \mathbb P$, where $X \in \mc X$ is a set of features, $A\in\{0,1\}$ is a binary sensitive feature, and $Y \in \mb R$ is a prediction target. We use ``$\indep$'' to denote statistical independence. In a slight abuse of notation, we use $\mathbb P$ to refer both to the probability measure and to its density, assuming the density is defined. We let $f: \mc X \to \mb R$ be a model whose (un)fairness we wish to measure. In practice, $f(X)$ might map to a prediction that is used downstream in some decision process, or it might map to an automated decision such as a loan approval decision. Everything that follows applies in either setting. 




\begin{definition}[Demographic parity (DP)]
The model $f(X)$ satisfies \textit{demographic parity (DP)} if \cmnt{$\Prob(f(X) \mid A=0) \equiv \Prob(f(X) \mid A =1)$, or in other words if } $f(X) \indep A$, or in other words if $\Prob(f(X) \mid A=0) \equiv \Prob(f(X) \mid A =1)$. \cite{agarwal2019fair}. \cmnt{Moreover, let $d(\cdot, \cdot)$ denote a distance between distributions on $\mc V$. The \textit{demographic disparity } for model $f(X)$ with respect to $d(\cdot, \cdot)$ is defined by
\begin{align}
    d(\Prob_{f(X)|A=0},\Prob_{f(X)|A=1}).
\end{align}}
\end{definition}


\cmnt{\mg{can we move this paragraph to the appendix?} A common choice of $d$ is the \textit{Kolmogorov distance}. Then the demographic disparity is $\varepsilon$ if
$
     |\Prob({f(X)\le \tau|A=0})-\Prob({f(X)\le \tau|A=1})|\le \varepsilon
$,
for every $\tau \in \mathbb R$, or in other words if the difference between CDFs of the two conditional distributions is uniformly bounded by $\varepsilon$ \cite{rychener2022metrizing,agarwal2019fair}. 
If this holds for $\varepsilon = 0$, then $f(X)$ satisfies DP.}



\cmnt{\mg{This seems a bit redundant. As my advisor would say, let's massage it into the introduction.} As discussed in the introduction, demographic (dis)parity may be a misleading measure of (un)fairness in certain scenarios. For additional intuition, consider a college admissions setting, and suppose for simplicity that the admissions committee only has access to applicants’ sex and GPA. A model which admits the top 5\% of male students and the bottom 5\% of female students would satisfy demographic parity, but it seems obviously unfair to top female students and may result in poor academic performance for females as a group, which could then be used to justify future discrimination \cite{dwork2012FairnessAwareness}. Consider another scenario in which every female candidate has a higher GPA than all male candidates. Any model admitting equal proportions of male and female candidates again satisfies demographic parity, but may be regarded as unfair to the rejected female candidates. In this setting, an intuitively fairer thing to do is to require sex to be independent of admission only for candidates with the same level of GPA. In other words, we want a male and a female with the same GPA to have the same chance of being admitted.}

DP takes into consideration only the sensitive feature and model outputs; it is indifferent to the features $X$. As discussed in the introduction and illustrated in Table \ref{tab:synthetic_example}, this means that a model which satisfies DP may treat different groups differently within levels of $X$, which may result in intuitively unfair behavior. This motivates \textit{conditional demographic parity} \cite{kamiran2013QuantifyingExplainableDiscrimination, corbett2017algorithmic,ritov2017conditional} as an alternative.

\begin{definition}[Conditional demographic parity (CDP)]\label{def:cdp}
The model $f(X)$ satisfies conditional demographic parity with respect to a \cmnt{discrete} feature or set of 
\cmnt{discrete} features $L \subset X$ if
$
    f(X) \indep A | {L = l}
$,
for all $l \in \operatorname{supp} \mathbb P ({L | A= 0}) \cap \operatorname{supp} \mathbb P ({L| A= 1}).$
\end{definition}
We follow \citet{corbett2017algorithmic} in referring to $L$ as the \textit{legitimate feature(s)}. In the loan approval example (Table \ref{tab:synthetic_example}), $L$ would be the income level. We emphasize 
that the choice of $L$, and the choice of the sensitive feature $A$, are up to the user.

Throughout the remainder of the paper, we assume that $\operatorname{supp} \mathbb P ({L | A= 0}) = \operatorname{supp} \mathbb P ({L| A= 1})$, i.e. the legitimate features have the same support for both groups represented by the sensitive feature. Since any method targeting CDP 
requires multiple samples at each level of $L$, we further assume that $L$ is either naturally discrete or appropriately discretized. See Appendix \ref{subsec:app:continuous} for a discussion of this assumption.
 \cmnt{ Note that in Definiton \ref{def:cdp} we require $L$ to be discrete since the concept of CDP involves comparing conditional distributions at each ``level'' of $L$. This is not overly limiting since continuous features may be discretized using appropriate binning strategies informed by the fairness requirements and the problem setting\footnote{An extension of CDP to continuous features is an interesting direction of future work. We envision a ``smooth CDP'' definition where parity is required not only at each level but to some degree at different levels. The requirement will be reversely related to the distance between levels. Such definition would be applicable to both discrete and continuous feature (without any binning). The implications and appropriateness of such definition needs careful attention and is beyond the scope of this work.}.}

In this work, we investigate how to \textit{effectively promote conditional demographic parity in supervised learning problems}. 
A crucial step towards enforcing parity is to choose an appropriate disparity measure. We discuss this next.

\section{Measuring disparities} \label{section:CDD}

Quantifying violations of parity informs the development of fairness methods, and it allows the fairness of different models and methods to be compared on a continuous scale. Any definition of the \emph{conditional demographic disparity} (CDD), the violation of CDP, must take into account the conditional distributions for each possible level $l$ of the legitimate features, as well as how to aggregate these level-wise features to output a single value. 

In principle, we could utilize any measure of \emph{demographic disparity} (the violation of DP) from the literature to measure violations at each level $l$. Most previous work defines the demographic disparity by $|\E[f(X) \mid A = 1] - \E[f(X) \mid A = 0]|$ or by $\E[f(X) \mid A = 1]/\E[f(X) \mid A = 0]$, considering only the first moments of the distributions. (See Appendix \ref{subsec:app:DD}.) We instead consider distances between the full conditional distributions. We introduce two notions of the conditional demographic disparity, \textit{CDD in the Wasserstein sense} and \textit{CDD in the $\ell^p$ sense}. Here, ``$\ell^p$'' and ``Wasserstein'' refer to the aggregation method once the level-wise distances between the conditional distributions are obtained. \cmnt{Let us first introduce CDD in Wasserstein sense.} 


For any $p \in [1, \infty)$, let $\mc W_p(\cdot, \cdot; D)$ denote the $p$-Wasserstein distance with cost function $D$, so that $\mc W_p^p$ denotes $\mc W_p$ taken to the power $p$. The Wasserstein distance represents the smallest possible cost to transport all the probability mass from one distribution to another given cost function $D$. See Appendix \ref{sec:app:bicausal_ot} for more details.

\begin{definition}[CDD in the Wasserstein sense]
\label{def:CDD_wass}
Let $\mathsf d(\cdot, \cdot)$ denote a distance between distributions, let $p \in [1, \infty)$, and let $\mathcal{L}$ be the support of $L$.  The conditional demographic disparity in the Wasserstein sense (\cddwtext) for model $f(X)$ with legitimate features $L$ and sensitive feature $A$ is  
%
\begin{align*}
    &\mathsf {CDD}^{\mathsf {wass}}(f) := \Wass^p_p(\Prob ({L | A= 0}), \Prob ({L| A= 1}) ; D), 
\end{align*}
where the cost function  $D: \mathcal L\times \mathcal L \mapsto \mathbb R$ is defined as 
\begin{align*}
D(l,l') =\begin{cases}
     \mathsf d \big(\Prob(f(X) \mid L=l, A = 0),\\ \qquad~\Prob(f(X) \mid L=l', A = 1)\big), \quad l=l',\\
    \infty, \qquad\qquad\qquad\qquad \qquad\qquad\text{elsewhere} .
   \end{cases}  
\end{align*}
\end{definition}

Different distance functions $\mathsf d$ induce different notions of disparity. In particular, when $\mathsf d$ itself is the Wasserstein distance $\Wass_p^p$, the CDD in the Wasserstein sense becomes a nested Wasserstein distance. 
This nested Wasserstein distance is tightly related to the \textit{bi-causal transport distance}, recently studied in the optimal transport literature. While this method of aggregating the level-wise disparities may seem unintuitive at first, we will see in Section \ref{section:our_methods} that the bicausal distance naturally captures the conditional independence relationships that define conditional demographic parity. 

\cmnt{Let us now introduce CDD in $\ell^p$ sense and discuss its characteristics. }

\begin{definition}[CDD in the $\ell_p$ sense]
\label{def:CDD_lp}
Let $\mathsf d(\cdot, \cdot)$ denote a distance between distributions, let $p \in [1, \infty)$, and let $\mathcal{L}$ be the support of $L$. Define $D: \mathcal L \mapsto \mathbb R$ as $D(l) = $
\begin{align*}
    \mathsf d \left(\Prob(f(X) \mid L=l, A = 0), \Prob(f(X) \mid L=l, A = 1)\right).
\end{align*}
%
%
The conditional demographic disparity in the $\ell_p$ sense (\cddlptext) of the model $f(X)$ is $\mathsf {CDD}^{\ell_p}(f) := \| D \| _{\ell^p \left(\mathcal L; \mb Q(L)\right)}$ where $\mb Q(L)$ is a probability measure defined over $\mc L$.
\end{definition}
The disparity in Definition \ref{def:CDD_lp} is the weighted $\ell_p$ norm of the distances between the conditional distributions defined by levels $l$ of the legitimate features, where the associated measure $\mb Q$ determines the weight assigned to each level.

\cmnt{An obvious candidate for $\mb Q$ is $\mb P(L)$, where the weights are proportional to the size of data at each level. However, this measure may overly ``favor'' the majority class in cases where the data is unbalanced with respect to the sensitive feature. For example, suppose there were 10 times as many female as male loan applicants, and suppose that most females had high incomes and most males had low incomes. Suppose that the approval probability was the same for high income female vs. male applicants, but that low income females had a higher approval probability than low income males. Suppose that $p$ were set to $1$ so that the disparity were simply $\E[D(L)]$, the average distributional distance across levels of $L$. In this example, $\E[D(L)]$ would be relatively small, since much of the mass of $L$ would be concentrated in the high income level, but the majority of males would be subject to the disparate outcomes represented by the low income tier.

Alternatively, one could use $\mb Q = \frac{\Prob(L \mid A = 0)+\Prob(L \mid A = 1)}{2}$ to avoid favoring the majority class. This probability measure ensures that disparities which primarily affect only one level of the sensitive feature {(e.g. males in the example in the preceeding paragraph)}\mg{check} show up in the overall disparity measure. \cmnt{However, this average measure, just like $\Prob (L)$, results in an underemphasis on the levels of $L$ with smaller probability mass (density).}

A third possible choice is $\mb Q = \mb U (L)$, where $U(L)$ is the uniform distribution over values of $L$. This puts equal emphasis on every level of $L$, which may or may not be desirable. Any distribution $\mb Q$ results in a valid CDD measure; the choice depends on the user's own values and problem-specific characteristics like the distributions of $A$ and $L$.}



In the definitions above, different values of $\mathsf d$, $p$, and $\mb Q$ induce different notions of disparity. \cmnt{For example, if $\mathsf d(\cdot, \cdot)$ is the Kolmogorov distance and $p=\infty$, then the conditional demographic disparity of $f(X)$ is $\varepsilon$ 
\cmnt{if for all $ l \in \operatorname{supp} \mathbb P ({L | A= 0}) \cap \operatorname{supp} \mathbb P ({L| A= 1})$,
\begin{align*}
    |\Prob({f(X)\le \tau|L=l,A=0})-\Prob({f(X)\le \tau|L=l,A=1})| \le \varepsilon, 
\end{align*}
or in other words}
if the Kolmogorov distance between $\Prob(f(X) \mid L = l, A = 0)$ and $\Prob(f(X) \mid L = l, A = 1)$ is no more than $\varepsilon$ for every possible $l$. \am{Check this is true for both definitions.} Setting $p = \infty$ and enforcing a small corresponding disparity requires the model $f(X)$ to behave similarly for the two groups within every level of the legitimate feature $L=l$. On the other hand, setting $p$ to, say, 2, allows $f(X)$ to potentially behave quite differently for the two groups over areas of $\mathcal{L}$ with small measure. }
There is a growing literature that investigates how different quantitative notions of (un)fairness capture or fail to capture various legal and philosophical notions of fairness \cite{friedler2021ImPossibilityFairness, bothmann2024fairnessroleprotectedattributes}. In Appendix \ref{subsec:app:aggregation}, we suggest some desiderata that are relevant to choosing these values, but we leave a fuller investigation for future work. In our regularizers and in our experiments below, we set $d$ to be the Wasserstein distance for both definitions, and we fix $p = 1$ for \cddlptext and $p = 2$ for the inner and outer Wasserstein distances in \cddwtext. We investigate several versions of $\mb Q(L)$ for \cddlptext.

\cmnt{Definition \ref{def:CDD_lp} of CDD leaves the choice of $\mathsf d$, $p$ and $\mb Q$ to the practitioner. In our discussions above, we discussed some choices that yield interpretable measures of CDD which is a desirable trait. However, in many real world scenarios, there are no choices that are clearly more preferable than others from an ethical or regulatory standpoint. On the other hand, in Definition \ref{def:CDD_wass}, only $\mathsf d$ needs to be chosen by the practitioner. This definition, however, offers less interpretability compared to the choices discussed above.\mg{check}}

\section{Regularized approaches to enforce CDP}\label{section:our_methods}

In light of the discussion in Section \ref{section:CDD}, a natural approach to enforce CDP in risk minimization problems is to employ the CDD measures in Definitions \ref{def:CDD_wass} and \ref{def:CDD_lp} as regularizers. In particular, given an i.i.d. training sample $\{(X_i, Y_i, A_i)\}_{i=1}^N$, we consider the following target problem:
\begin{align}\label{problem:CDD}
    \min_{f_\theta\in\mathcal{F}}\; \frac{1}{N} \sum_{i=1}^N g(f_\theta (X_i),Y_i) + \lambda~ \mathsf {CDD}(f_\theta),
\end{align}
where $g$ is a differentiable loss function, $\mathcal{F} = \{f_\theta:\theta \in \Theta \}$ is the class of models $f_\theta(X)$ under consideration, indexed by 
$\theta$, $\lambda>0$ is a penalty parameter to encourage fairness, and $\mathsf {CDD}$ represents either \cddwtext{} or \cddlptext.

\cmnt{In this section, we introduce two regularization-based methods for enforcing CDP based on the two CDD measures introduced in Section \ref{section:CDD}.} 

\cmnt{Our regularized CDP approaches provide practitioners a tunable knob for navigating fairness-performance trade-offs \cite{ZhaoInherent2019, kim2020FACTDiagnosticGroupa, liu2021AccuracyFairnessTradeoffs}. Moreover, these methods permit imposing closeness of the distributions rather than only closeness of the first moments. }

\subsection{Enforcing CDD in the Wasserstein sense} \label{subsec:enforcing_cdd_wass}




The CDD measure 
\cddwtext{} contains a discontinuous inner cost function and is non-differentiable; additionally, computing the Wasserstein distance exactly is known to be computationally intractable \cite{arjovsky2017wasserstein, salimans2018improving}. To tackle these challenges, we leverage an interesting connection between \cddwtext{} and the \textit{bi-causal transport distance}, recently studied in the optimal transport literature \cite{backhoff2017causal}. 
This allows us to find tractable approximations to \cddwtext{} such that the resulting minimization problem can be solved using gradient-based methods.

\subsubsection{Connecting \cddwtext{} to bi-causal transport distance} 
In order to define the bi-causal transport distance, we first need to define transport plans in general and bi-causal transport plans in particular. Consider two distributions $(U, V) \sim \Prob$ and $(\tilde{U}, \tilde{V}) \sim \tilde{\Prob}$ on a common measure space $\mathcal{U} \times \mathcal{V}$. The set $\Gamma(\tilde{\Prob},\P)$ of \emph{transport plans} denotes the collection of all probability measures on the space $(\mathcal{U} \times \mathcal{V}) \times (\mathcal{U} \times \mathcal{V})$ with marginals $\tilde{\Prob}$ and $\Prob$. The set $\Gamma_{bc}( \tilde{\Prob},\P) \subset \Gamma( \tilde{\Prob}, \P)$ of \emph{bicausal transport plans} is given by
\begin{align*}
    &\Gamma _{bc} (\tilde{\Prob}, \Prob) = \{ \gamma \in \Gamma (\tilde{\Prob}, \Prob) \text{ s.t. for } ((\tilde{U}, \tilde{V}), (U, V)) \sim \gamma, \\
        &\qquad \qquad \qquad\qquad \qquad\quad\;\; U \indep \tilde{V} \mid \tilde{U} \text{ and } \tilde{U} \indep V \mid U\}.
\end{align*}
\cmnt{The {bi-causal transport distance}, a.k.a. the nested transport distance, is defined through a bi-causal transport plan \cite{backhoff2017causal}, as follows.}



\cmnt{\begin{definition}[\cmnt{Causal and }Bi-causal transport plans]\label{def:bicausal-transport-plan}
 A joint distribution $\gamma \in \Gamma (\tilde{\Prob}, \Prob)$ is called a \textit{causal transport plan} if for $((\tilde{U}, \tilde{V}), (U, V)) \sim \gamma$, $U$ and $\tilde{V}$ are conditionally independent given $\tilde{U}$, i.e.,
$ U \indep \tilde{V} \mid \tilde{U}$.
    \cmnt{We denote by $\Gamma _c (\tilde{\Prob}, \Prob)$ the set of all transport plans $\gamma \in \Gamma (\tilde{\Prob}, \Prob)$ that are causal.} Analogously, we consider the transport plans that are “causal in both directions”, or \textit{bi-causal}. The set of all such plans is given by
    \begin{align*}
        &\Gamma _{bc} (\tilde{\Prob}, \Prob) = \{ \gamma \in \Gamma (\tilde{\Prob}, \Prob) \text{ s.t. for } ((\tilde{U}, \tilde{V}), (U, V)) \sim \gamma, \\
        &\qquad \qquad \qquad\qquad \qquad\quad\;\; U \indep \tilde{V} \mid \tilde{U} \text{ and } \tilde{U} \indep V \mid U\}.
    \end{align*}
\end{definition}} 
We are now ready to define the bi-causal transport distance.

\begin{definition}[Bi-causal transport distance (BCD)] \label{def:bicausal_distance}
    The \textit{bi-causal transport distance} (BCD, referred to hereafter simply as the \textit{bi-causal distance}) between $\tilde{\Prob}$ and $\Prob$, denoted by $\Causal^p (\tilde{\Prob}, \Prob)$, is defined as
    \begin{align*}
        \inf _{\gamma \in \Gamma _{bc} (\tilde{\Prob}, \Prob)} \E _{((\tilde{U}, \tilde{V}),(U, V)) \sim \gamma} \Big[  C \norm{\tilde{U}-U}^p + \norm{\tilde{V}-V}^p
    \Big].
   \end{align*}
\end{definition}  




\cmnt{
The following proposition motivates the use of this variation as a regularizer to promote CDP. The proofs of all the propositions are provided in Supplementary Materials~\ref{proofs}.
\begin{proposition}
\label{prop:cddc}
Consider the following nested Wasserstein distance.
\begin{align}\label{eq:cddc}
    \cddc(\tilde{\Prob}, \Prob) := \Wass^p_p(\tilde{\Prob}_{\tilde{U}},\Prob_{U};\mathsf d), \textnormal{ with cost function } \mathsf{d}^p(\tilde{U},U) = \Wass_p^p(\tilde{\Prob}_{\tilde{V}|\tilde{U}}, \Prob_{V|U}) + {\color{red}C}\norm{\tilde{U}-U}_p^p. 
\end{align}
Then, we have $\cddc 
 = \cddw( \tilde{\Prob}, \Prob)$ for $C>...$ where ... .
\end{proposition}
This nested Wasserstein distance \ref{eq:cddc} is known in the optimal transport literature to be connected to the \textit{bi-causal transport distance} \cite{backhoff2017causal}. In the rest of this section, we introduce bi-causal transport plan and bi-causal transport distance, and formalize the connection between nested Wasserstein distance \ref{eq:cddc} and the bi-causal transport distance.
}
See Appendix \ref{sec:app:bicausal_ot} for more background on bi-causal transport. In our setting, $U$ will correspond to the legitimate feature $L$, and $V$ will correspond to the model output $f(X)$. The presence or absence of the tilde corresponds to the two levels of the sensitive feature $A$.  The following result connects BCD to Definition \ref{def:CDD_wass} of CDD. 
\begin{proposition}\label{prop:bcd_cdd}
     Consider a model $f:\mc X \to \mc Y$. Let $\mathcal{L}$ be the support of the legitimate feature $L$. 
     Let $\underline d= \min_{l,l'\in \mc L} \|l-l'\|_p^p$ denote the minimum distance between two levels of the legitimate features. 
     Moreover, let $\bar d =  \max_{x,x'\in \mc X} \|f(x)-f(x')\|_p^p$ denote the diameter of $\mc Y$. Then, the bi-causal distance $\Causal^p(\P ({f(X), L | A= 0}) , \P ({f(X), L | A= 1}))$ with $C>\frac{\bar d}{\underline d}$ is equivalent to $\mathsf {CDD}^{\mathsf {wass}}_f$ 
     with $\mathsf d(\cdot,\cdot)=\Wass_p^p(\cdot,\cdot)$.
\end{proposition}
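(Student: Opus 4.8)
The plan is to show that the bi-causal transport distance $\Causal^p(\P(f(X), L \mid A=0), \P(f(X), L \mid A=1))$ coincides with $\mathsf{CDD}^{\mathsf{wass}}_f$ by exhibiting a two-sided relationship between the bi-causal optimization and the nested Wasserstein structure of Definition~\ref{def:CDD_wass}. The key structural fact I would use is the well-known dynamic-programming / disintegration characterization of the bi-causal distance (see Appendix~\ref{sec:app:bicausal_ot}): for measures on a product space $\mathcal{U} \times \mathcal{V}$, the bi-causal transport distance factorizes as an outer optimal transport problem over the $\mathcal{U}$-marginals whose cost at a pair $(u,u')$ is itself an optimal transport problem between the conditional distributions on $\mathcal{V}$ given $u$ and $u'$, plus the transport cost $C\|u-u'\|_p^p$ on the $\mathcal{U}$-coordinate. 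Concretely, identifying $\mathcal{U}$ with $\mathcal{L}$, $\mathcal{V}$ with $\mathcal{Y}$, and the tilde/non-tilde copies with $A=1$ and $A=0$, this says
\begin{align*}
\Causal^p(\cdot,\cdot) = \Wass_p^p\big(\P(L\mid A=0), \P(L\mid A=1); \widetilde D\big),
\end{align*}
where $\widetilde D(l,l') = C\|l-l'\|_p^p + \Wass_p^p\big(\P(f(X)\mid L=l, A=0), \P(f(X)\mid L=l', A=1)\big)$.

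Given this, the remaining work is to compare $\widetilde D$ with the cost function $D$ appearing in Definition~\ref{def:CDD_wass}, which is the same inner Wasserstein term on the diagonal $l=l'$ and $+\infty$ off-diagonal. The two differ in how they treat off-diagonal pairs: $D$ forbids them outright, while $\widetilde D$ merely charges $C\|l-l'\|_p^p$ for moving mass off the diagonal (on top of a bounded inner term, which is at most $\bar d$ since $\mathcal{Y}$ has diameter $\bar d$). The heart of the argument is that when $C > \bar d / \underline d$, any off-diagonal transport is strictly suboptimal: since the two $L$-marginals coincide (by the standing assumption $\supp\P(L\mid A=0) = \supp\P(L\mid A=1)$), the identity coupling on $L$ is feasible and has outer cost $\sum_l \P(L=l)\,\Wass_p^p(\P(f(X)\mid L=l,A=0),\P(f(X)\mid L=l,A=1))$, which is exactly $\mathsf{CDD}^{\mathsf{wass}}_f$. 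Conversely, any coupling that moves a positive amount of mass between distinct levels $l \neq l'$ incurs at least $C\underline d > \bar d$ on that mass from the $\|l-l'\|_p^p$ term alone, whereas rerouting that same mass along the diagonal costs at most $\bar d$ on the inner term; hence the optimal outer coupling for $\widetilde D$ is diagonal, and its value equals the value under $D$. This yields $\Causal^p = \mathsf{CDD}^{\mathsf{wass}}_f$.

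I would organize the write-up as: (i) invoke the disintegration/factorization of the bi-causal distance into the outer-over-$L$, inner-over-$f(X)$ nested form; (ii) observe feasibility of the diagonal (identity) outer coupling using the equal-support assumption, giving the upper bound $\Causal^p \le \mathsf{CDD}^{\mathsf{wass}}_f$; (iii) show optimality of a diagonal coupling via the exchange/rerouting argument under $C > \bar d/\underline d$, giving the matching lower bound. The main obstacle I anticipate is step (iii): making the rerouting argument fully rigorous requires a measure-theoretic exchange argument on the outer coupling (decompose any coupling $\pi$ on $\mathcal{L}\times\mathcal{L}$ into its diagonal and off-diagonal parts, and show that pushing the off-diagonal mass onto the diagonal — which is possible precisely because the marginals agree — strictly decreases the total cost). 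Care is also needed to confirm that the inner Wasserstein term is genuinely bounded by $\bar d$ (it is, since any coupling of two distributions supported on a set of diameter $\bar d$ has cost $\le \bar d$) and that $\underline d > 0$, which holds because $\mathcal{L}$ is a finite/discrete set of distinct levels. The edge case $|\mathcal{L}| = 1$ is trivial and can be dispatched in one line.
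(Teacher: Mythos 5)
Your proposal is correct and follows essentially the same route as the paper's proof: invoke the nested (disintegration) form of the bi-causal distance from Proposition~\ref{prop:nested}, note that the inner Wasserstein term is bounded by $\bar d$, and argue that for $C>\frac{\bar d}{\underline d}$ any mass placed off the diagonal of $\mathcal L\times\mathcal L$ is strictly suboptimal, so both $\Causal^p$ and \cddwtext{} are attained by diagonal couplings with identical diagonal costs. One caveat: your justification of diagonal feasibility asserts that the two $L$-marginals coincide ``by the equal-support assumption,'' but equal support of $\Prob(L\mid A=0)$ and $\Prob(L\mid A=1)$ does not imply equal distributions; a diagonal coupling (and finiteness of \cddwtext{}) actually requires the marginals themselves to agree, an assumption the paper's own proof also leaves implicit when it posits plans with zero mass off the diagonal.
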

See Appendix \ref{sec:app:proofs} for the proof. This result implies that for $C>\frac{\bar d}{\underline d}$, the bi-causal distance is equivalent to \cddwtext{} with the inner metric $\mathsf d(\cdot,\cdot)$ equal to $\Wass_p^p(\cdot,\cdot)$. 
We describe our BCD-regularized approach next.

\cmnt{
\begin{proposition}
    If $\Prob_{U} = \Prob_{\tilde{U}}$, then when $C=\infty$, $\Causal^p(\tilde{\Prob}, \Prob)$ coincides the CDD defined in Definition \ref{def: CDD} with $d$ the Wasserstein distance; when $C<\infty$, $\Causal^p(\tilde{\Prob}, \Prob)$ is the lower bound of CDD.
\end{proposition}

\begin{proof}
    When $\Prob_{U} = \Prob_{\tilde{U}}$ and $C=\infty$, by Proposition \label{prop:nested}, 
    \begin{align*}
        \Causal^p(\tilde{\Prob}, \Prob) = \Wass^p_p(\tilde{\Prob}_{\tilde{U}},\Prob_{U};\mathsf d), \textnormal{ with cost function } \mathsf{d}^p(\tilde{U},U) = \Wass_p^p(\tilde{\Prob}_{\tilde{V}|\tilde{U}}, \Prob_{V|U})
    \end{align*}
    which corresponds to the CDD defined in Definition \ref{def: CDD}.
\end{proof}
}

\cmnt{
\begin{proposition}
\label{prop:bcd_cdd}
The bi-causal distance $\Causal^p(\tilde{\Prob}, \Prob)$ is an upper bound on the conditional disparity distance in the Wasserstein sense defined as 
$$
\E_{U}\Big[ \inf_{\gamma\in\Gamma(\tilde{\Prob}_{\tilde{V}|{U}},\Prob_{V|U})} \E_{(\tilde{V},V)\sim\gamma}\big[\norm{\tilde{V}-V}^p \mid U \big]\Big].
$$
\end{proposition}
}

\cmnt{We propose employing a regularizer based on the bi-causal transport distance between pairs of joint distributions of the output and the legitimate features, rather than conditional distributions. }

\paragraph{F\MakeLowercase{air}B\MakeLowercase{i}T: Conditional Fairness through Bi-causal Transport} \label{paragraph:fairbit}

For features $X$, legitimate features $L \subset X$, and sensitive feature $A$, we define the regularizer
\begin{align*}
    \mathcal{B}(f):= \Causal^p \left(\Prob({L, f(X) | A=0}), \Prob({L,f(X) | A=1})\right),
\end{align*}
the bi-causal transport distance between $\Prob({L, f(X) | A=0})$ and  $\Prob({L,f(X) | A=1})$, with $C>\frac{\bar d}{\underline d}$ as described in proposition \ref{prop:bcd_cdd}. 
The bi-causal distance can be viewed as a nested Wasserstein distance (Proposition \ref{prop:nested} in the Appendix), one that takes a different form from the nested Wasserstein distance expressed in Definition \ref{def:CDD_wass} but which is equivalent under the conditions given in Proposition \ref{prop:bcd_cdd}. The reformulated nested Wasserstein expression contains a smooth inner cost function; this enables us to estimate the BCD using a nested version of the Sinkhorn divergence \cite{sinkhorn1964relationship, cuturi2013sinkhorn, pichler2021nested}, which is an entropy-regularized version of the Wasserstein distance. Let us denote this estimate of the BCD by $\hat{\mc{B}}(f)$.
Our proposed approach, \textit{Fairness through Bi-causal Transport (\bcd)}, aims to solve the version of Problem \eqref{problem:CDD} with $\mathsf {CDD}(f_\theta)$ set to $\widehat{\mc{B}}(f_{\theta})$.
%

Since $\widehat{\mathcal{B}}(f_\theta)$ is differentiable in $\theta$, this problem is amenable to gradient-based solvers.
The computational complexity of the \bcd{} regularizer is $\mathcal{O}(n^2 + |L|^2) = \mathcal{O}(n^2)$, where $|L| \leq n$ is the number of levels of $L$ observed in the sample. \cmnt{Hence, \bcd{} inherits a $\mathcal{O}(n^2)$ computational complexity with respect to the sample size $n$. The total computational complexity is however $\mathcal{O}(n^2)$, as the complexity only increases additively in the number of levels of the legitimate features in the observed samples $|L|$ --- in other words, $\mathcal{O}(|L|^2 + n^2) = \mathcal{O}(n^2)$;} See Appendix~\ref{app:bicausal-dist} for further details as well as the nested Sinkhorn divergence algorithm.


\cmnt{An obvious approach to enforce conditional demographic parity is to use only legitimate features \cite{ritov2017conditional}. However, this may result in an unacceptable drop in performance compared to when all available features are used. It is frequently desirable for a practitioner to have a tunable knob for navigating fairness-performance trade-offs \cite{ZhaoInherent2019, kim2020FACTDiagnosticGroupa, liu2021AccuracyFairnessTradeoffs}, which motivates employing regularizers that promote CDP. Our approach imposes closeness of the distributions rather than only closeness of the first moments.
}

 \cmnt{By Remark \ref{remark:nested}, we can see that the bi-causal distance $\mathcal{B}(f)$ dominates the Wasserstein distance between conditional distributions $\ \Prob({f(X)|L,A=0})$ and $\Prob({f(X)|L,A=1})$ in the $\ell^p$ sense. A small bi-causal transport distance between joint distributions therefore implies similarity between conditional distributions defined by levels of the legitimate features $L$, and hence implies (approximate) conditional statistical parity.}
\cmnt{
\begin{align}
    \label{eq:fairbit_population}
    \min_{f_\theta\in\mathcal{F}}\; \E[g(f_\theta(X),Y)] + \lambda \mathcal{B}(f_\theta),
\end{align}
where $g$ is a loss function, $\mathcal{F} = \{f_\theta:\theta \in \Theta \}$ is the class of models $f_\theta(X)$ under consideration indexed by a parameter $\theta$, and $\lambda>0$ is a penalty parameter to encourage fairness.  
Since both terms in \eqref{eq:fairbit_population} depend on the unknown distribution $\mathbb P$, we solve an empirical version of this problem, defined as follows over an i.i.d. training sample $(X_i, Y_i, A_i), i \in \{1, \ldots, N\}$:
\begin{align}
     \label{eq:fairbit_empirical}
    \min_{f_\theta\in\mathcal{F}}\; \frac{1}{N} \sum_{i=1}^N g(f_\theta (X_i),Y_i) + \lambda \widehat{\mathcal{B}}(f_\theta),
\end{align}
where $\widehat{\mathcal{B}}$ is an estimate of the bi-causal transport distance. We call this approach `Fairness through Bi-causal Transport'' or \bcd{}. 
}

\subsection{Enforcing CDD in the $\ell_p$ sense} \label{subsec:enforcing_cdd_lp}

The regularization term $\mathsf{CDD}(f_{\theta})$ in this case takes the form $\mathsf {CDD}^{\ell_p}({f_{\theta}}) = \| D \| _{\ell^p \left(\mathcal L; \mb Q(L)\right)}$ as described in Definition \ref{def:CDD_lp}.
Our proposed method based on \cddlptext{} is called \textit{\fairlp: Conditional Fairness in the $\ell_p$ sense}. The parameters $p$ (the order of the $\ell_p$ norm) and $\mb Q(L)$ (the probability measure over $\mc L$) determine how the level-wise disparities are aggregated. 
We particularly highlight three aggregation strategies, which result in three variants of \fairlp{}:
\begin{enumerate}
    \item \textit{Fairleap (uniform)}: A simple average with $\mb Q = \mathbb{U}(L)$, which we use from here forward to denote the uniform distribution over $L$: This puts equal emphasis on every observed level of $L$. 
    \item \textit{Fairleap ($\P(L)$)}: A weighted average with $\mb Q = \P(L)$ and $p=1$: This prioritizes levels of $L$ with more mass.
    \item \textit{Fairleap (Ave. $\P(L|A)$)}: A weighted average with $\mb Q = \frac{\Prob(L \mid A = 0)+\Prob(L \mid A = 1)}{2}$ and $p=1$: This prioritizes levels of $L$ with more mass, within either class of $A$, and avoids favoring the majority class. 
\end{enumerate}
In practice, the unknown distribution $\Prob$ is replaced with the empirical distribution. A main advantage of these choices of $p$ and $\mb Q$ is that they yield interpretable regularizers. In terms of the inner distance function $\mathsf d$, the definition of \cddlptext{} is generic and admits any distributional distance. 
In our implementations, we choose Wasserstein distance due to the known connection between closeness of distribution in Wasserstein sense and parity of performance in downstream tasks \cite{villani2009optimal, santambrogio2015optimal, xiong2023fair}. 
Similar to \fairbit, here we estimate the Wasserstein distance by employing the Sinkhorn divergence, which is differentiable. 
The computational complexity of the \fairlp~regularizer is $\mathcal{O}(\frac{n^2}{|L|})$. See Appendix~\ref{app:bicausal-dist} for further details. 


%

\section{Discussion of Existing Methods} \label{section:comparison}
\cmnt{In this section, we discuss existing approaches for enforcing CDP. Except for DCFR \cite{xu2020algorithmic} which has been designed with the intent of enforcing CDP, 

We also discuss adaptations of two canonical DP approaches for CDP, namely Pre-processing Repair \cite{Feldman2015disparate} and Adversarial Debiasing \cite{zhang2018mitigating}.  We show that while Pre-processing Repair can  be used for enforcing an approximation of CDP, our modification of Adversarial Debiasing is a reasonable approach for promoting CDP.}

\cmnt{\mg{How about "In this section, we discuss the existing methods that appear as reasonable candidates for enforcing CDP and argue the need for our novel approach by stating the shortcomings of the existing art."} 
}

In this section, we discuss the methods that we experimentally compare to FairBiT and FairLeap in the next section. We analyze the state of the art method for CDP with rich legitimate features and illustrate why it may not in fact minimize the disparity of interest. In order to widen the scope of our comparisons, we modify an existing approach for DP to apply to CDP, and we consider under what circumstances another existing approach for DP may result in (approximate) CDP without modification. Finally, we discuss an existing method for DP that is computationally closest to ours. Table \ref{tab:comparison} compares our methods to all these methods.

\begin{table*}[!t]
\centering
\begin{tabular}{c|c|c|c|c|}
\cline{1-5}
\multicolumn{1}{|c|}{Methods}                  & \begin{tabular}[c]{@{}c@{}}Perf.-fairness\\ tradeoff param\end{tabular} & Targets CDP & \begin{tabular}[c]{@{}c@{}}No sensitive feat.\\ (training)\textsuperscript{\textdagger}\end{tabular} & \begin{tabular}[c]{@{}c@{}}No sensitive feat.\\ (inference)\end{tabular} \\ \hline
\multicolumn{1}{|c|}{DCFR} 
& \cmark                                                                            & \xmark$^*$          & \xmark                                                                     & \xmark\textsuperscript{\textdaggerdbl}                                                                    \\ \hline
\multicolumn{1}{|c|}{Pre-processing repair} 
& \cmark                                                                            & \xmark        & \xmark                                                                     & \xmark                                                                   \\ \hline
\multicolumn{1}{|c|}{Adversarial debiasing} 
& \xmark                                                                             & \cmark$^{**}$        & \cmark                                                                      & \cmark                                                        \\ \hline
\multicolumn{1}{|c|}{Wasserstein regularization} 
& \cmark                                                                            & \xmark          & \cmark                                                                      & \cmark                                                        \\  \hline
\multicolumn{1}{|c|}{\fairbit{} \& \fairlp~(ours)}          & \cmark                                                                            & \cmark         & \cmark                                                                      & \cmark                                                        \\ \hline
\end{tabular}
\caption{Comparison of \bcd{} and \fairlp{} to the methods described in Section \ref{section:comparison}. $^*$DCFR is intended to target CDP but targets a proxy quantity that does not necessarily yield small CDD values (Section \ref{sec:DCFR}). $^{**}$Adversarial debiasing here refers to our modified version in which we pass $(f(X), L)$ to the adversary instead of just $f(X)$. \textsuperscript{\textdagger}For adv. debiasing, only the adversary requires access to the sensitive feature $A$. For Wasserstein reg. and FairBiT, the gradients for the regularization term may be computed separately on each iteration and passed to the analysts who are training the model $f(X)$. This is important for example in a financial setting where teams with access to sensitive features are distinct from model developers. \textsuperscript{\textdaggerdbl}DCFR can be easily modified not to require the sensitive feature at inference time, though this will in general result in a decrease in model performance.}\label{tab:comparison}
\end{table*}

\subsection{State of the art: DCFR}\label{sec:DCFR}

The current state of the art for CDP with rich legitimate features is the approach proposed in \citet{xu2020algorithmic}. This method, named Derivable Conditional Fairness Regularizer (DCFR), aims to learn a representation $Z$ of the input features such that $Z \indep A \mid L$, from which it follows that a model $f(Z)$ will satisfy CDP. The method utilizes an adversarial learning approach with a regularizer defined as $R_{\text{DCFR}} (Z,L,A) := \sup_{h\in H_{ZF}} Q(h)$, where $H_{ZF}$ is the set of all bounded square-integrable functions with values in $[0,1]$, and
\begin{align}
     Q(h) :=  &\E[\mathbf{1}_{\{A=1\}}\Prob(A=0|L)h(Z,L)]-\nonumber \\
     &\qquad\quad\E[\mathbf{1}_{\{A=0\}}\Prob(A=1|L)h(Z,L)]. \label{eq:dcfr}
\end{align}
$Q(h)$ is motivated by a characterization of conditional independence given by \citet{daudin1980partial}. However, Proposition~\ref{prop:dcfr_alternative_form} shows that $R_{\text{DCFR}}$ aims to minimize a quantity that does not uniformly bound the CDD (in the sense of either Definition \ref{def:CDD_wass} or \ref{def:CDD_lp}). This is contrast with our proposed methods, which aim to directly minimize the CDD. 

\cmnt{
However, Proposition~\ref{prop:dcfr_alternative_form} shows that $R_{\text{DCFR}}$ does not minimize CDD \mg{we need to mention what notion of CDD we are talking about here.} directly, but rather targets a related proxy quantity. \mg{It is not even a valid proxy bcz small values doesnt mean small cdd} This is in contrast with our proposed regularizers, which directly target CDD. 
}

\begin{proposition}\label{prop:dcfr_alternative_form}
The regularizer proposed by \cite{xu2020algorithmic} can be equivalently expressed as
\begin{equation}
R_{\text{DCFR}}= \E[ ( \Prob(A=1|Z,L) - \Prob(A=1|L) )_+ ].
\end{equation}
\end{proposition}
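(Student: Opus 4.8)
The plan is to reduce $Q(h)$ to a single expectation of the form $\E[h(Z,L)\cdot(\text{something simple})]$ by the tower property, and then maximize over $h$ pointwise.

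First I would condition on $(Z,L)$ inside each of the two expectations in \eqref{eq:dcfr}. Since $h(Z,L)$ and $\Prob(A=0\mid L)$ are measurable with respect to $\sigma(Z,L)$, the tower property gives $\E[\mathbf{1}_{\{A=1\}}\Prob(A=0\mid L)h(Z,L)] = \E[\Prob(A=1\mid Z,L)\,\Prob(A=0\mid L)\,h(Z,L)]$, and symmetrically $\E[\mathbf{1}_{\{A=0\}}\Prob(A=1\mid L)h(Z,L)] = \E[\Prob(A=0\mid Z,L)\,\Prob(A=1\mid L)\,h(Z,L)]$. Subtracting,
\[
Q(h) = \E\big[h(Z,L)\big(\Prob(A=1\mid Z,L)\Prob(A=0\mid L) - \Prob(A=0\mid Z,L)\Prob(A=1\mid L)\big)\big].
\]

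Next I would simplify the bracketed coefficient. Writing $a := \Prob(A=1\mid Z,L)$ and $b := \Prob(A=1\mid L)$ and using $\Prob(A=0\mid \cdot) = 1-\Prob(A=1\mid\cdot)$ (recall $A$ is binary), the coefficient is $a(1-b)-(1-a)b = a-b$. Hence $Q(h) = \E[h(Z,L)\,(\Prob(A=1\mid Z,L) - \Prob(A=1\mid L))]$. Finally, taking the supremum over $h \in H_{ZF}$: the integrand is linear in the value $h(Z,L)\in[0,1]$ and the expectation splits over the law of $(Z,L)$, so the pointwise-optimal choice is $h^\star(z,l) = \mathbf{1}\{\Prob(A=1\mid Z=z,L=l) > \Prob(A=1\mid L=l)\}$, which gives $\sup_{h} Q(h) = Q(h^\star) = \E[(\Prob(A=1\mid Z,L) - \Prob(A=1\mid L))_+]$, as claimed.

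There is no real obstacle here; the one point to verify carefully is that $h^\star$ is an admissible test function — bounded, $[0,1]$-valued, and square-integrable — so that the supremum is attained rather than merely approached. This is immediate since $h^\star$ is the indicator of a measurable set. The remaining work is just bookkeeping with the tower property and the binary structure of $A$.
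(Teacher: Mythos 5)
Your proof is correct and follows essentially the same route as the paper's: condition on $(Z,L)$ via the tower property, recognize that the supremum over $[0,1]$-valued $h$ is attained pointwise by an indicator, and conclude with the positive part. The only cosmetic difference is that you simplify the coefficient $\Prob(A{=}1\mid Z,L)\Prob(A{=}0\mid L)-\Prob(A{=}0\mid Z,L)\Prob(A{=}1\mid L)$ to $\Prob(A{=}1\mid Z,L)-\Prob(A{=}1\mid L)$ before taking the supremum, whereas the paper carries the product form through the maximization and simplifies at the end.
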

See Appendix \ref{sec:app:proofs} for the proof of Proposition \ref{prop:dcfr_alternative_form}. Note that the value of this regularizer depends on which level of the sensitive feature is labeled as 1. Furthermore, while CDP holds if $R_{\text{DCFR}} = 0$, small values of $R_{\text{DCFR}}$ do not necessarily imply small values of CDD. Figure \ref{fig:res:Xucomp_simple_example_body} illustrates this vis-a-vis FairBiT via a simple synthetic loan example in which we vary both the proportions of males vs. females applying for loans and the respective acceptance rates. (In this simple example, $L = \emptyset$, so \cddwtext = \cddlptext, and the y-axis is therefore simply labeled ``$\mathsf {CDD}$.'') Each line in Figure~\ref{fig:res:Xucomp_simple_example_body} is obtained by changing acceptance rates for a fixed proportion of male vs. female applicants; the more unbalanced this proportion, the steeper the CDD-$R_{\text{DCFR}}$ curve gets, while \bcd~ enjoys a consistent relationship with CDD regardless of this proportion.
The same is true for \fairlp; details and further analysis are given in Appendix~\ref{sec:illustrative-example-dcfr}.

\begin{figure}[!ht]
    \centering
    \begin{minipage}{.235\textwidth}
        \centering
        \includegraphics[width=\linewidth]{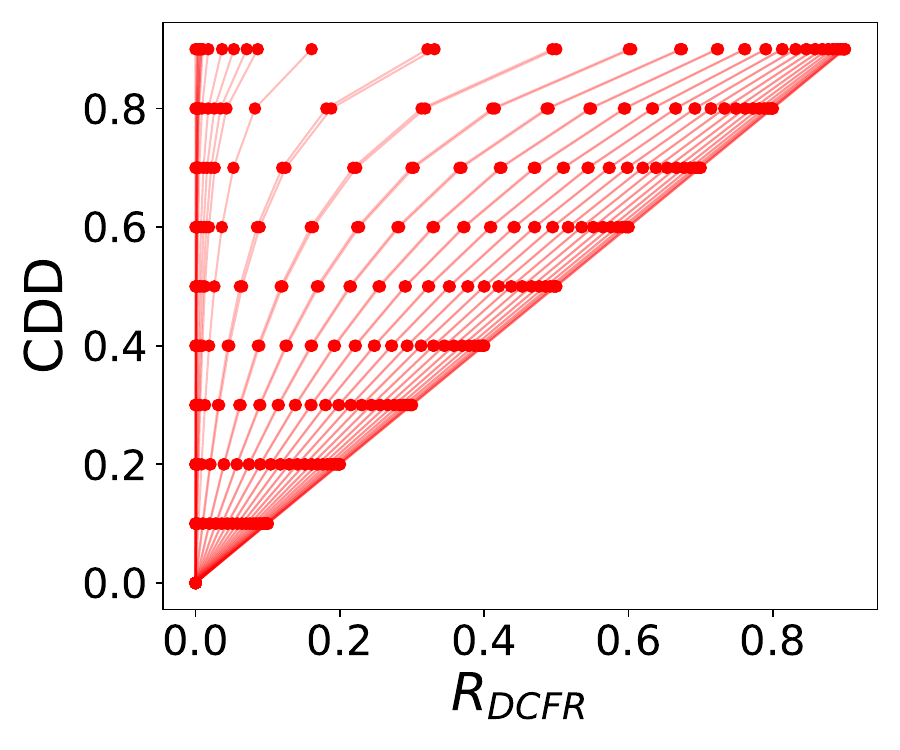}
    \end{minipage}%
    \begin{minipage}{0.235\textwidth}
        \centering
        \includegraphics[width=\linewidth]{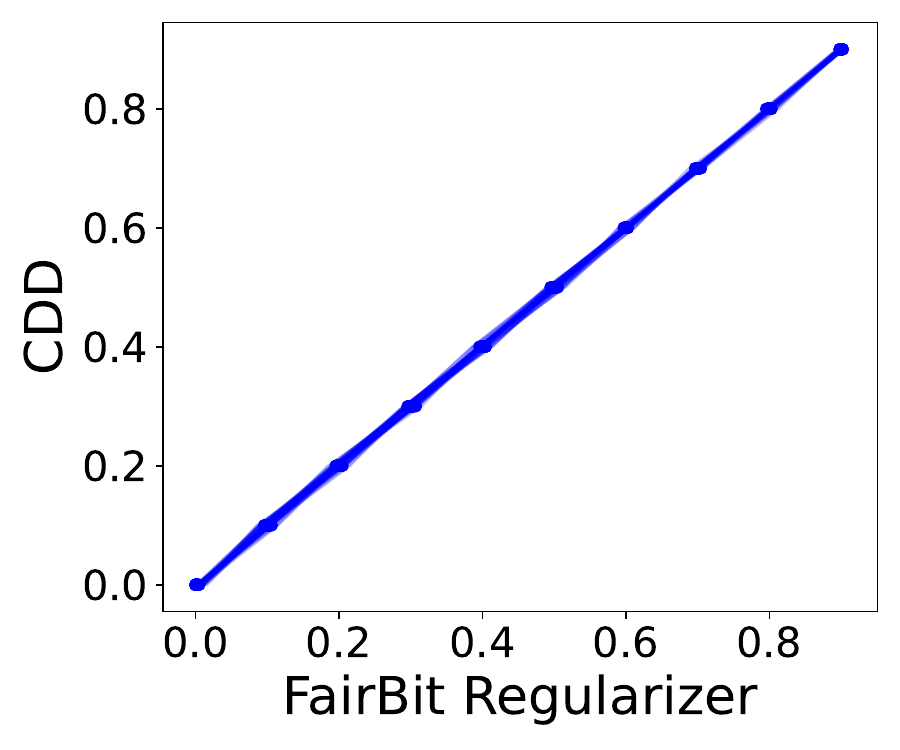}
    \end{minipage}
    \caption{Conditional demographic disparity ($\mathsf {CDD}$) versus $R_{\text{DCFR}}$ (left) and the value of FairBiT regularizer (right) in a synthetic loan setting, varying the proportion of males vs. females and the loan acceptance rates. The proportion of males vs. females controls the slope of the CDD-$R_{\text{DCFR}}$ curve, with higher ratios yielding steeper curves. A 45 degree line only occurs if the ratio of males to females is 1:1.}
    \label{fig:res:Xucomp_simple_example_body}
\end{figure}

\cmnt{Proposition \ref{prop:dcfr_cdd_relationship} provides intuition.

\begin{proposition} \label{prop:dcfr_cdd_relationship}
Assuming each of the conditional probabilities and densities are well-defined and the denominators are non-zero in the following expression, we have
\begin{equation*}
\frac{\lvert \P(Z \mid A = 1, L = \ell) - \P(Z \mid A = 0, L = \ell) \rvert}{\lvert \P(A = 1 \mid Z, L = \ell) - \P(A = 1 \mid L = \ell) \rvert} = \frac{\P(Z \mid L = \ell)}{\P(A=0 \mid L = \ell)\P(A=1 | L = \ell)}
\end{equation*}
\end{proposition}

\mg{I think we need to make it clear why we care about this ratio.} The numerator on the left represents the difference in the densities of $Z$ at $L = \ell$ for the two groups, while the denominator represents the quantity inside the expectation in \eqref{eq:dcfr}. Their ratio can be arbitrarily large, since $\P(A = 0 \mid L = \ell)\P(A = 1 \mid L = \ell)$ can be arbitrarily small. The ratio between CDD and $R_{\text{DCFR}}$ is more complicated, since it involves replacing $Z$ in the numerator with possible predictors $f(Z)$ and taking expectations in both the numerator and the denominator, but nevertheless this illustrates that small values of $R_{\text{DCFR}}$ do not necessarily imply small values of CDD\mg{would be great if we could say something stronger}. [But does taking an expectation in the numerator even yield a valid distance?]\mg{highlighting this} See Supplementary Materials~\ref{sec:illustrative-example-dcfr} for an illustrative example of the implications of Proposition~\ref{prop:dcfr_cdd_relationship}.\mg{mention impossible hyperparameter tuning due to different ratio at different levels}}

\cmnt{\begin{align}
     Q(h) :=  \left(\E[\mathbf{1}_{\{A=1\}}\Prob(A=0|L)h(f(X),L)]- \E[\mathbf{1}_{\{A=0\}}\Prob(A=1|L)h(f(X),L)]\right)
\end{align}
and $h_{ZF}$ is the set of all bounded $L^2$ functions with values in $[0,1]$.\footnote{Note that \cite{xu2020algorithmic} define $Q(h)$ with $Z$ in place of $f(X)$. Since in our work we regularize the model output $f(X)$ instead of an intermediate feature embedding $Z$, we state our propositions using $f(X)$. Note that Proposition~\ref{prop:dcfr_alternative_form} holds also for $Z$, while Proposition~\ref{prop:dcfr_cdd_relationship} needs to be adapted to the definition of conditional disparity.\mg{please check}} As shown in \cite{daudin1980partial} and \cite{xu2020algorithmic}, $Q(h)=0$ implies the conditional independence relation $f(X) \indep A | L$. However, Proposition~\ref{prop:dcfr_alternative_form} shows that $R_{\text{DCFR}}$ does not minimize CDD directly, but rather targets a related proxy quantity. This is in contrast with our proposed \bcd, which relates with CDD (see Proposition~\ref{prop:nested}).

\begin{proposition}\label{prop:dcfr_alternative_form}
The regularizer proposed by \cite{xu2020algorithmic} can be alternatively expressed as:

\begin{equation}
R_{\text{DCFR}} = \sup_{h\in H_{ZF}} Q(h) = \E[ ( \Prob(A=1|f(X),L) - \Prob(A=1|L) )_+ ]
\end{equation}
\end{proposition}

To characterize the connection between $R_{\text{DCFR}}$ and CDD, Proposition~\ref{prop:dcfr_cdd_relationship} shows that when $A$ and $f(X)$ are binary, the relationship between $R_{\text{DCFR}}$ and CDD is dependent on the proportions of $A=1$ vs. $A=0$ at each level of the legitimate features $L$. The consequence of this is that the relationship between $R_{\text{DCFR}}$ and CDP can be virtually unbounded; in other words, the value of $R_{\text{DCFR}}$ is uninformative for determining CDD, unless $R_{\text{DCFR}} = 0$. 

\begin{proposition} \label{prop:dcfr_cdd_relationship}

Let the sensitive feature $A$ and the model output $f(X)$ be binary, and let $\mathcal{X}_+ = \left\{x \in \mathcal{X}: \mathbb{P}(f(X)|A=1) - P(f(X)) > 0 \right\}$. Then, within each level $l$ of the legitimate features $L$, we have that and for $X \in \mathcal{X}_+$:

\begin{equation*}
\frac{\text{DD}(f(X)|L = l)}{R_{\text{DCFR}}(f(X)|L = l)} = \frac{\P(f(X)|L = l)}{\P(A=0|L = l)\P(A=1|L = l)}.
\end{equation*}
    
\end{proposition}

See Supplementary Materials~\ref{proofs} for the proposition proofs, as well as Supplementary Materials~\ref{sec:illustrative-example-dcfr} for an illustrative example of the implications of Proposition~\ref{prop:dcfr_cdd_relationship}.}

\subsection{Modifying Adversarial Debiasing for CDP}\label{sec:modified-DP-approaches}



Adversarial debiasing \cite{zhang2018mitigating} employs an adversarial training framework to target DP (as well as equality of odds and equality of opportunity). In the case of DP, the model $f(X)$ is trained to predict the target $Y$ from the input features $X$, while the adversary is trained to predict the sensitive feature $A$ from the outcome of the model $f(X)$. We modify their method to provide the adversary not only with $f(X)$ but also the legitimate features $L$. This is motivated by the following remark.
\begin{remark}\label{remark:weak_union} It follows from the weak union property of conditional independence that if $(L, f(X)) \indep A$, then $f(X) \indep A\mid L$, i.e. 
CDP is satisfied. 
\end{remark}
If the adversary is unable to predict $A$ from $(f(X), L)$, then, by Remark \ref{remark:weak_union}, $f(X)$ will satisfy CDP.  If $L$ is correlated with $A$, then this will not happen in general. \cmnt{However, this is not necessarily an issue since the condition $(L, f(X)) \indep A$ is stronger than CDP.} When $L$ is correlated with $A$, the learning procedure may encourage $f(X) \indep A \mid L$ (i.e. CDP) in order to minimize the amount of information the adversary is able to obtain from $(f(X), L)$. Hence, this approach may effectively target CDP. We note however that this method does not include a parameter to control the fairness-performance trade-off\cmnt{and the choice of the adversary architecture is an additional hyper-parameter that impacts the success of this approach \sout{Once again, this may happen, but it is not guaranteed to happen in general. Suppose as an extreme example that $L$ is perfectly correlated with $A$. Then the adversary will always be able to predict $A$ regardless of $f(X)$, and CDP will not hold.}
In cases where $L$ is correlated with $A$, adversarial learning will encourage $f(\tilde{X}) \indep A \mid L$ as well as $L \indep A \mid f(\tilde{X}) $ and $f(\tilde{X}) \indep A$. 
This approach however does not include a parameter to control the fairness-performance trade-off}, and the choice of the adversary architecture is an additional hyper-parameter that impacts the success of the approach.

\cmnt{Recall that $L \subset X$, and $A \not\subset X$. If the features $X$ are independent of the sensitive feature $A$, then the condition of the remark is trivially satisfied, so CDP is satisfied. Of course, this is not the case in general, but $X$ may be transformed to be orthogonal to $A$. Since the condition $(L, f(X)) \indep A$ is stronger than CDP, it is harder to satisfy in general, but nevertheless this observation leads to two methods which may effectively target CDP in some circumstances.}

\subsection{Employing DP methods for CDP without modification}\label{sec:dp-methods-no-mods}

For a more extensive empirical comparison, we consider how two existing methods designed for demographic parity may be used/modified to approximately target conditional demographic parity. 

\paragraph{Pre-processing repair} This method proposed by Feldman et al. \cite{Feldman2015disparate} is a pre-processing method which utilizes a transformation or ``repair'' $X \mapsto \tilde{X}$ such that $\tilde{X}$ is approximately independent of $A$. \cmnt{, at least with respect to the empirical measure in the training data.} Since $L \subset X$, it follows from Remark \ref{remark:weak_union} that $f(\tilde{X})$ is approximately independent of $A$ conditional on $\tilde{L}$. Note that $f(\tilde{X}) \indep A \mid \tilde{L} \centernot\implies f(\tilde{X}) \indep A \mid L$, but if the transformation $L \mapsto \tilde{L}$ happens to be minimal, then this method may result in approximate CDP. This method includes a \textit{repair level} parameter in $[0, 1]$, where 0 indicates no transformation, 1 indicates full orthogonalization, and values in between represent degrees of repair.


\paragraph{Wasserstein regularization} 
We also investigate a method which penalizes the loss function with a Wasserstein distance penalty $\mathcal{W}_p^p(\Prob(f(X) \mid A = 0), \Prob(f(X) \mid A = 1))$ computed via the Sinkhorn algorithm \cite{cuturi2013sinkhorn}, which we refer to simply as \textit{Wasserstein regularization}. This method, which targets DP, is the closest approach in the literature when it comes to regularization-based methods using optimal transport to enforce fairness \cite{rychener2022metrizing}, which is our motivation for including it in our comparisons. However, as illustrated in Table \ref{tab:synthetic_example} in the Introduction, DP does not imply CDP, and small values of this disparity do not imply small values of CDD.
\cmnt{However, for any pair of probability measures $\tilde{\Prob}, \Prob$ on measure space $\mc U \times \mc V$, while $\mathcal{W}_p^p(\tilde{\Prob},\Prob; \ell_p)$ always dominates the Wasserstein distances $\mathcal{W}_p^p(\tilde{\Prob}_{\tilde U},\Prob_{U}; \ell_p)$ and $\mathcal{W}_p^p(\tilde{\Prob}_{\tilde V},\Prob_{V}; \ell_p)$ between the two pairs of marginal distributions, a small $\mathcal{W}_p^p(\tilde{\Prob},\Prob; \ell_p)$ does not guarantee closeness of the conditional distributions $\tilde{\Prob}_{\tilde{V}|\tilde{U}}$ and $ \Prob_{V|U}$.
Since CDP precisely requires equality of the conditional distributions of the outcome variable at every level of the legitimate feature, this approach does not directly target CDP.}

\cmnt{

\begin{table*}[!t]
\centering
\begin{tabular}{c|c|c|c|c|}
\cline{1-5}
\multicolumn{1}{|c|}{Methods}                  & \begin{tabular}[c]{@{}c@{}}Perf.-fairness\\ tradeoff param\end{tabular} & Targets CDP & \begin{tabular}[c]{@{}c@{}}No sensitive feat.\\ (training)\textsuperscript{\textdagger}\end{tabular} & \begin{tabular}[c]{@{}c@{}}No sensitive feat.\\ (inference)\end{tabular} \\ \hline
\multicolumn{1}{|c|}{DCFR} 
& \cmark                                                                            & \xmark$^*$          & \xmark                                                                     & \xmark\textsuperscript{\textdaggerdbl}                                                                    \\ \hline
\multicolumn{1}{|c|}{Pre-processing repair} 
& \cmark                                                                            & \xmark        & \xmark                                                                     & \xmark \mg{(double check)}                                                                   \\ \hline
\multicolumn{1}{|c|}{Adversarial debiasing} 
& \xmark                                                                             & \cmark$^{**}$        & \cmark                                                                      & \cmark                                                        \\ \hline
\multicolumn{1}{|c|}{Wasserstein regularization} 
& \cmark                                                                            & \xmark          & \cmark                                                                      & \cmark                                                        \\  \hline
\multicolumn{1}{|c|}{FairBiT \& \fairlp~(ours)}          & \cmark                                                                            & \cmark         & \cmark                                                                      & \cmark                                                        \\ \hline
\end{tabular}
\vspace{.2cm}
\caption{Comparison of \bcd{} and \fairlp{} to the methods described in Section \ref{section:comparison}. $^*$DCFR is intended to target CDP but targets a proxy quantity that does not necessarily yield small CDD values (Section \ref{sec:DCFR}). $^{**}$Adversarial debiasing here refers to our modified version in which we pass $(f(X), L)$ to the adversary instead of just $f(X)$. \textsuperscript{\textdagger}For adv. debiasing, only the adversary requires access to the sensitive feature $A$. For Wasserstein reg. and FairBiT, the gradients for the regularization term may be computed separately on each iteration and passed to the analysts who are training the model $f(X)$. This is important e.g. in a financial setting where teams with access to sensitive features are distinct from model developers. \textsuperscript{\textdaggerdbl}DCFR can be easily modified not to require the sensitive feature at inference time, though this will in general result in a decrease in model performance.}\label{tab:comparison}
\end{table*}

}

\section{Experiments} \label{section:experiments}

\begin{figure*}[!t]
    \centering
    \begin{subfigure}{.225\textwidth}
        \centering
        \includegraphics[width=\linewidth]{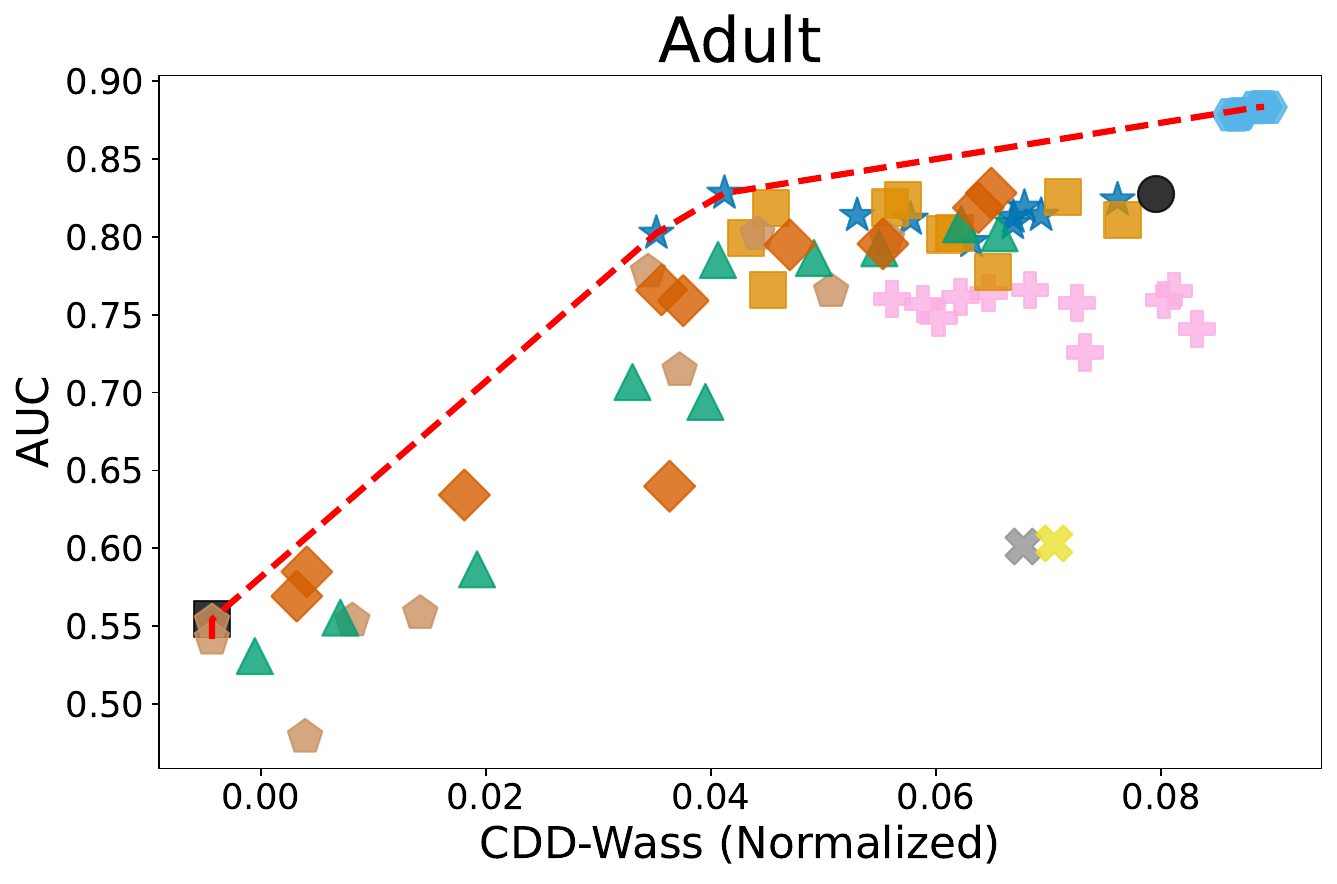}
    \end{subfigure}%
    \begin{subfigure}{0.225\textwidth}
        \centering
        \includegraphics[width=\linewidth]{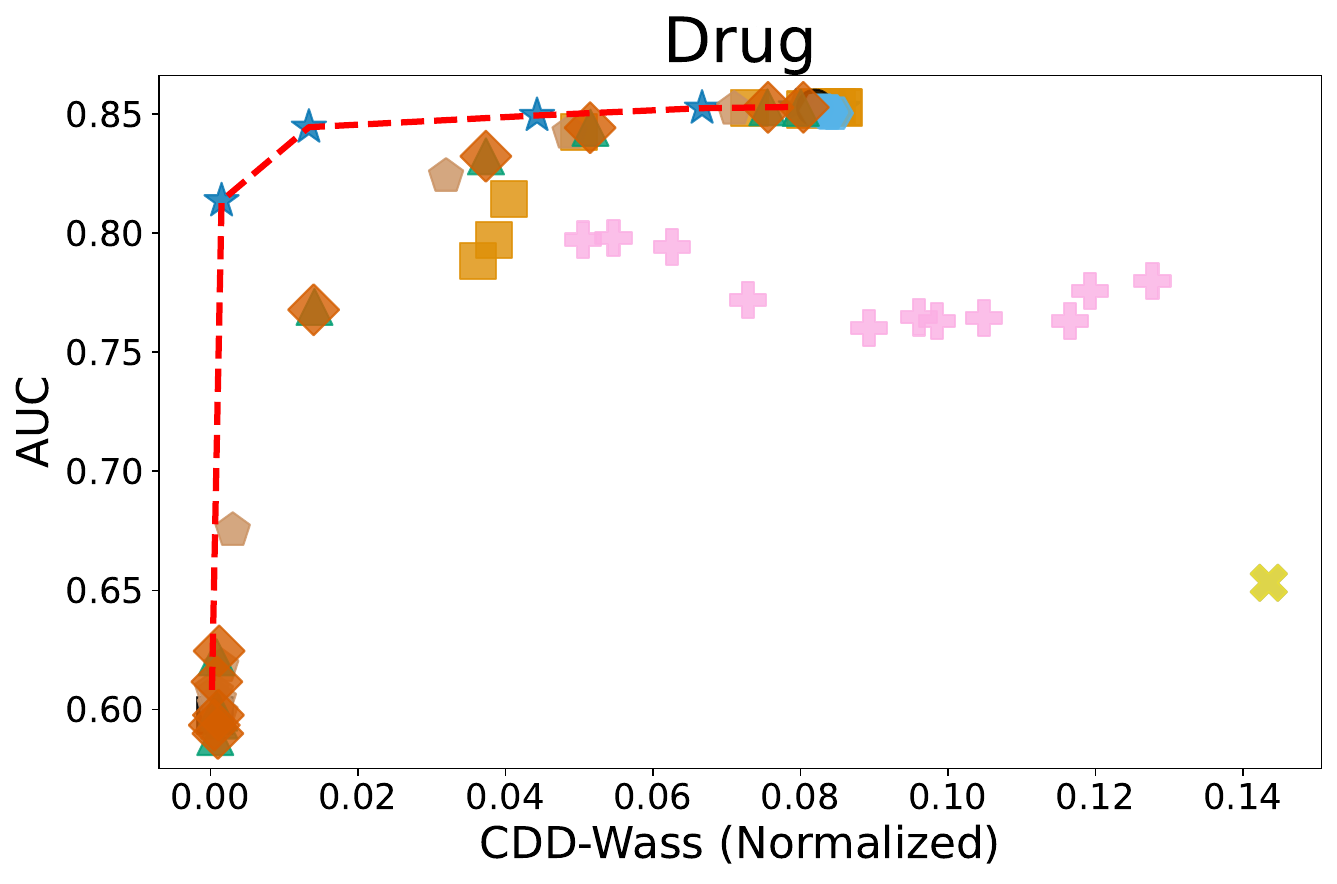}
    \end{subfigure}
    \begin{subfigure}{.23\textwidth}
        \centering
        \includegraphics[width=\linewidth]{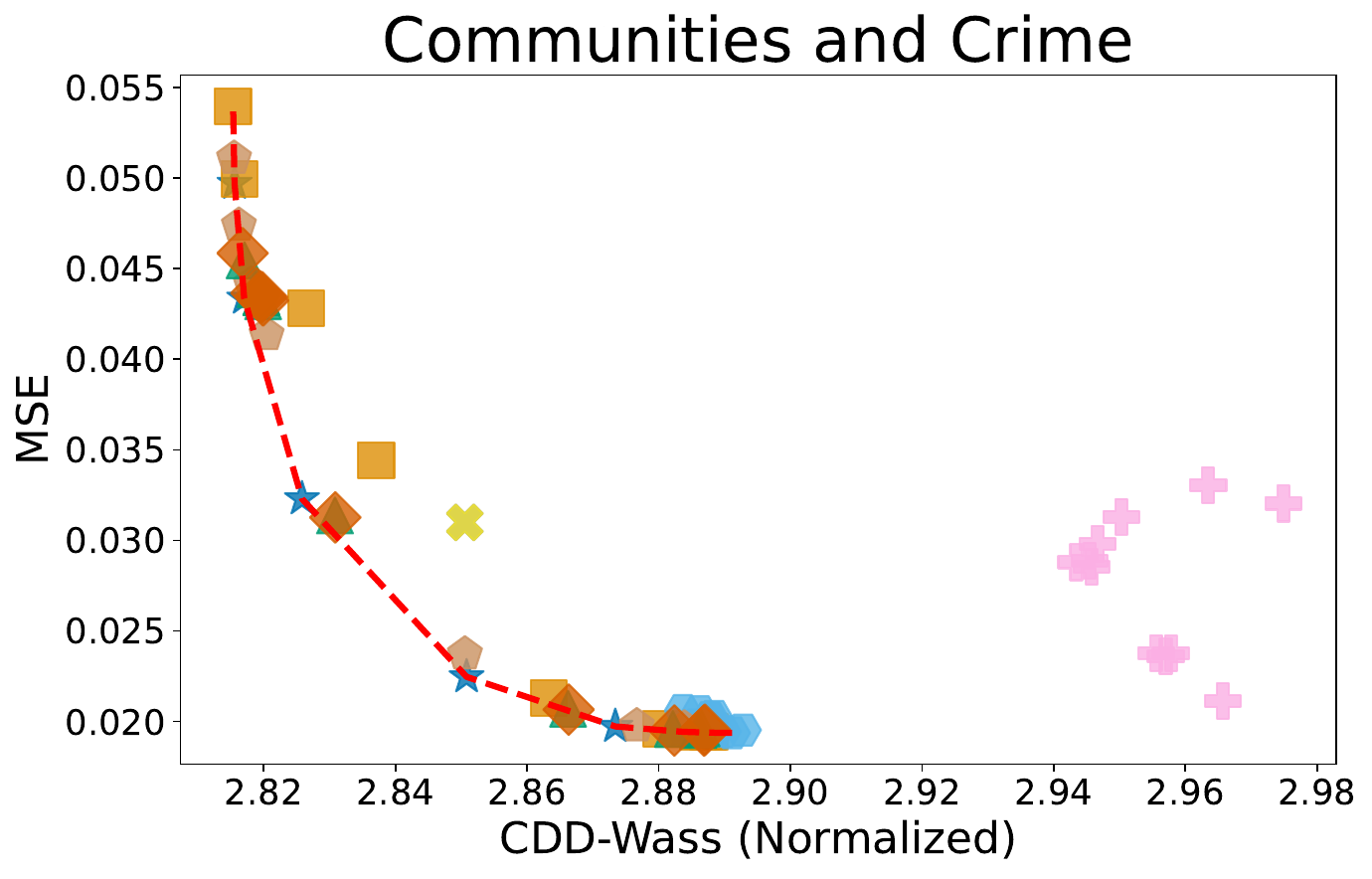}
    \end{subfigure}%
    \begin{subfigure}{0.315\textwidth}
        \centering
        \includegraphics[width=\linewidth]{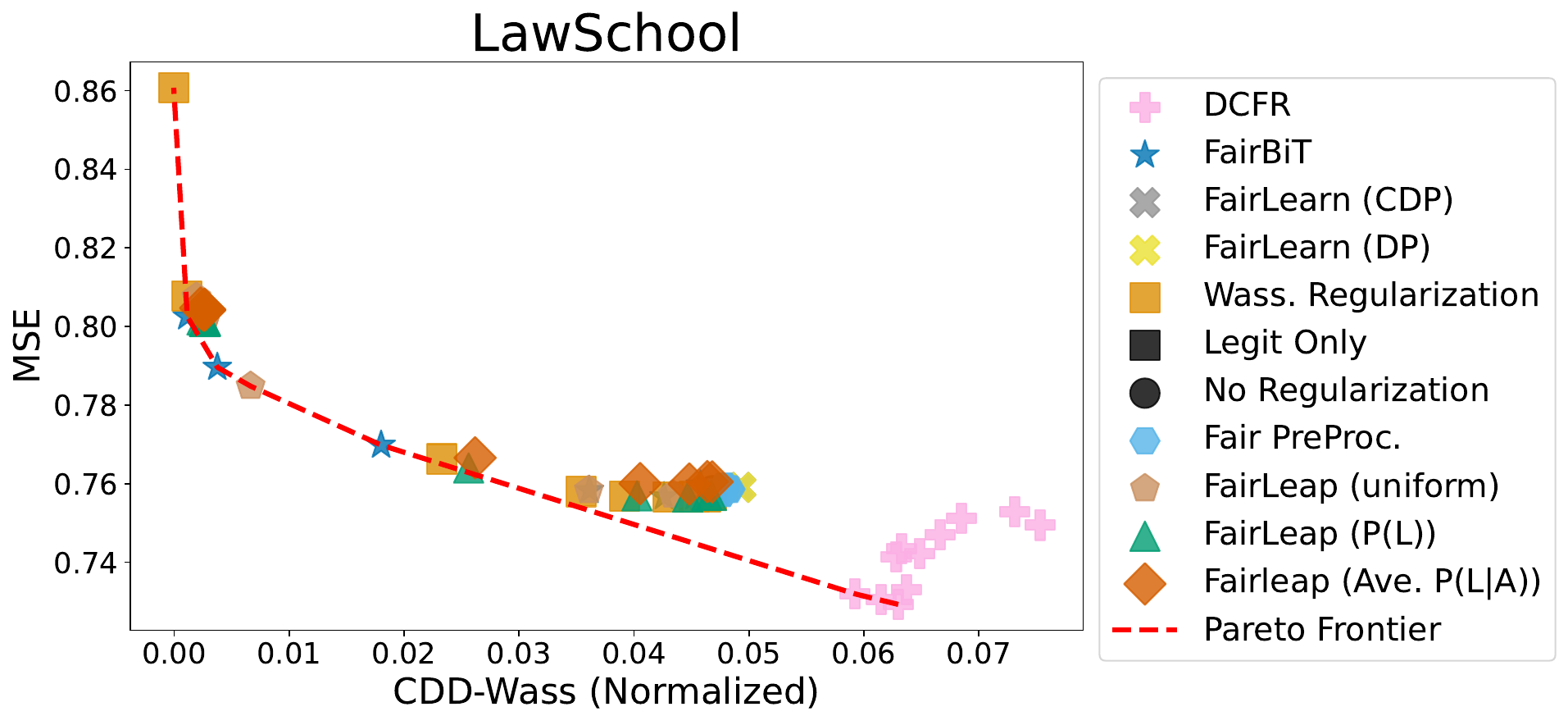}
    \end{subfigure}

    \begin{subfigure}{.225\textwidth}
        \centering
        \includegraphics[width=\linewidth]{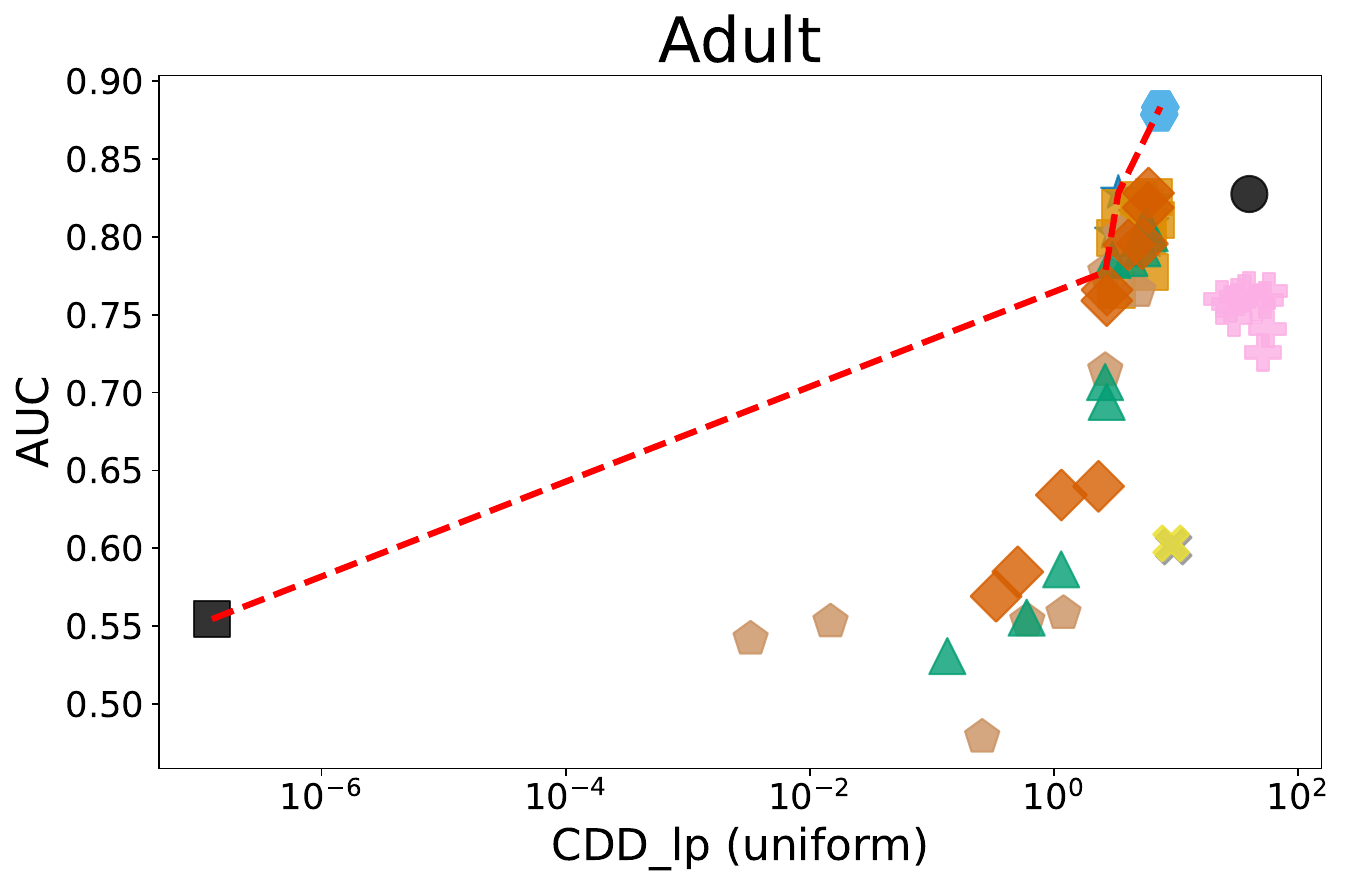}
    \end{subfigure}%
    \begin{subfigure}{0.225\textwidth}
        \centering
        \includegraphics[width=\linewidth]{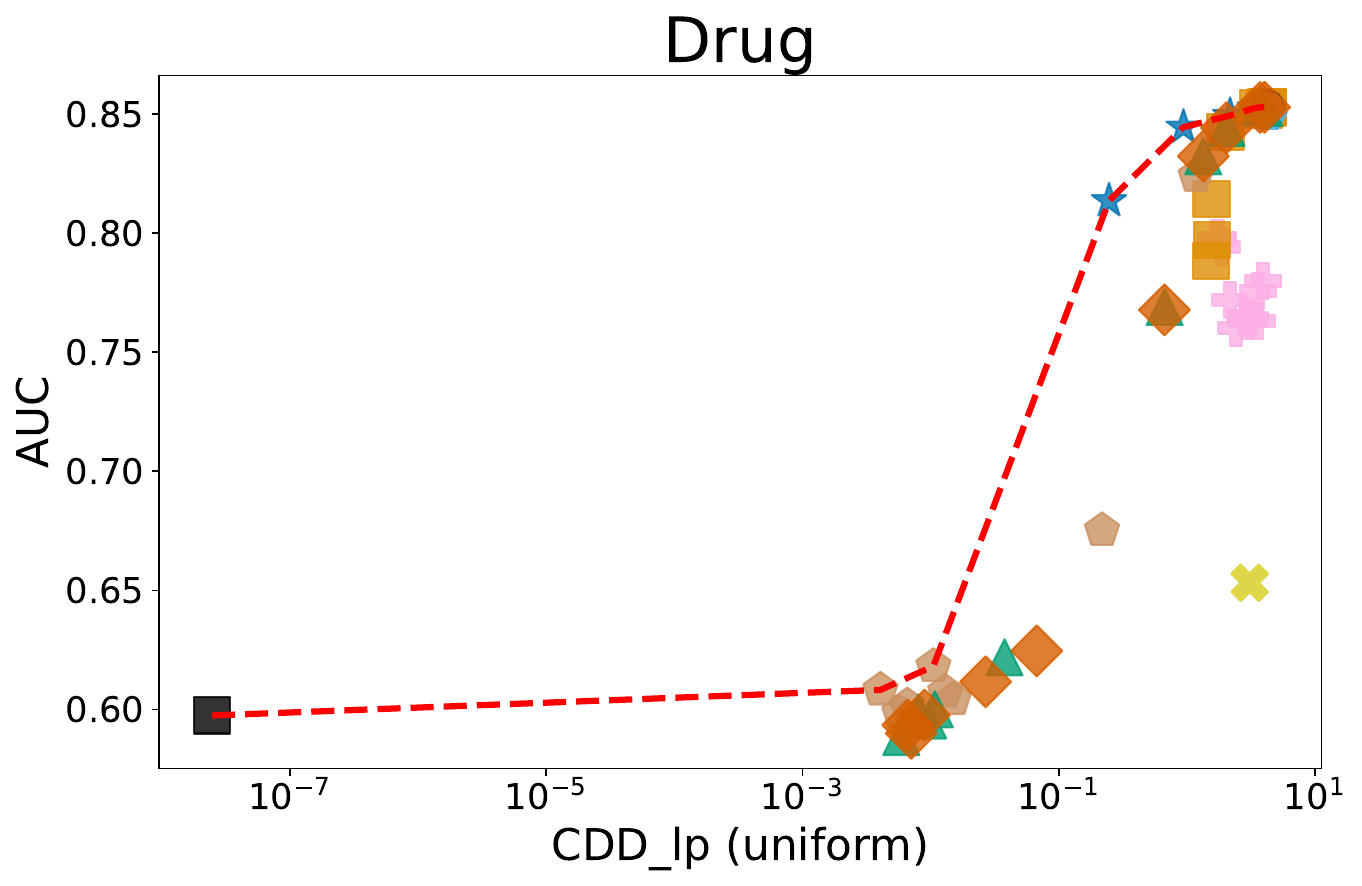}
    \end{subfigure}
    \begin{subfigure}{.23\textwidth}
        \centering
        \includegraphics[width=\linewidth]{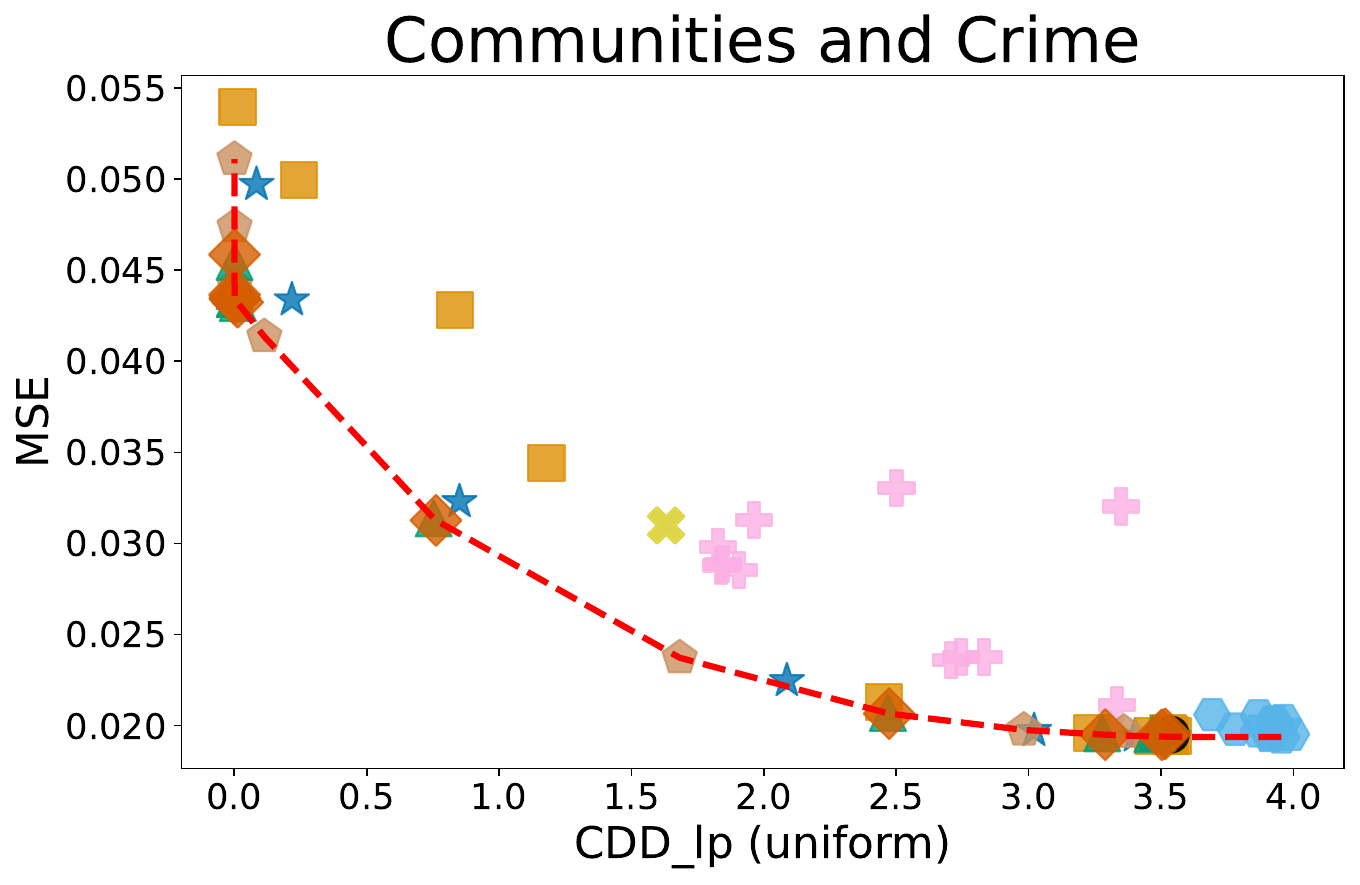}
    \end{subfigure}%
    \begin{subfigure}{0.315\textwidth}
        \centering
        \includegraphics[width=\linewidth]{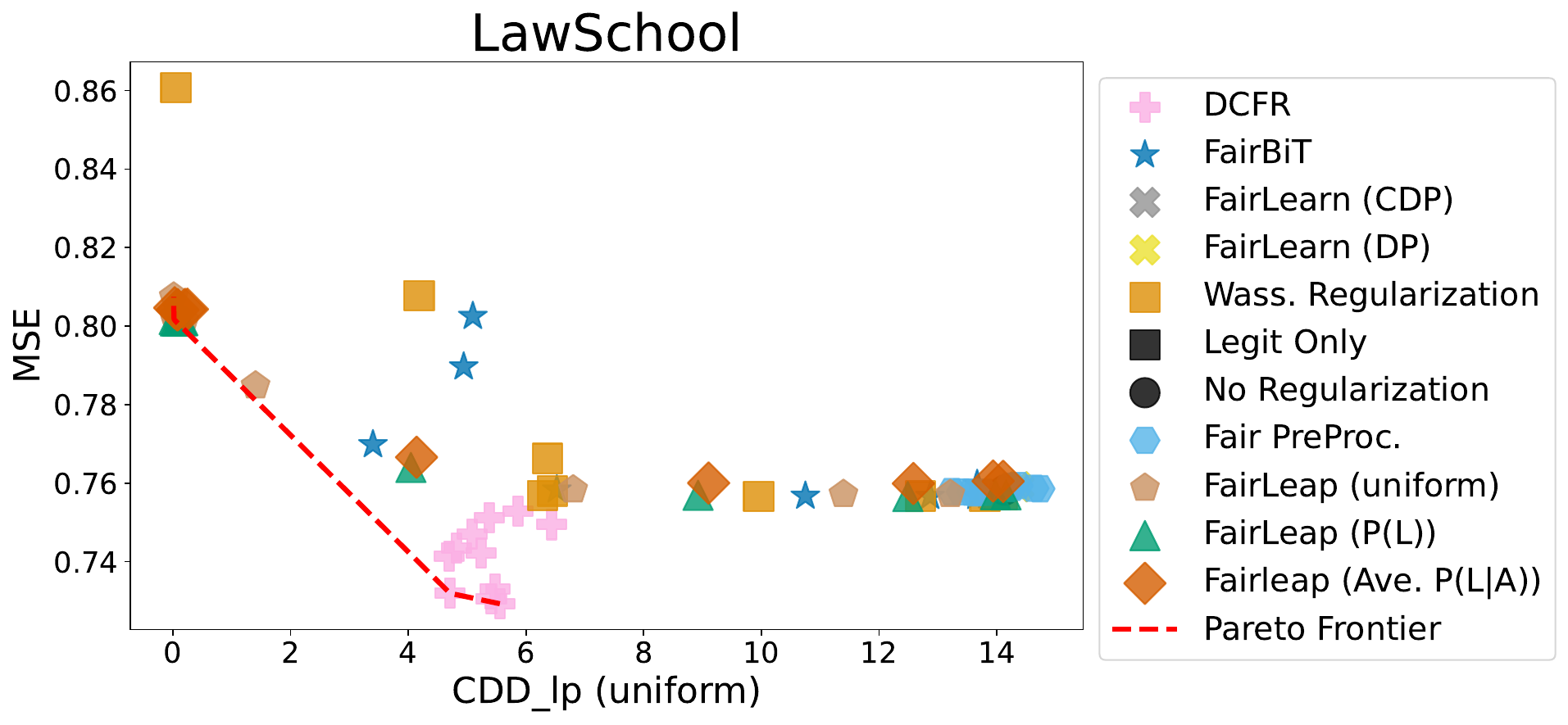}
    \end{subfigure}
    \caption{
    Fairness-predictive power trade-offs and Pareto frontiers for the four real datasets. \textit{Top row:} Trade-offs when measuring fairness using a \cddwtext{} (we use a `normalized'' version for presentation purposes. See Appendix \ref{sec:app:exp_details} for details). \textit{Bottom row:} Trade-offs when measuring fairness using \cddlptext{} (with $\mb Q(L) = \mb U(L)$ and $p=1$). Predictive power (PP) is measured by AUC for classification tasks (first and second columns from left) and MSE for regression tasks (third and fourth columns from left). Results are averaged over 10 runs, with multiple points per model due indicating different hyper-parameter values. Overall, \bcd\ and the variants of \fairlp{} are consistently part of the Pareto frontier, hence providing better fairness-PP trade-offs than many of the other proposed methods. See text and Appendix~\ref{sec:app:exp_details} for more details.}
    \label{fig:empirical-comparison-results}
\end{figure*}





In this section, we compare the effectiveness of the methods described in Sections~\ref{section:our_methods} and \ref{section:comparison} on four real datasets commonly used in the fairness literature \cite{fabris2022algorithmic}.  

\subsection{Setup} \label{sec:exp_setup}

We include two classification tasks on the \texttt{Drug} \cite{misc_drug_consumption_(quantified)_373} and \texttt{Adult} \cite{misc_adult_2}~datasets; and two regression tasks on the \texttt{Law School}  \cite{ramsey1998lsac} and \texttt{Communities and Crime} \cite{misc_communities_and_crime_183}~ datasets. 
In all experiments, we use a multi-layer perceptron (MLP) with two hidden layers containing 50 and 20 nodes and a rectified linear unit (ReLu) activation function. The loss function is set to be cross-entropy for classification (after a softmax activation) and mean squared error (MSE) for regression. We measure the \emph{predictive power (PP)} of each method by the area under the ROC curve (AUC) for the classification tasks and MSE for the regression task.
For \cddlptext{}, we report results for $\mb Q(L) =\mb U(L)$, the uniform distribution over $L$. (We include the results for the variants with $\mb Q(L) =\P(L)$ and $\mb Q(L) =\frac{\P(L|A=0)+\P(L|A=0)}{2}$  in Appendix~\ref{subsec:app:additional_results}). 
We report the mean over $10$ runs for every method-hyperparameter combination; we omit the hyperparameter labels in the figures for clarity. See Appendix \ref{sec:implementation-details-app} for more details on hyperparameter values and implementation details.

We compare the following methods, along with two baselines for reference:

\begin{itemize}
    \item \fairbit{} (Section~\ref{subsec:enforcing_cdd_wass}),
    \item 
    \textit{\fairlp{} (uniform)}, \textit{\fairlp{} ($\P(L)$)}, and \textit{\fairlp{} (Ave. $\P(L|A)$)} (Section~\ref{subsec:enforcing_cdd_lp}),
    \item \textit{DCFR} (Section~\ref{sec:DCFR}),
    \item \textit{Adversarial debiasing} (Section~\ref{sec:modified-DP-approaches}),
    \item \textit{Pre-processing repair} (Section~\ref{sec:dp-methods-no-mods}),
    \item \textit{Wasserstein regularization} (Section~\ref{sec:dp-methods-no-mods}),
    \item \textit{No regularization}, i.e., model training without enforcing any fairness constraint,
    \item \textit{Legitimate only}, i.e., model training using only the legitimate feature $L$.
\end{itemize}

As DCFR and pre-processing repair require access to the sensitive feature, we make the sensitive feature available during training to all methods for the purpose of comparing downstream performance. However, we note that our proposed methods \fairbit{} and \textit{\fairlp{}}, as well as the adversarial debiasing method, do not necessarily require direct access to the sensitive feature during training (see Table 2).
We refer the reader to Appendix~\ref{sec:app:exp_details} for more comprehensive results, including means, standard deviations, details on hyper-parameter values, and evaluations of demographic disparity.

\subsection{Results} \label{sec:exp_results}


Figure~\ref{fig:empirical-comparison-results} reports the fairness-predictive power (referred to hereafter as fairness-PP) trade-offs for the four real datasets. We report conditional demographic disparity in the Wasserstein sense with $p = 2$ (Figure~\ref{fig:empirical-comparison-results}, top) as well as in the $\ell_p$ sense with $\mb Q(L) = \mb U(L)$ and $p=1$ (Figure~\ref{fig:empirical-comparison-results}, bottom).
Each point represents a method-hyperparameter combination; we refer the reader to Appendix~\ref{sec:app:exp_details} for more details. 

Overall, \bcd\ and the three variants of \fairlp{} are consistently among the highest performing methods, generally providing better fairness-PP trade-offs than the other methods, as indicated by the Pareto frontiers. When evaluated by \cddwtext{} (Figure~\ref{fig:empirical-comparison-results}, top row), \fairbit{} has points on the Pareto frontier for every experiment and generally outperforms all other methods. Similarly, when \cddlptext{} with $\mb Q(L) = \mb U(L)$ and $p=1$ is the measure (Figure~\ref{fig:empirical-comparison-results}, bottom row), the method corresponding to this CDD measure, namely \textit{\fairlp{} (uniform)}, has points on the Pareto frontier for all 4 datasets. 
Overall, we observe that whether CDD is measured by \cddwtext{} or \cddlptext{}, all our proposed methods are also among the best performing.
Moreover, these results show that our regularized methods provide a wide range of points in fairness-PP space, allowing practitioners to choose their desired tradeoff point. 

We summarize the performance of the other methods in our experiments as follows.
\begin{itemize}
    \item Both versions of \textit{Adversarial Debiasing} show poor predictive performance and conditional fairness on the \texttt{Adult}, \texttt{Drug}, and \texttt{LawSchool} datasets. On the \texttt{Communities and Crime} dataset, they slightly improve CDD levels but still underperform \fairbit{} and \fairlp{} at similar levels of MSE.
    \item \textit{Preprocessing repair} achieves the best predictive performance levels (on par with the model with no regularization), but fails to enforce conditional fairness.
    \item \textit{DCFR} performs well on \texttt{LawSchool} when CDD is measured by \cddlptext{}, but is not part of the Pareto frontier in the other datasets.
    \item \textit{Wasserstein regularization} performs well in terms of enforcing conditional fairness in regression tasks, but this comes at the cost of higher MSE than \fairbit{} and \fairlp{} variants for the same CDD values. In classification tasks, it provides reasonable fairness-PP trade-offs, often close to or on the Pareto frontier, albeit providing worse trade-offs than \fairbit{} and variants of \fairlp{}.
    \item \textit{Legitimate only}, as expected, achieves full CDP but suffers significantly in terms of predictive performance. For presentation purposes, we have omitted \textit{legit-only} from Figure \ref{fig:empirical-comparison-results}, and present it in Figure \ref{fig:empirical-comparison-results_including_legit} in Appendix~\ref{sec:app:exp_details}. 
\end{itemize}


\cmnt{For additional results and further details on the datasets and implementation, see Supplementary Materials  \ref{sec:app:exp_details}.}

\section{Conclusion}\label{section:conclusion}

In this work, we propose novel measures to quantify the violation of conditional demographic parity (CDP), or conditional demographic disparity (CDD), in the Wasserstein sense and in the $\ell_p$ sense, based on distributional distances borrowed from the optimal transport literature. We design regularization-based approaches to enforce CDP based on these two measures, \fairbit{} and \fairlp{}. Our methods provide tunable knobs for navigating fairness-performance trade-offs, and they can be applied even when the conditioning variable has many levels. When model outputs are continuous, our methods targets CDP not just in the approximate sense of closeness of the first moments but in terms of full equality of the conditional distributions. We empirically compare our methods to the state of the art for CDP (\textit{DCFR}) as well as methods designed to target (unconditional) demographic parity, including a method that we modify to approximately target CDP. Our evaluations over four real datasets show that our methods generally provide better fairness-performance trade-offs than existing methods.

Potential directions of future work include (i) extending the adoption of bi-causal distance to target other notions of conditional fairness such as equalized odds, (ii) combining methods proposed in this work (e.g. \textit{pre-processing repair} and \bcd) to further improve downstream fairness-performance trade-offs, and (iii) exploring in-training regularization approaches for fair synthetic data generation in healthcare and financial applications \cite{giuffre2023harnessing, potluru2023synthetic}.




\cmnt{
In light of this, we propose novel measures of {conditional demographic disparity (CDD)} which rely on statistical distances borrowed from the optimal transport literature. We further design and evaluate regularization-based approaches based on these CDD measures. Our methods, \fairbit{} and \fairlp{}, allow us to target conditional demographic parity regardless of the cardinality or dimensionality of the conditioning variables. When model outputs are continuous, our methods targets full equality of the conditional distributions, unlike other methods that only consider first moments or related proxy quantities. We validate the efficacy of our approaches on real-world datasets.
}

\section*{Disclaimer}
This paper was prepared for informational purposes by the Artificial Intelligence Research group of JPMorgan Chase \& Co. and its affiliates ("JP Morgan'') and is not a product of the Research Department of JP Morgan. JP Morgan makes no representation and warranty whatsoever and disclaims all liability, for the completeness, accuracy or reliability of the information contained herein. This document is not intended as investment research or investment advice, or a recommendation, offer or solicitation for the purchase or sale of any security, financial instrument, financial product or service, or to be used in any way for evaluating the merits of participating in any transaction, and shall not constitute a solicitation under any jurisdiction or to any person, if such solicitation under such jurisdiction or to such person would be unlawful.




\bibliography{refs}


\newpage
\clearpage
\newpage

\begin{LARGE}
\begin{center}
\textbf{Appendix}
\end{center}
\end{LARGE}

\appendix

\cmnt{
\todo{
\begin{itemize}
    \item update tables with new results for preproc and legit
\end{itemize}
}
}

\section{Details on bi-causal transport distance}\label{sec:app:bicausal_ot}

The bi-causal transport distance, aka the nested transport distance, is defined through a constrained optimal transport problem. 
The optimal transport problem was originally formulated as the problem of transporting one distribution to another while incurring minimal cost \cite{monge1781memoire, kantorovitch1958translocation}. In our setup, for probability measures $\tilde{\Prob}, \Prob$ on measure space $\mc U \times \mc V$, the set $\Gamma (\tilde{\Prob},\Prob)$ of \textit{transport plans} denotes the collection of all probability measures on the space $(\mc U \times \mc V)\times (\mc U \times \mc V)$ with marginals $\tilde{\Prob}$ and $\Prob$.  Then for a cost function $\mathsf d: ((\mc U \times \mc V)\times (\mc U \times \mc V)) \to [0,\infty]$, the optimal transport problem is that of finding the transport plan $\gamma^*$ that attains the infimum
\begin{align*}
    \inf_{\gamma \in \Gamma (\tilde{\Prob},\Prob)}\int_{(\mc U \times \mc V)\times (\mc U \times \mc V)} \mathsf d((\tilde{u},\tilde{v}),(u,v)) d \gamma ((\tilde{u},\tilde{v}),(u,v)).
\end{align*}

A common distance defined through optimal transport is the \textit{Wasserstein distance}. \cmnt{Here, we require the cost function $\mathsf d$ to be a metric. } For $\tilde{\Prob}, \Prob$ with finite $p$-th moment, the Wasserstein distance of order $p$ with metric $\mathsf d$ is defined by $\mathcal{W}_p^p(\tilde{\Prob},\Prob; \mathsf d) := 
    \inf_{\gamma\in\Gamma(\tilde{\Prob},\Prob)} \E_{(\tilde{U},\tilde{V}),(U,V)\sim\gamma}\big[\mathsf d((\tilde{u}, \tilde{v}), (u, v))^p\big]$.
(Note that $\mathcal{W}_p$ refers to the Wasserstein distance of order $p$, while $\mathcal{W}_p^p$ refers to the Wasserstein distance taken to the power $p$, for simplicity of expression.)

With the common choice of the $\ell_p$ norm for $\mathsf d$, $\mathcal{W}_p^p$ becomes
\begin{align*}
    \inf_{\gamma\in\Gamma(\tilde{\Prob},\Prob)} \E_{(\tilde{U},\tilde{V}),(U,V)\sim\gamma}\big[ \norm{\tilde{U}-U}_p^p+ \norm{\tilde{V}-V}_p^p\big].
\end{align*}

The bi-causal transport distance can be defined in the framework above with a constrained set of transport plans $\Gamma _{bc}$ \cite{backhoff2017causal}. The bi-causal transport distance is defined in Definition \ref{def:bicausal_distance} in the main body of the paper; for clarity of exposition, we reiterate the definition here with some additional detail.

\begin{definition}[Causal and bi-causal transport plans]
A joint distribution $\gamma \in \Gamma (\tilde{\Prob}, \Prob)$ is called a \textit{causal transport plan} if for $((\tilde{U}, \tilde{V}), (U, V)) \sim \gamma$, $U$ and $\tilde{V}$ are conditionally independent given $\tilde{U}$:
\begin{align*}
 U \indep \tilde{V} \mid \tilde{U}.
\end{align*}
    \cmnt{We denote by $\Gamma _c (\tilde{\Prob}, \Prob)$ the set of all transport plans $\gamma \in \Gamma (\tilde{\Prob}, \Prob)$ that are causal.} Analogously, bi-causal transport plans are transport plans that are “causal in both directions.” The set of all such plans is given by
    \begin{align*}
        &\Gamma _{bc} (\tilde{\Prob}, \Prob) = \{ \gamma \in \Gamma (\tilde{\Prob}, \Prob) \text{ s.t. for } ((\tilde{U}, \tilde{V}), (U, V)) \sim \gamma, \\
        &\qquad \qquad \qquad\qquad \qquad\quad\;\; U \indep \tilde{V} \mid \tilde{U} \text{ and } \tilde{U} \indep V \mid U\}.
    \end{align*}
\end{definition}

The origin of causal transport can be traced back to the Yamada-Watanabe criterion for stochastic differential equations \cite{yamada1971uniqueness,jean1980weak,kurtz2014weak}. In the theory of optimal transport, \citet{lassalle2013causal} studied the transport problem in continuous time under the so-called causality constraint, and \citet{backhoff2017causal} considers a discrete-time analogue.
Causal transport has been applied to stochastic optimization \cite{acciaio2020causal}, as well as other areas such as stochastic control \cite{acciaio2019extended} and machine learning \cite{xu2020cot}. 
The bicausal transport distance is a symmetrized analog of the casual transport distance, which has been exploited to study the stability and sensitivity of multistage stochastic programming \cite{pflug2010version,pflug2012distance,pflug2014multistage,pflug2015dynamic,pflug2016empirical}. 
Most previous works consider a multi-period definition of bi-causal transport distance, whose main motivation is to investigate optimal transportation problems with filtrations and their applications
to stochastic calculus. In our problem, our definition can be viewed as a bi-causal transport plan \cite{backhoff2017causal} with two periods, which is also considered in \cite{yang2022decision}, to capture the conditional independence relations that we care about in the context of conditional demographic parity.
%
\begin{definition}[Bi-causal transport distance (BCD)] 
    The \textit{bi-causal transport distance} (BCD, referred to hereafter simply as the \textit{bi-causal distance}) between $\tilde{\Prob}$ and $\Prob$, denoted by $\Causal^p (\tilde{\Prob}, \Prob)$, is defined as
    \begin{align*}
        \inf _{\gamma \in \Gamma _{bc} (\tilde{\Prob}, \Prob)} \E _{((\tilde{U}, \tilde{V}),(U, V)) \sim \gamma} \left[  C \norm{\tilde{U}-U}_p^p + \norm{\tilde{V}-V}_p^p
    \right].
   \end{align*}
\end{definition}  
The following proposition shows that the bi-causal distance can be viewed as a nested Wasserstein distance \cite{backhoff2020adapted}. 
\begin{proposition}
\label{prop:nested}
The bi-causal distance can equivalently be written as $\Causal^p(\tilde{\Prob}, \Prob) = \Wass^p_p(\tilde{\Prob}_{\tilde{U}},\Prob_{U}; D)$ where
\begin{align*}
    D^p(\tilde{U},U) = \Wass_p^p(\tilde{\Prob}_{\tilde{V}|\tilde{U}}, \Prob_{V|U}) + C \norm{\tilde{U}-U}_p^p. 
\end{align*}
\end{proposition}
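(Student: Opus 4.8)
The plan is to prove the identity by establishing the inequalities ``$\geq$'' and ``$\leq$'' separately, in each direction exploiting the disintegration of couplings together with the two conditional-independence constraints that define $\Gamma_{bc}$. Throughout, $\tilde{\Prob}_{\tilde{V}|\tilde{U}}$ and $\Prob_{V|U}$ denote the regular conditional laws associated with the marginals $\tilde{\Prob}$ and $\Prob$.

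For ``$\geq$'', fix an arbitrary $\gamma \in \Gamma_{bc}(\tilde{\Prob}, \Prob)$, write $((\tilde U, \tilde V),(U,V)) \sim \gamma$, and let $\pi$ be the law of $(\tilde U, U)$; since $\gamma$ has marginals $\tilde{\Prob}$ and $\Prob$, $\pi \in \Gamma(\tilde{\Prob}_{\tilde{U}}, \Prob_{U})$. The crucial observation is that the bicausality constraints pin down the conditional marginals of $(\tilde V, V)$ given $(\tilde U, U)$: since $U \indep \tilde V \mid \tilde U$, the conditional law of $\tilde V$ given $(\tilde U, U)$ depends only on $\tilde U$ and hence equals $\tilde{\Prob}_{\tilde{V}|\tilde{U}}$; symmetrically, $\tilde U \indep V \mid U$ gives that the conditional law of $V$ given $(\tilde U, U)$ equals $\Prob_{V|U}$. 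Thus the conditional law of $(\tilde V, V)$ given $(\tilde U, U) = (\tilde u, u)$ is some coupling of $\tilde{\Prob}_{\tilde{V}|\tilde{U}=\tilde u}$ and $\Prob_{V|U=u}$, so bounding the inner conditional expectation below by the Wasserstein distance of these conditionals yields
\begin{align*}
\E_\gamma\!\left[ C\norm{\tilde U - U}_p^p + \norm{\tilde V - V}_p^p \right] &\geq \E_{\pi}\!\left[ C\norm{\tilde U - U}_p^p + \Wass_p^p\big(\tilde{\Prob}_{\tilde{V}|\tilde{U}}, \Prob_{V|U}\big) \right] \\
&\geq \Wass_p^p\big(\tilde{\Prob}_{\tilde{U}}, \Prob_{U}; D\big),
\end{align*}
and taking the infimum over $\gamma$ gives ``$\geq$''.

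For ``$\leq$'', I run the construction in reverse. Given any $\pi \in \Gamma(\tilde{\Prob}_{\tilde{U}}, \Prob_{U})$, choose for each $(\tilde u, u)$ an optimal coupling $\gamma^*_{\tilde u, u}$ of $\tilde{\Prob}_{\tilde{V}|\tilde{U}=\tilde u}$ and $\Prob_{V|U=u}$ for the cost $\norm{\cdot}_p^p$, depending measurably on $(\tilde u, u)$, and glue: $\gamma(d\tilde u\, d\tilde v\, du\, dv) := \pi(d\tilde u\, du)\,\gamma^*_{\tilde u, u}(d\tilde v\, dv)$. A direct check shows $\gamma$ has marginals $\tilde{\Prob}$ and $\Prob$; moreover, by construction the conditional law of $\tilde V$ given $(\tilde U, U)$ is $\tilde{\Prob}_{\tilde{V}|\tilde{U}}$, which depends only on $\tilde U$, so $U \indep \tilde V \mid \tilde U$, and symmetrically $\tilde U \indep V \mid U$; hence $\gamma \in \Gamma_{bc}$. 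Its cost equals $\E_{\pi}[C\norm{\tilde U - U}_p^p + \Wass_p^p(\tilde{\Prob}_{\tilde{V}|\tilde{U}}, \Prob_{V|U})]$, and minimizing over $\pi$ yields ``$\leq$''. Combining the two inequalities proves the claim; compare \cite{backhoff2020adapted}.

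The step I expect to be the main obstacle is the measurable-selection and disintegration bookkeeping: one must verify that $(\tilde u, u) \mapsto \tilde{\Prob}_{\tilde{V}|\tilde{U}=\tilde u}$ and $(\tilde u, u) \mapsto \Prob_{V|U=u}$ admit jointly measurable versions and that an optimal coupling $\gamma^*_{\tilde u, u}$ can be selected measurably in $(\tilde u, u)$, so that $D$ is a measurable cost function and the glued kernel is a genuine transport plan. This is where the (implicit) Polish/standard-Borel assumptions on $\mathcal U$ and $\mathcal V$ are used, via regular conditional probabilities and the measurable dependence of optimal transport plans on their marginals; the remaining verifications --- the two conditional-independence relations and the prescribed marginals of the glued measure --- are routine manipulations of conditional laws.
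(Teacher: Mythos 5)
Your proposal is correct and follows essentially the same route as the paper: the paper's proof simply asserts that the infimum over bicausal plans equals the nested infimum (outer coupling of the $U$-marginals, inner coupling of the conditional laws $\tilde{\Prob}_{\tilde V|\tilde U}$ and $\Prob_{V|U}$), which is exactly the decomposition you establish. Your write-up is in fact more complete than the paper's one-line argument, since you verify both inequalities explicitly (disintegration plus gluing with measurable selection) rather than asserting the interchange.
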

%

As mentioned in Section \ref{subsec:enforcing_cdd_wass}, $U$ in our setting corresponds to the legitimate feature $L$, while $V$ corresponds to the model output $f(X)$. The presence or absence of the tilde corresponds to the two levels of the sensitive feature $A$, so that $\Wass_p^p(\tilde{\Prob}_{\tilde{V}|\tilde{U}}, \Prob_{V|U})$ represents the Wasserstein distance between $\Prob(f(X) \mid L=l, A = 0)$ and $\Prob(f(X) \mid L=l, A = 1)$. 

\section{Additional discussion on disparity definitions}\label{sec:app:disparity}

\subsection{Demographic disparity}\label{subsec:app:DD}
In Appendix \ref{subsec:app:additional_results} below, we provide additional results from our experiments illustrating that methods which target demographic parity are not necessarily successful at targeting conditional demographic parity. In this section, therefore, we define the \emph{demographic disparity}, the violation of demographic parity.

Most previous work defines the demographic disparity (the violation of demographic parity) by $|\E[f(X) \mid A = 1] - \E[f(X) \mid A = 0]|$ or by $\E[f(X) \mid A = 1]/\E[f(X) \mid A = 0]$ and aims to make this quantity as close to 0 (for the difference) or 1 (for the ratio) as possible \cite{pessach2023ReviewFairnessMachine}. When $f(X)$ is binary, equalizing the first moments in this way is equivalent to enforcing that $f(X) \indep A$, but if $f(X)$ is continuous, then $f(X)$ may behave very differently for the two groups even if the means of the two conditional distributions are equal. As with the conditional demographic disparity (Definitions \ref{def:CDD_wass} and \ref{def:CDD_lp}, we define the demographic disparity via a distributional distances, as we are interested in minimizing the discrepancy between the entire distributions, not just the first moments. 

\begin{definition}[Demographic disparity (DD)]\label{def:sp}
Let $d(\cdot, \cdot)$ denote a distance between distributions. The demographic disparity for model $f(X)$ with respect to $\mathsf d(\cdot, \cdot)$ is defined by
\begin{align}
    \mathsf d(\Prob_{f(X)|A=0},\Prob_{f(X)|A=1}).
\end{align}
In other words, we define the demographic disparity as the distance between the conditional distributions of $f(X)$ for the two groups represented by the sensitive feature.
\end{definition}

The choice of distance function $\mathsf d$ may depend on the problem setting. For example, minimizing the Kolmogorov distance in some sense requires fairness across the entire range of values of $f(X)$. In some settings, this may be ethically appropriate, while in others it may be unnecessary and might incur an unacceptable trade-off in performance. The Wasserstein distance has previously been used to target demographic parity, in part because the closeness of two distributions in the Wasserstein sense is related to closeness in the downstream performance of models trained on those distributions \cite{villani2009optimal, santambrogio2015optimal, xiong2023fair}. 
We utilize the Wasserstein distance to measures DD in our additional results below.

\subsection{CDP with continuous legitimate features}\label{subsec:app:continuous}

The conditional demographic disparity (CDP) is defined over the set $\mc S = \operatorname{supp} \mathbb P ({L | A= 0}) \cap \operatorname{supp} \mathbb P ({L| A= 1})$, i.e. on the common support of the legitimate feature across the two groups defined by the sensitive feature. In practice, if the legitimate feature is continuous, then the empirical common support in any finite sample will be empty, which means the CDD cannot be estimated. Moreover, any method targeting CDP will require multiple samples at each level $l$ of $L$. This is why we need to assume that $L$ is either naturally discrete or appropriately discretized. This is not overly limiting, since continuous features may be discretized using appropriate binning strategies informed by the fairness requirements and the problem setting.

As an alternative, we envision a ``smooth CDP'' definition, where parity is required not only at each level but to some degree across different levels. The extent to which model output distributions are allowed to differ across levels will be reversely related to the distance between levels. Such a definition would be applicable to both discrete and continuous legitimate features, without any binning required. The implications and appropriateness of such a definition need careful attention and are beyond the scope of this work.

\subsection{Aggregation strategies for CDD in the $\ell_p$ sense}\label{subsec:app:aggregation}
The conditional demographic disparity (CDD) in the $\ell_p$ sense (Definition \ref{def:CDD_lp}) aggregates the level-wise disparities at each value of the legitimate features according to a measure $\mb Q$. An obvious candidate for $\mb Q$ is $\mb P(L)$, where the weights are proportional to the amount of data at each level. However, this measure may overly ``favor'' the majority class in cases where the data is unbalanced with respect to the sensitive feature. For example, using the same scenario as in Table~\ref{tab:synthetic_example} in the Introduction, suppose there were 10 times as many female as male loan applicants, and suppose that most females had high incomes and most males had low incomes. Suppose that the approval probability was the same for high income female vs. male applicants, but that low income females had a higher approval probability than low income males. Suppose that $p$ were set to $1$, so that the disparity were simply $\E[D(L)]$, the average distributional distance across levels of $L$. In this example, $\E[D(L)]$ would be relatively small, since much of the mass of $L$ would be concentrated in the high income level, but the majority of males would be subject to the disparate outcomes represented by the low income tier.

Alternatively, one could use $\mb Q = \frac{\Prob(L \mid A = 0)+\Prob(L \mid A = 1)}{2}$ to avoid favoring the majority class. This probability measure ensures that disparities which primarily affect only one level of the sensitive feature {(e.g. males in the example in the preceeding paragraph)} are more influential in the overall disparity measure. \cmnt{However, this average measure, just like $\Prob (L)$, results in an underemphasis on the levels of $L$ with smaller probability mass (density).}

A third possible choice is $\mb Q = \mb U (L)$, where $\mb U(L)$ is the uniform distribution over values of $L$. This puts equal emphasis on every level of $L$, which may or may not be desirable. Any distribution $\mb Q$ results in a valid CDD measure; the choice depends on the user's own values and problem-specific characteristics like the distributions of $A$ and $L$.

In all the disparity definitions given above, different distance functions $\mathsf d$ and different values of $p$ induce different notions of disparity. For example, in the CDD in the $\ell_p$ sense (Definition \ref{def:CDD_lp}), if $\mathsf d(\cdot, \cdot)$ is the Kolmogorov distance and $p=\infty$, then the conditional demographic disparity of $f(X)$ is $\varepsilon$ 
\cmnt{if for all $ l \in \operatorname{supp} \mathbb P ({L | A= 0}) \cap \operatorname{supp} \mathbb P ({L| A= 1})$,
\begin{align*}
    |\Prob({f(X)\le \tau|L=l,A=0})-\Prob({f(X)\le \tau|L=l,A=1})| \le \varepsilon, 
\end{align*}
or in other words}
if the Kolmogorov distance between $\Prob(f(X) \mid L = l, A = 0)$ and $\Prob(f(X) \mid L = l, A = 1)$ is no more than $\varepsilon$ for every possible $l$. Setting $p = \infty$ and enforcing a small corresponding disparity requires the model $f(X)$ to behave similarly for the two groups within every level of the legitimate feature $L=l$. On the other hand, setting $p$ to, say, 2, allows $f(X)$ to potentially behave quite differently for the two groups over areas of $\mathcal{L}$ with small measure.

\section{Computing the bi-causal distance} \label{app:bicausal-dist}

As shown in Proposition \ref{prop:nested}, the bi-causal distance can be viewed as a nested Wasserstein distance with particular inner and outer distance functions. Computing the Wasserstein distance exactly is known to be computationally intractable \cite{arjovsky2017wasserstein, salimans2018improving}. We approximate the Wasserstein distance by the Sinkhorn divergence \cite{sinkhorn1964relationship, cuturi2013sinkhorn}, which is computationally efficient and corresponds to regularizing the Wasserstein distance with an entropy term. It also improves the scalability of traditional methods.
The convergence of this entropic regularized optimal transport problem is theoretically studied in \citet{carlier2017convergence}, while \citet{ghosal2022convergence} obtain the rate $\mathcal{O}(t^{-1})$ for a large class of cost functions and marginals. The consistency and differential properties are studied in \citet{luise2018differential}. 

To summarize the idea behind computing the bi-causal transport distance, the nested Sinkhorn divergence is a multistage stochastic program. Assuming the observed samples $(X_i, Y_i, A_i)$ are i.i.d. samples from a stochastic process, the algorithm proposed by \citet{pichler2021nested} discretizes the whole space and filtration of the stochastic process (i.e., the whole feature space) using scenario trees and uses those compute the optimal transport plan \cite{heitsch2009scenario}. We compute the gradient through the Sinkhorn divergence algorithm and propagate them via stochastic gradient descent to minimize the empirical risk as in \citet{oneto2020exploiting}.

Algorithm~\ref{alg:nested} presents the details of the nested algorithm used to calculate the bi-causal distance; it can be viewed as a special case of Algorithm 1 in \cite{pichler2021nested} with two time periods. For each period, we use the Sinkhorn algorithm (\cite{cuturi2013sinkhorn,flamary2021pot}). To simplify the notation, denote $\Prob_0 := (f \otimes \operatorname{Id}) _\# \mathbb P _{X | A = 0}$ and $\Prob_1  := (f \otimes \operatorname{Id}) _\# \mathbb P _{X | A = 1}$. For the discrete nested distance, we use scenario trees to model the whole space and filtration (i.e., the whole feature space). Denote $\cN_0^X, \cN_0^Y$ ($\cN_1^X$, $\cN_1^Y$, resp.) the set of all nodes of $\Prob_0$ ($\Prob_1$, resp.). We use $m \prec i$ to denote the predecessor $m$ of the node $i$, and similarly $i \succ m$ to denote the successors $i$ of the node $m$. In this notation, we can define the two probability distributions as:

\begin{align*}
    \Prob_0 = \sum_{m\in\cN_0^X} p_0^m \sum_{i\succ m} q_0^i \delta_{(x_0^m,y_0^i)}, \\ \Prob_1 = \sum_{
n\in\cN_1^X} p_1^n \sum_{j\succ n} q_1^j, \delta_{(x_1^n,y_1^j)},
\end{align*}
where $\delta_{a,b}$ is the Kronecker delta function, i.e. $\delta_{a,b}(x, y)=1$ only for $x=a$ and $y=b$, and otherwise $0$. Moreover, $p_k^m$ denotes the marginal distribution of the predecessor $m$, where $q_k^i$ is the conditional distribution of the node $i$ for $k=0,1$. 
The Sinkhorn algorithm \cite{cuturi2013sinkhorn} cam be viewed as an entropic regularized optimal transport problem, by adding a logarithmic penalty term $H(\gamma) = \sum_{i,j} \gamma_{i,j} \log \gamma_{i,j}$ to the objective. 

\cmnt{Algorithm~\ref{alg:nested} shows the nested procedure for calculation the bi-causal distance via the nested Sinkhorn.} The inputs to Algorithm~\ref{alg:nested} are the distributions $\Prob_0$ and $\Prob_1$, which in practice correspond to the features $(x_0, y_0) \sim \Prob_0$ and the marginal probabilities $p_0$ --- equivalently, $(x_1, y_1)$ and $p_1$ for $\Prob_1$. As we discretize the feature space using scenario trees, we denote $x^m_0$ the features belonging to the node $m \in \cN_0^X$, and $y^i_0$ the labels belonging to the node $m \in \cN_0^Y$ (and equivalently for $\Prob_1$).



\begin{remark}
In our algorithms, we use the Sinkhorn solver \cite{flamary2021pot} whose time complexity has proven to be nearly $\mathcal{O}(n^2)$. This means that the time complexity of computing the regularizer in \fairlp~ (Section \ref{subsec:enforcing_cdd_lp}) is $ \mc O(|L|)  + \sum_{j=1}^{|L|}\mathcal{O} (n_j^2) = \mc O(\frac{n^2}{|L|})$ where $n_j$ is the number of samples with legitimate feature value $l_j$ and all $n_j$'s are comparable.
Moreover, it follows that our nested Sinkhorn algorithm in \fairbit~has time complexity of $ |L| ^2 + \sum _j \sum _k n _j n _k = |L| ^2 + n ^2 = \mathcal{O} (n ^2)$ where $n_j$ is the number of samples with legitimate feature value $l_j$ and all $n_j$'s are comparable, $n = \sum _{k = 1} ^K n _k$ is the sample size, and $|L|$ is the number of the levels the legitimate features take in the training sample (so that necessarily $|L| \leq n$). 
\end{remark}

\begin{algorithm}
\caption{Nested Sinkhorn iteration of the bi-causal distance $\Causal^p(\Prob_0,\Prob_1)$}
\label{alg:nested}
\SetKwInOut{Input}{Input}
\SetKwInOut{Output}{Output}
\SetKwInOut{Result}{Result}
\Input{The distribution $\Prob_0$, $\Prob_1$ by specifying $x_0^m$, $y_0^i$, $p_0^i$, $x_1^n$, $y_1^j$, $p_1^j$ for $m\in\cN_0^X$, $i\in\cN_0^Y$, $n\in\cN_1^X$, $j\in\cN_1^Y$, and regularization parameter $\lambda>0$.}
\Output{Bi-causal distance between the two distributions $\Prob_0$ and $\Prob_1$.}
\vspace{0.2cm}
\For{every combination of nodes $m\in\cN_0^X$,$n\in\cN_1^X$}{

For all combination of leaf nodes $i\in\cN_0^Y$ and $j\in\cN_1^Y$ with predecessor $m\prec i$ and $n\prec j$: 

$$d_Y(i,j)^p=|x_0^m-x_1^n|^p+|y_0^i-y_1^j|^p$$

solve the linear program
\begin{align*}
    &\gamma^{\star}_Y (m,n)^p = \nonumber\\
    &\quad\min_{\gamma_Y} \sum_{\substack{i\succ m \\ j\succ n}} \gamma_Y(i,j|m,n) d_Y(i,j)^p - \frac{1}{\lambda} H(\gamma_Y)\\
    &\text{subject to }\sum_{j\succ n}\gamma_Y(i,j|m,n)=q_0^i,\qquad i\succ m,\\
    &\qquad\qquad\sum_{i\succ m}\gamma_Y(i,j|m,n)=q_1^j,\qquad j\succ n,\\
    &\qquad\qquad\gamma_Y(i,j|m,n)\ge0.
\end{align*}
}

solve the linear program
\begin{align}
    \label{eqn:minimize-gamma-x}
    \min_{\gamma_X} &\sum_{\substack{i\succ m \nonumber\\ j\succ n}} \gamma_X(m,n) \gamma^\star_Y(m,n)^p -  \frac{1}{\lambda} H(\gamma_X)\nonumber\\
    \text{subject to }&\sum_{j\succ n}\gamma_X(m,n)=p_0^m,\qquad\qquad i\succ m,\nonumber\\
    &\sum_{i\succ m}\gamma_X(m,n)=p_1^n,\qquad\qquad j\succ n,\nonumber\\
    &\gamma_X(m,n)\ge0.
\end{align}

\vspace{0.2cm}

\Return{The bi-causal distance between $\Prob_0$ and $\Prob_1$ is the $p$-th root of the optimal value of \eqref{eqn:minimize-gamma-x}.}
\end{algorithm}

\begin{remark}
\label{rmk:faster-sinkhorn}
    {In the above algorithm, the complexity mostly depends on the complexity of the Sinkhorn solver we used. A recent work by \citet{lakshmanan2023fast} has significantly reduce the arithmetic operations to compute the distances from $\mathcal{O} (n ^2)$ to $\mathcal{O} (n \log n)$, which enables access to large and high-dimensional data sets. By employing nonequispaced fast Fourier transform as what they do, we may be able to reduce our time complexity of $ K ^2 +\mathcal{O} (n \log n)$. We leave this to future work.}
\end{remark}

\section{Relationship between DCFR and CDD}\label{sec:illustrative-example-dcfr}
Here we consider in further detail the relationship between conditional demographic disparity (CDD) and the Derivable Conditional Fairness Regularizer (DCFR) method proposed by \citet{xu2020algorithmic}.

The CDD considers the distances between the conditional distributions of the model $f(X)$ given the sensitive feature $A$ and the legitimate feature $L$, i.e. between $\Prob (f(X) \mid L, A = 1)$ and $\Prob (f(X) \mid L, A = 0)$. The DCFR regularizer $R_{\text{DCFR}}$, by contrast, aims to minimize $\left(\Prob (A = 1 \mid Z, L) - \Prob (A = 1 \mid L)\right)_+$, the (positive part function of the) differences between the conditional distributions of the sensitive feature given the legitimate feature and a variable $Z = g(X, A)$ that is a transformation of the input features. DCFR promotes the learning of the transformation $g(X, A)$ simultaneously with the training of a model $f(Z)$. (See Figure 2 in \citealt{xu2020algorithmic}.)

In the special case that $R_{\text{DCFR}} = 0$, the CDD is also 0, meaning that Conditional Demographic Parity holds. This is illustrated via the following chain of reasoning:
\begin{align*}
    & R_{\text{DCFR}} := \E\left[\left(\Prob (A = 1 \mid Z, L) - \Prob (A = 1 \mid L)\right)_+\right] = 0 \\
    & \implies \Prob(A = 1 \mid Z = z, L = \ell) = \Prob(A = 1 \mid L = \ell)~\forall \ell, z \\ & \implies Z \indep A \mid L \\
    & \implies f(Z) \indep A \mid L.
\end{align*}
However, it does not follow that if $R_{\text{DCFR}}$ is small, then $f(Z)$ is approximately independent of $A$ given $L$. More precisely, the magnitude of $R_{\text{DCFR}}$ does not necessarily guarantee anything about the magnitude of the CDD. One reason for this is that without further constraints, closeness in the conditional distributions $\Prob(A \mid Z, L)$ does not imply closeness in the distributions $\Prob(Z \mid A, L)$. The following proposition illustrates this point. The proof is given below in Section \ref{subsec:supplementary_prop_proof}.
\begin{proposition} \label{prop:dcfr_cdd_relationship}
Assuming each of the conditional probabilities and densities are well-defined and the denominators are non-zero in the following expression, we have
\begin{align*}
\frac{\lvert \P(Z = z \mid A = 1, L = \ell) - \P(Z = z \mid A = 0, L = \ell) \rvert}{\lvert \P(A = 1 \mid Z = z, L = \ell) - \P(A = 1 \mid L = \ell) \rvert} =
\\
\frac{\P(Z = z \mid L = \ell)}{\P(A=0 \mid L = \ell)\P(A=1 | L = \ell)}.
\end{align*}
\end{proposition}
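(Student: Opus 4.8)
The plan is to treat this as a direct Bayes-rule computation at a fixed level $\ell$, with all probabilities understood to be conditioned on $L=\ell$ and all the relevant densities/probabilities assumed well-defined with nonzero denominators, as in the hypothesis. I would introduce the shorthand $q := \P(A=1\mid Z=z, L=\ell)$, $a_1 := \P(A=1\mid L=\ell)$, $a_0 := \P(A=0\mid L=\ell)$, and use that $A$ is binary in two places: to write $\P(A=0\mid Z=z,L=\ell) = 1-q$, and to use $a_0 + a_1 = 1$ at the end.

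First I would apply Bayes' rule to both conditional laws of $Z$,
\[
\P(Z=z\mid A=a,L=\ell) \;=\; \frac{\P(A=a\mid Z=z,L=\ell)\,\P(Z=z\mid L=\ell)}{\P(A=a\mid L=\ell)},\qquad a\in\{0,1\},
\]
so that $\P(Z=z\mid L=\ell)$ factors out of the difference. Then I would simplify the resulting bracket over the common denominator $a_0a_1$: expanding $q a_0 - (1-q) a_1 = q(a_0+a_1) - a_1 = q - a_1$, which gives
\begin{align*}
\P(Z=z\mid A=1,L=\ell) - \P(Z=z\mid A=0,L=\ell) &= \P(Z=z\mid L=\ell)\left(\frac{q}{a_1} - \frac{1-q}{a_0}\right)\\
&= \P(Z=z\mid L=\ell)\,\frac{q - a_1}{a_0 a_1}.
\end{align*}
Recognizing $q - a_1 = \P(A=1\mid Z=z,L=\ell) - \P(A=1\mid L=\ell)$ and $a_0 a_1 = \P(A=0\mid L=\ell)\P(A=1\mid L=\ell)$, I would take absolute values and divide both sides by $\lvert\P(A=1\mid Z=z,L=\ell) - \P(A=1\mid L=\ell)\rvert$, which is nonzero by assumption, to obtain the stated identity.

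There is essentially no obstacle beyond bookkeeping here: the proof is a one-line Bayes inversion once the notation is organized, and the only care required is (i) invoking the standing hypothesis so that each Bayes inversion and the final division are legitimate (in particular that $a_0$, $a_1$, and $\P(A=1\mid Z=z,L=\ell)-\P(A=1\mid L=\ell)$ are all nonzero), and (ii) using $A\in\{0,1\}$ to collapse $q(a_0+a_1)$ to $q$. If $Z$ is continuous, the identical derivation applies verbatim with $\P(Z=z\mid\cdot)$ read as a conditional density.
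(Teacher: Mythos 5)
Your proof is correct and rests on the same elementary Bayes'-rule manipulation as the paper's own argument; the paper merely organizes the bookkeeping differently (eliminating $\P(Z = z \mid A=0, L=\ell)$ via the law of total probability and matching both numerator and denominator to the ratio $\gamma = \P(A=1\mid Z=z,L=\ell)/\P(A=1\mid L=\ell)$), whereas you Bayes-invert both conditional densities directly and combine over the common denominator $a_0 a_1$. Your version is, if anything, slightly more streamlined, and your handling of the nonvanishing-denominator hypotheses and of the continuous-$Z$ reading matches the paper's standing assumptions.
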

This follows by a simple application of Bayes' rule. The numerator on the left represents the difference in the densities of $Z$ at $z$ conditional on $L = \ell$ for the two groups, while the denominator represents the differences at $Z = z$ in the probability mass functions that appear in $R_{\text{DCFR}}$.
The expression on the right shows that their ratio can be arbitrarily large, since $\P(A = 0 \mid L = \ell)\P(A = 1 \mid L = \ell)$ can be arbitrarily small and the density $\Prob(Z = z\mid L = \ell)$ could be arbitrarily large. The relationship between the conditional distributions on the left therefore varies across levels of $z, \ell$ in a problem-dependent way. By contrast, in \fairlp{} we directly use (an empirical estimate of) CDD in the $\ell_p$ sense as the regularizer. Moreover, Proposition \ref{prop:bcd_cdd} shows that the bi-causal distance (with large enough C) employed in \fairbit{} is equivalent to the CDD in the Wasserstein sense, irrespective of the data generating distribution.


Note additionally that when $f(Z)$ is continuous, closeness in the distributions $\Prob(Z \mid A)$ does not necessarily imply closeness in the distributions $\Prob(f(Z) \mid A)$. For example, consider a simple setting in which $Z \mid A = 0 \sim N(0, 1)$ and $Z \mid A = 1 \sim N(\delta, 1)$ with $\delta > 0$. It is easy to see that for common distance measures such as $1-$ and $2-$Wasserstein distance, total variation distance, etc., the distance between these two distributions can be made arbitrarily small by making $\delta$ small. However, for any fixed $\delta$, the distance between $\Prob(f(Z) \mid A = 1)$ and $\Prob(f(Z) \mid A = 0)$ can be made arbitrarily large by making $f(Z)$ an appropriate increasing function of $Z$. This further complicates the use of DCFR outside a classification setting. 

\subsection{Data generating process for Figure \ref{fig:res:Xucomp_simple_example_body}}
To illustrate the relationship between DCFR and CC, we consider a simple loan setting, in which the sensitive feature $A$ represents sex (male vs. female), $X = L = \emptyset$, and $f(X, A) \in \{0, 1\}$ represents whether an applicant is approved for a loan or not, with $a, b, c, d$ representing the probability mass in each cell:

\begin{table}[h]
    \centering
    \begin{tabular}{@{}lcc@{}}
        \toprule
                  & Male & Female \\ \midrule
        Approved  & a    & b      \\
        Not approved & c & d \\ \bottomrule
    \end{tabular}
\end{table}%
In this simple setting, since $L = \emptyset$, there are no levels to aggregate across when computing the conditional demographic disparity, so \cddw = \cddlp, and we denote them both by $\mathsf {CDD}$. We have: 
\begin{align*}
\mathsf {CDD} = \frac{a}{a+c}-\frac{b}{b+d}\\
R_{\text{DCFR}} = \frac{a}{a+b}-\frac{c}{c+d}.
\end{align*}

In this simple example, reducing $\mathsf {CDD}$ implies minimizing the difference in acceptance rate by sex, while reducing $R_{\text{DCFR}}$ implies minimizing the difference between the ratio of admitted males and the proportion of males in the applicant group. 

To produce Figure \ref{fig:res:Xucomp_simple_example_body} in Section \ref{sec:DCFR}, we create a simulation in which we repeatedly sample $10,000$ applicants, varying both the proportion of males in the applicant pool $r_m$ and the difference in acceptance rate by a hyperparameter $\delta$ as follows:
\begin{align*}   
&\Prob(\text{Approve} | \text{Male}) = 0.5 + \delta, \\ 
&\Prob(\text{Approve} | \text{Female}) = 0.5 - \delta, \qquad \text{for}\, \delta \in [0, 0.5).
\end{align*}
For each sample, we compute $R_{\text{DCFR}}$ and $\mathsf {CDD}$ as above, and we compute the \fairbit{} regularizer (which here is equivalent to the \fairlp{} regularizer due to the fact that $L = \emptyset$) as described in Section \ref{subsec:enforcing_cdd_wass}. Figure \ref{fig:res:Xucomp_simple_example_body} illustrates how the relationship between $R_{\text{DCFR}}$ and $\mathsf {CDD}$ depends on the proportion of males. Each line represents different values of $\delta$ for a fixed proportion of males $r_m$. The closer $r_m$ is to 1, the steeper the slope in the $\mathsf {CDD}$ - $R_{\text{DCFR}}$ curve. A given small value of $R_{\text{DCFR}}$ can correspond to an arbitrarily large value of $\mathsf {CDD}$. The only case where $\mathsf {CDD}$ and $R_{\text{DCFR}}$ are equal in this example is when $r_m=0.5$, i.e., the proportions in the candidate pool are balanced.  In contrast, \fairbit{} (and equivalently, \fairlp{}) enjoy a consistent relationship with $\mathsf {CDD}$, with some variance along the 45 degree line due to the finite sample size in the example.

Note that since $L = \emptyset$ in this example, this example is equivalent to fixing the \fairbit{} regularizer and  $R_{\text{DCFR}}$ for a specific level $L=l$. This example shows that $R_{\text{DCFR}}$ does not promote small demographic disparity in each level, and therefore it does not promote small $\mathsf {CDD}$, regardless of how level-wise disparities are aggregated. 

\subsection{Proof of Proposition \ref{prop:dcfr_cdd_relationship}} \label{subsec:supplementary_prop_proof}
\begin{proof}
Let $N_{\text{left}}, D_{\text{left}}$ denote the numerator and denominator on the left hand side of the proposition:
\begin{align*}
    &N_{\text{left}} = \lvert \P(Z = z \mid A = 1, L = \ell) \nonumber\\
    &\qquad\qquad\qquad\qquad- \P(Z = z \mid A = 0, L = \ell) \rvert \\
    &D_{\text{left}} = \lvert \P(A = 1 \mid Z = z, L = \ell) - \P(A = 1 \mid L = \ell) \rvert
\end{align*}
First we note that:

\begin{align*}
    &\P(Z = z| L = \ell) = \\
    &\qquad\P(Z = z\mid A=1, L = \ell) \P(A=1 \mid L = \ell)\\ 
    &\qquad +\P(Z = z\mid A=0, L = \ell) \P(A=0 \mid L = \ell) 
\end{align*}
Therefore,
\begin{align*}
    &\P(Z = z\mid A=0, L = \ell) = \\
    &\qquad\frac{\P(Z = z\mid L = \ell) }{\P(A=0 \mid L = \ell)}\\
    &\qquad- \frac{ \P(Z = z\mid A=1, L = \ell) \P(A=1 \mid L = \ell)}{\P(A=0 \mid L = \ell)}, 
\end{align*}

which we can use to rewrite $N_{\text{left}}$ as:

\begin{align*}
& \lvert \P(Z = z\mid A=1, L = \ell) \\
&- \frac{\P(Z = z\mid L = \ell) }{\P(A=0 \mid L = \ell)}\\
&+ \frac{\P(Z = z\mid A=1, L = \ell) \P(A=1 \mid L = \ell)}{\P(A=0 \mid L = \ell)}\rvert \\
&= \lvert\frac{- \P(Z = z\mid L = \ell)}{\P(A=0 \mid L = \ell)} \\
&+ \frac{\P(Z = z\mid A=1, L = \ell) [\P(A=0) + \P(A=1)]}{\P(A=0 \mid L = \ell)}\rvert \\
&= \lvert\frac{\P(Z = z\mid A=1, L = \ell) - \P(Z = z\mid L = \ell)}{\P(A=0 \mid L = \ell)}\rvert,
\end{align*}

since $\P(A=0) + \P(A=1) = 1$ as the sensitive feature $A$ is binary. We then note that:

\begin{align*}
    &\P(A=1 \mid Z = z, L = \ell) = \\
    &\qquad\qquad\frac{\P(Z = z\mid A=1, L = \ell) \P(A=1\mid L = \ell)}{\P(Z = z\mid L = \ell)} 
\end{align*}
Therefore,
\begin{align*}
    &\frac{\P(A=1 \mid Z = z, L = \ell)}{\P(A=1\mid L = \ell)} =\\
    &\qquad\qquad\qquad\frac{\P(Z = z\mid A=1, L = \ell)}{\P(Z = z\mid L = \ell)} := \gamma .
\end{align*}

Given the definitions above, we have that:

\begin{align*}
& \frac{N_{\text{left}}\P(A=0\mid L = \ell)}{\P(Z = z\mid L = \ell)} = \lvert \gamma - 1 \rvert, \\
& \frac{D_{\text{left}}}{\P(A=1\mid L = \ell)} = \lvert \gamma - 1 \rvert \\
\implies & \frac{N_{\text{left}}}{D_{\text{left}}} = \frac{\P(Z = z\mid L = \ell)}{\P(A=0\mid L = \ell)\P(A=1\mid L = \ell)}
\end{align*}


\end{proof}

\section{Proofs}\label{sec:app:proofs}

In this section, we provide the proofs of Propositions~\ref{prop:nested}, \ref{prop:bcd_cdd}, and \ref{prop:dcfr_alternative_form}. 
\begin{proof}[Proof of Proposition~\ref{prop:bcd_cdd}]
%
Consider CDD in Wasserstein sense
$
    \mathsf {CDD}^{\mathsf {wass}}_f = \Wass^p_p(\Prob ({L | A= 0}), \Prob ({L| A= 1}) ; D), 
$
with
\begin{align*}
D_{cdd}(l,l') =\begin{cases}
     \mathsf d_{cdd}\big(\Prob(f(X) \mid L=l, A = 0),\\ \qquad~\Prob(f(X) \mid L=l', A = 1)\big), ~ l=l',\\
    \infty, \qquad\qquad\qquad\qquad \qquad\text{elsewhere}.
   \end{cases}  
\end{align*}
Define the set $\mb B = \{(l,l')|~l,l'\in \mc S,~ l\neq l'\}$. Consider two transport plans $\gamma_1,\gamma_2 \in \Gamma (\P({L | A= 0}), \P ({L| A= 1})$ such that $\forall (l,l')\in \mb B:~\gamma_1(l,l')= 0$ but $\exists (l,l')\in \mb B: ~\gamma_2(l,l')\neq 0$.
It follows from the definition of \cddw{} that for bounded $\mathsf d$, we have $D_{cdd}(l,l)<D_{cdd}(l,l')$ for all $l\neq l'$. Therefore, 
transport plan $\gamma_2$ cannot be an optimal transport plan of the optimal transport problem with cost function $D_{cdd}(l,l')$, since any transport plan $\gamma_1$ such that $\forall (l,l')\in \mb B:~\gamma_1(l,l')= 0$ will have a lower cost. 

On the other hand, Proposition \ref{prop:nested} states that $\Causal^p(\tilde{\mb P}, \mb P)$ can be written as $\Wass^p_p(\tilde{\Prob}_{\tilde{U}},\Prob_{U}; D_{bcd})$ where
\begin{align*}
    D_{bcd}^p(\tilde{U},U) = \Wass_p^p(\tilde{\Prob}_{\tilde{V}|\tilde{U}}, \Prob_{V|U}) + C \norm{\tilde{U}-U}_p^p. 
\end{align*}
Therefore, when $C>\frac{\bar d}{\underline d}$, we have 
\begin{align*}
    \forall (l,l')\in \mb B: \quad &C \norm{l'-l}_p^p \\
    &  >\bar d \\
    &\geq \mc W_p^p\big(\Prob(f(X) \mid L=l, A = 0),\\
    &\qquad\qquad\qquad\Prob(f(X) \mid L=l, A = 1)\big) .
\end{align*}
It follows that any transport at the same level of $L$ has a lower cost than any transport between different levels. Therefore, the optimal transport plan $\gamma$ with cost function $D_{bcd}$ must satisfy $\forall (l,l')\in \mb B:~\gamma(l,l')= 0$.

Therefore, when $\mathsf d_{cdd}(\cdot,\cdot) = \Wass_p^p(\cdot,\cdot)$ and $C>\frac{\bar d}{\underline d}$, both \cddw and $\Causal^p$ correspond to optimal transport plans with zero mass on $\mb B$ and have the same cost functions when $l=l'$, and therefore they are equivalent.
\end{proof}

\begin{proof}[Proof of Proposition \ref{prop:nested}]
It follows from Definition \ref{def:bicausal_distance} that bi-causal distance can be written as
    \begin{align*}
    &\Causal^p (\tilde{\Prob}, \Prob) := \nonumber\\
    &\inf _{\gamma \in \Gamma _{bc} (\tilde{\Prob}, \Prob)} \E _{((\tilde{U}, \tilde{V}),(U, V)) \sim \gamma} \left[ 
            {\color{red}C}\norm{\tilde{U}-U}_p^p + \norm{\tilde{V}-V}_p^p
    \right]=\nonumber\\
    & \inf_{\gamma_1\in\Gamma(\tilde{\Prob}_{\tilde{U}},\Prob_{U})}  \E_{(\tilde{U},U)\sim\gamma_1}\Big[ \inf_{\gamma_2\in\Gamma(\tilde{\Prob}_{\tilde{V}|\tilde{U}},\Prob_{V|U})} \E_{(\tilde{V},V)\sim\gamma_2}\Big[\nonumber\\
    &\qquad\qquad\qquad\qquad C\norm{\tilde{U}-U}_p^p +\norm{\tilde{V}-V}_p^p \mid (\tilde{U},U) \\
    &\qquad\qquad\qquad\qquad\qquad\qquad\qquad\qquad\qquad\qquad\qquad\Big]\Big]
    \end{align*}
\end{proof}

\begin{proof}[Proof of Proposition \ref{prop:dcfr_alternative_form}]
By definition of $Q(h)$ in \citet{xu2020algorithmic}:
\begin{align*}
     Q(h) &= \E[\mathbf{1}_{\{A=1\}}\Prob(A=0|L)h(Z,L)]\\
     &\qquad- \E[\mathbf{1}_{\{A=0\}}\Prob(A=1|L)h(Z,L)]\\
     & =\E\big[\big(\mathbf{1}_{\{A=1\}}\Prob(A=0|L)\\
     &\qquad- \mathbf{1}_{\{A=0\}}\Prob(A=1|L)\big)h(Z,L) \big]\\
     & =\E\big[\E\big[\mathbf{1}_{\{A=1\}}\Prob(A=0|L)\\
     &\qquad- \mathbf{1}_{\{A=0\}}\Prob(A=1|L)|Z,L\big]h(Z,L) \big]  \\
     & =\E\Bigg[ \Big (\Prob(A=1|Z,L)\Prob(A=0|L)\\
     &\quad- \Prob(A=0|Z,L)\Prob(A=1|L)\Big) h(Z,L)\Bigg]\\
     & =\E[ {g(Z,L)}h(Z,L)].
\end{align*}
Since $h \in L^2$ and $0 \leq h(Z, L) \leq 1$, then it follows that:
\begin{align*}
    h^\star &= \arg\sup_h Q(h)\\
    &= \mathbf{1}_{\{g(Z, L) > 0\}} \\
    & = \mathbf{1}_{\{\Prob(A=1|Z,L)\Prob(A=0|L)- \Prob(A=0|Z,L)\Prob(A=1|L)>0\}}\\
      & = \mathbf{1}_{\{  \frac{\Prob(A=1|Z,L)}{\Prob(A=1|L)} >\frac{\Prob(A=0|Z,L)}{\Prob(A=0|L)} \}}\\
      & = \mathbf{1}_{\{ \Prob(A=1|Z,L)>\Prob(A=1|L)\}.}
\end{align*}
Hence we have that:
\begin{align*}
    R_{\text{DCFR}} &= Q(h^\star) \\
    &= \E[\big(\Prob(A=1|Z,L)\Prob(A=0|L)\\
    &\qquad- \Prob(A=0|Z,L)\Prob(A=1|L)\big)_+]\\
      &=\E[ ( \Prob(A=1|Z,L)-\Prob(A=1|L) )_+ ]
\end{align*}

\end{proof}

\section{Experiments Details} \label{sec:app:exp_details}

This Section includes a more detailed breakdown of the numerical experiments. Section~\ref{sec:datasets-details-app} provides more information on the datasets used, Section~\ref{sec:implementation-details-app} lists all the implementation details including hyper-parameter ranges for each method, Section~\ref{subsec:app:additional_results} provides further discussion on the fairness-performance tradeoffs of all methods, Section~\ref{sec:result-details-app} provides an exhaustive inventory of all results, including additional figures in this appendix. Finally, Section~\ref{sec:result-considerations-app} includes any further considerations which was not included in the main paper due to space concerns.

\subsection{Datasets Details}\label{sec:datasets-details-app}

\begin{enumerate}
    %
    \item \texttt{Adult} dataset \cite{misc_adult_2}. The Adult dataset is based on the 1994 U.S. census data. It contains demographic information for $48,842$ respondents. It contains $14$ features and a target variable (\texttt{income}: whether income is over $\$50,000$ or not). We consider \texttt{sex} as the sensitive feature and \texttt{occupation} as the legitimate feature.
    In this dataset, the occupation \texttt{Armed Forces} and \texttt{Priv-House-serv} are very rare, so we combine them with \texttt{Protective Services}. We encode the values of \texttt{occupation} as integer values, i.e. $L \in \{0,..,11\}$, resulting in $10$ levels. We use a subsample (selected uniformly at random without replacement) of size $5,000$ for computational reasons (see details below).
    \item \texttt{Drug} dataset \cite{fehrman2017five}. This dataset contains self-declared drug usage history for 1885 respondents, for 19 different drugs. For each drug, the respondents declare the time of the last consumption (possible responses are: never, over a decade ago, in the last decade/year/month/week, or on the last day). It also has demographic variables and scores on several psychological tests. In our classification problem, the task is to predict the response ``never used'' versus ``others'' (i.e., used) for \texttt{cannabis}. We treat \texttt{education} as the legitimate feature ($L \in \{0,..,8\}$, resulting in $9$ levels) and \texttt{gender} as the sensitive feature.
    \item \texttt{Law School} dataset \cite{ramsey1998lsac}. This dataset comes from a survey conducted by the Law School Admission Council (LSAC) across 163 law schools in the United States in 1991. It contains law school admission records and law school performance of $21,027$ students. 
    The prediction task is to predict the students' first-year average grade (\texttt{zfygpa}). We use the candidate LSAT score as legitimate feature (discretized in deciles, $L \in \{0,..,9\}$, resulting in $10$ levels) and the candidate gender as the sensitive feature.
    \item \texttt{Communities and Crime} dataset \cite{misc_communities_and_crime_183}. This dataset combines socio-economic data from the $1990$ US Census, law enforcement data from the $1990$ US LEMAS survey, and crime data from the $1995$ FBI UCR. It includes data for $1,994$ communities across the United States, represented by $126$ socio-economic, demographic, and law enforcement-related features. The target variable is \texttt{ViolentCrimesPerPop} which represents the rate of violent crimes per $100,000$ population in each community. We cosider \texttt{racepctblack} (percentage of the population in a community that is Black) as the sensitive feature. We cosider \texttt{medIncome} (the median income of the community) as the legitimate feature (discretized in deciles, $L \in \{0,..,9\}$, resulting in $10$ levels).
\end{enumerate}

See Table \ref{tab:dataset_features} for the list of the features used in each dataset.

\begin{table*}[ht!]
\centering
\begin{tabular}{|c|c|c|}
\hline
\multicolumn{1}{|c|}{\textit{Unfairness metric}}                                                            & \textit{Figures}  \\ \hline 
 \begin{tabular}[c]{@{}c@{}}\multicolumn{1}{c}{\cddw} \end{tabular} & \ref{fig:empirical-comparison-results}(top row) 
 \\ \hline
\multicolumn{1}{|c|}{\cddlp (uniform)}       & \ref{fig:empirical-comparison-results}(bottom row)
\\ \hline
\multicolumn{1}{|c|}{\cddlp ($P(L)$) } & \ref{fig:empirical-comparison-results_appendix}(top row)
\\ \hline
\multicolumn{1}{|c|}{\cddlp (Ave. $P(L|A)$)} & \ref{fig:empirical-comparison-results_appendix}(middle row)
\\ \hline
\multicolumn{1}{|c|}{DP} & \ref{fig:empirical-comparison-results_appendix}(bottom row)
\\ \hline
\end{tabular}
\caption{List of all figures for all experiments across different fairness metrics.}\label{tab:figures-inventory}
\end{table*}

\begin{table*}[ht!]
\centering
\begin{tabular}{|c|c|}
\hline
\multicolumn{1}{|c|}{\textit{Dataset}}                                  & \textit{Tables} 
\\ \hline 
 \begin{tabular}[c]{@{}c@{}}\multicolumn{1}{c}{\texttt{Adult}} \end{tabular} & \ref{tab:adult_rest}, \ref{tab:adult_fairlp}  \\ \hline
\multicolumn{1}{|c|}{\texttt{Drug}}       & \ref{tab:drug_rest}, \ref{tab:drug_fairlp}  
\\ \hline
\multicolumn{1}{|c|}{\texttt{Law School}}   & \ref{tab:lawschool_rest}, \ref{tab:lawschool_fairlp} 
\\ \hline
\multicolumn{1}{|c|}{\texttt{Communities and Crime}}& \ref{tab:crime_rest}, \ref{tab:crime_fairlp}
\\ \hline
\end{tabular}
\caption{List of all tables for all experiments across the four datasets.}\label{tab:tables-inventory}
\end{table*}

\begin{table*}[ht!]
\begin{tabular}{|c|c|c|c|c|}
\cline{1-5}
\multicolumn{1}{|c|}{Dataset}         & Input features                                                                                                       & Legitimate feature & Sensitive feature & Target   \\ \hline 
\begin{tabular}[c]{@{}c@{}}\multicolumn{1}{c}{\texttt{Adult}} \end{tabular}  & \begin{tabular}[c]{@{}c@{}}Age, Workclass, Fnlwgt, \\Education, Education-num, \\ Capital-loss,\ Race, Capital-gain, \\Relationship,  Hours-per-week, \\ Native-country, Marital-status \end{tabular} 
  & Occupation  &    Sex       &  Income \\ \hline
\multicolumn{1}{|c|}{\texttt{Drug}}       & \begin{tabular}[c]{@{}c@{}}Education, Country, Nscore, \\ Escore, Oscore, Ascore, \\ Cscore, Impulsive, SS\end{tabular} & Education          & gender            & Cannabis \\ \hline
\multicolumn{1}{|c|}{\texttt{Law School}} & \begin{tabular}[c]{@{}c@{}} ugpa, fam\_inc, fulltime, \\ tier, lsat\_decile \end{tabular}                                                                        & lsat\_decile       & gender            & zfygpa   \\ \hline
\multicolumn{1}{|c|}{\begin{tabular}[c]{@{}c@{}}\texttt{Communities} \\\texttt{and Crime}\end{tabular} } &          124 features (see Section \ref{sec:datasets-details-app})                                                              &   {medIncomeDecile}   &   {racepctblack}   &  {ViolentCrimesPerPop}  \\ \hline
\end{tabular}
\vspace{0.2cm}
\caption{List of variables for \texttt{Adult}, \texttt{Drug}, \texttt{Law School}, and \texttt{Communities and crime} datasets.}
\label{tab:dataset_features}
\end{table*}

\subsection{Implementation Details}\label{sec:implementation-details-app}

In all experiments we use a two-layer deep multi-layer perceptron (MLP), with hidden layers including 50 and 20 nodes, and a rectified linear unit (ReLu) activation function. We train the MLP using stochastic Adam optimizer, with 500 training epochs and a learning rate of $l_r = 0.001$, for the all methods and datasets. All methods are equipped with early stopping, with a patience set to $50$ epochs. We chose these values since 
they resulted in the convergence of the training procedure for all methods under consideration to non-trivial predictors, with reasonable model variance across 10 runs. The constant $C$ in the computation of \cddw{} (see Proposition \ref{prop:bcd_cdd}) is set to $C=1$ for all four datasets, which is larger than the lower bound $\frac{\bar d}{\underline d}$ in all cases and hence satifies Proposition~\ref{prop:bcd_cdd}. Finally, during training we apply stratified sampling for generating each training batch to ensure at least one sample from each level of the legimitate feature $L$ is included on a given batch. To do so, we set the batch size to $256$ for \texttt{Drug}, $1024$ for \texttt{Adult} and $128$ for \texttt{Law School} and \texttt{Crime}.

For \textit{Wasserstein regularization} and \fairlp{} we compute the 1-Wasserstein distance, i.e., setting $p=1$. For \fairbit{}, we compute the Sinkhorn approximation using the \texttt{sinkhorn} and \texttt{sinkhorn2} functions from the Python optimal transport (POT) package \cite{flamary2021pot}, which optimizes for the 2-Wasserstein distance, i.e., $p=2$ for both the inner and outer metrics in \cddwtext{}. The details for each method are as follows:

\begin{itemize}
    \item For \fairbit{} and \textit{Wasserstein regularization}, values of $\lambda_b$ and $\lambda_w$ are 10 logarithmically-spaced numbers in the range $[0.0001, 10]$:
    \begin{align*}
        \lambda_b, \lambda_w \in \{0.0001, 
0.0005, 
0.0022, 
0.0100, 
0.0464, \\
0.2154, 
1.0000, 
4.6416, 
21.5443, 
100\}
    \end{align*}
    \item For all 3 versions of \fairlp, values of $\lambda_{lp}$ are 10 logarithmically-spaced numbers in the range $[0.0001, 100]$:
    \begin{align*}
        \lambda_b, \lambda_w \in \{0.0001, 
0.0004, 
0.0013, 
0.0046, 
0.0167,\\ 
0.0599,
0.2154, 
0.7743, 
2.7826, 
10 \}
    \end{align*}
    \item For Pre-processing repair, we select $\lambda_r \in [0,1]$, with the following values:
    $$\lambda_r \in \{ 0.0, 0.1, 0.2, 0.3, 0.4, 0.5, 0.6, 0.7, 0.8, 0.9, 1.0\}$$

    \item For DCFR, we consider the same range $(0,20]$ as in \cite{xu2020algorithmic}, with the following values:
    \begin{align*}
        \lambda_{DCFR} \in \{0.1, 0.25, 0.5, 0.75, 1.0, 1.5,\\ 2.0, 5.0, 10.0, 15.0, 20.0\}
    \end{align*}
\end{itemize}

Finally, our \fairbit{} method requires substantial computation due to our non-optimized Python implementation, unlike all other approaches which utilize direct calls from the Python Optimal transport package (POT, \citealt{flamary2021pot}), which are natively written in C++ with a Python binding. For instance, it requires around 2 hours to train one model using the full \texttt{Law School} dataset on a machine equipped with 32GB of RAM, a 16GB Tesla T4 and a 2.50GHz Intel(R) Xeon(R) Platinum 8259CL CPU, while for \fairlp{}, as well as other methods, the same training is executed in less than 10 minutes.

\subsection{Additional Results}\label{subsec:app:additional_results}


Figure~\ref{fig:empirical-comparison-results_including_legit} includes the performance of all methods (including \textit{legit-only}) with fairness metrics \cddwtext{} (normalized version) on the top row and \cddlptext{} (uniform) on the bottom row. We normalize the \cddwtext{} distance by substracting the Wasserstein distance between the distribution of the legitimate feature at the two levels of the sensitive feature, i.e., subtracting $\mathcal{W}(\mathbb{P}(L|A=0), \mathbb{P}(L|A=1))$. This corresponds to the term $C \norm{\tilde{U}-U}^p$ in Definition~\ref{def:bicausal_distance}, which is not dependent on the model $f(X)$ and as such equal across all methods. Figure~\ref{fig:empirical-comparison-results_including_legit} is a repetition of the results presented in Figure \ref{fig:empirical-comparison-results} in the main body, with the addition of the results for \textit{legit-only}. \textit{Legit-only} completely removes conditional disparities but significantly suffers in terms of predictive performance. This is expected behavior since it relies only on a few (in our experiments, only 1) feature(s).

Figure \ref{fig:empirical-comparison-results_appendix} presents the performance of all methods (including \textit{legit-only}) with fairness metrics \cddlptext{} ($P(L)$), \cddlptext{} (Ave. $P(L|A)$), and demographic parity on the top, middle, and bottom rows, respectively. 
Overall, when fairness is measured by \cddlptext{} metrics, \bcd\ and the variants of \fairlp{} are consistently among the highest performing, often providing better fairness-predictive power trade-offs than the other proposed methods. When fairness is measured by DD, unsurprisingly \textit{Wasserstein Reg.} performs the best. In this case, DCFR has good performance especially on regression datasets with some points on the Pareto frontier. \fairbit{} and the variants of \fairlp{} generally do not perform as well on demographic disparity across all datasets, although they are still part of or close to the Pareto frontier in classification settings.

\subsection{Results Details}\label{sec:result-details-app}

We provide the mean and standard deviation results for all our methods (for all regularization parameter values where applicable) in Tables \ref{tab:adult_rest} through \ref{tab:crime_fairlp}.  
Tables~\ref{tab:figures-inventory} and \ref{tab:tables-inventory} below provide exhaustive pointers to the respective result figures and tables for each experiment across different datasets and fairness metrics. See Sections~\ref{section:experiments}, \ref{subsec:app:additional_results} and Figures captions for comments and takeaways on models performance.

\subsection{Discussion on our choices of evaluation metrics for predictive performance}\label{sec:result-considerations-app}

While some previous papers use the AUC of the performance-fairness curves (see e.g., \citealt{rychener2022metrizing}) to describe the overall fairness-performance behavior of each predictor, in our experiments we find this measure to be misleading since the ranges of the fairness and performance results are different across different methods. As we are not aware of any single measure that can appropriately summarize the overall fairness-performance behavior of a predictor, we report results as the means and standard deviations over 10 runs across separate random seeds for each method. 

\begin{figure*}[!t]
    \centering
    \begin{subfigure}{.215\textwidth}
        \centering
        \includegraphics[width=\linewidth]{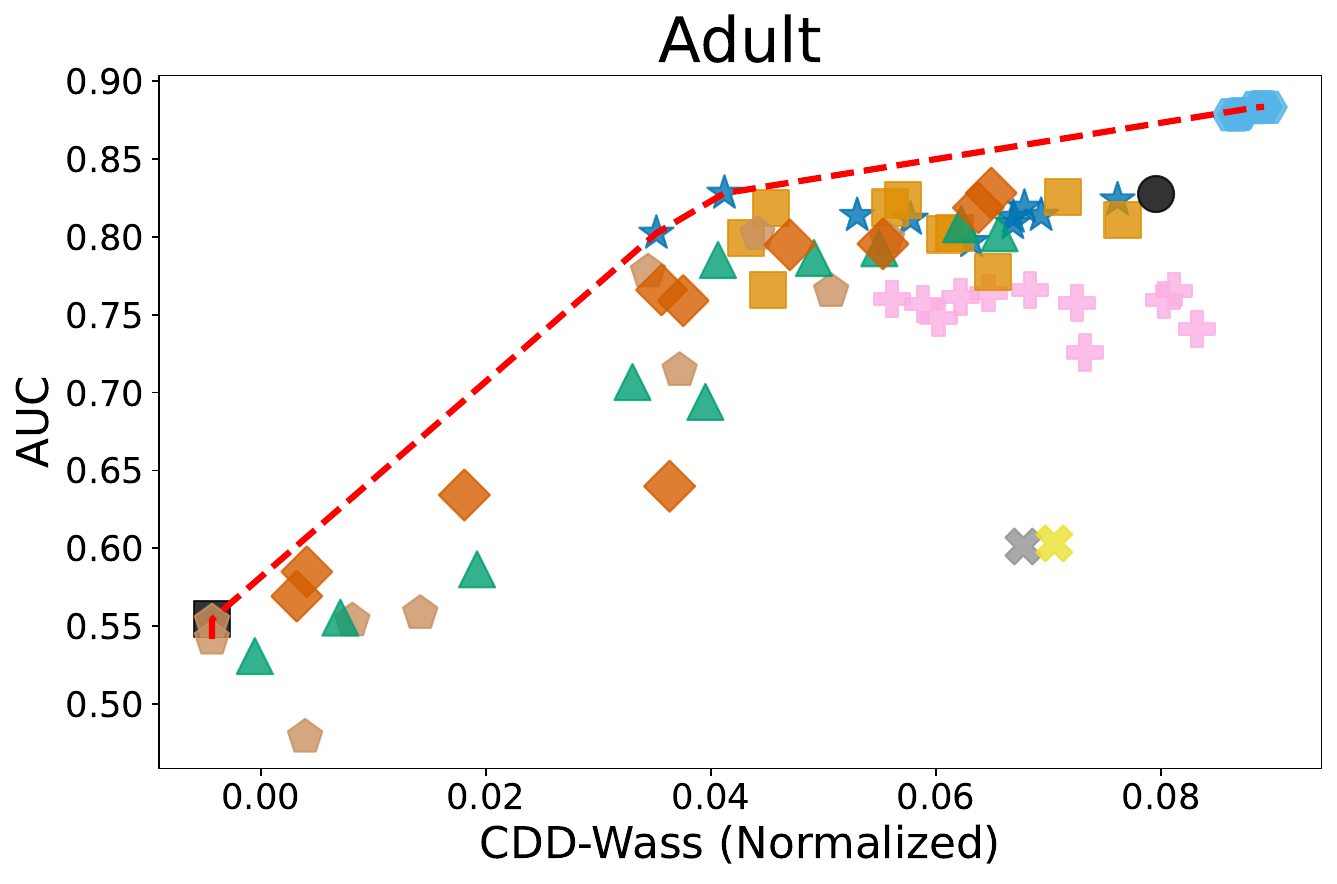}
    \end{subfigure}%
    \begin{subfigure}{0.215\textwidth}
        \centering
        \includegraphics[width=\linewidth]{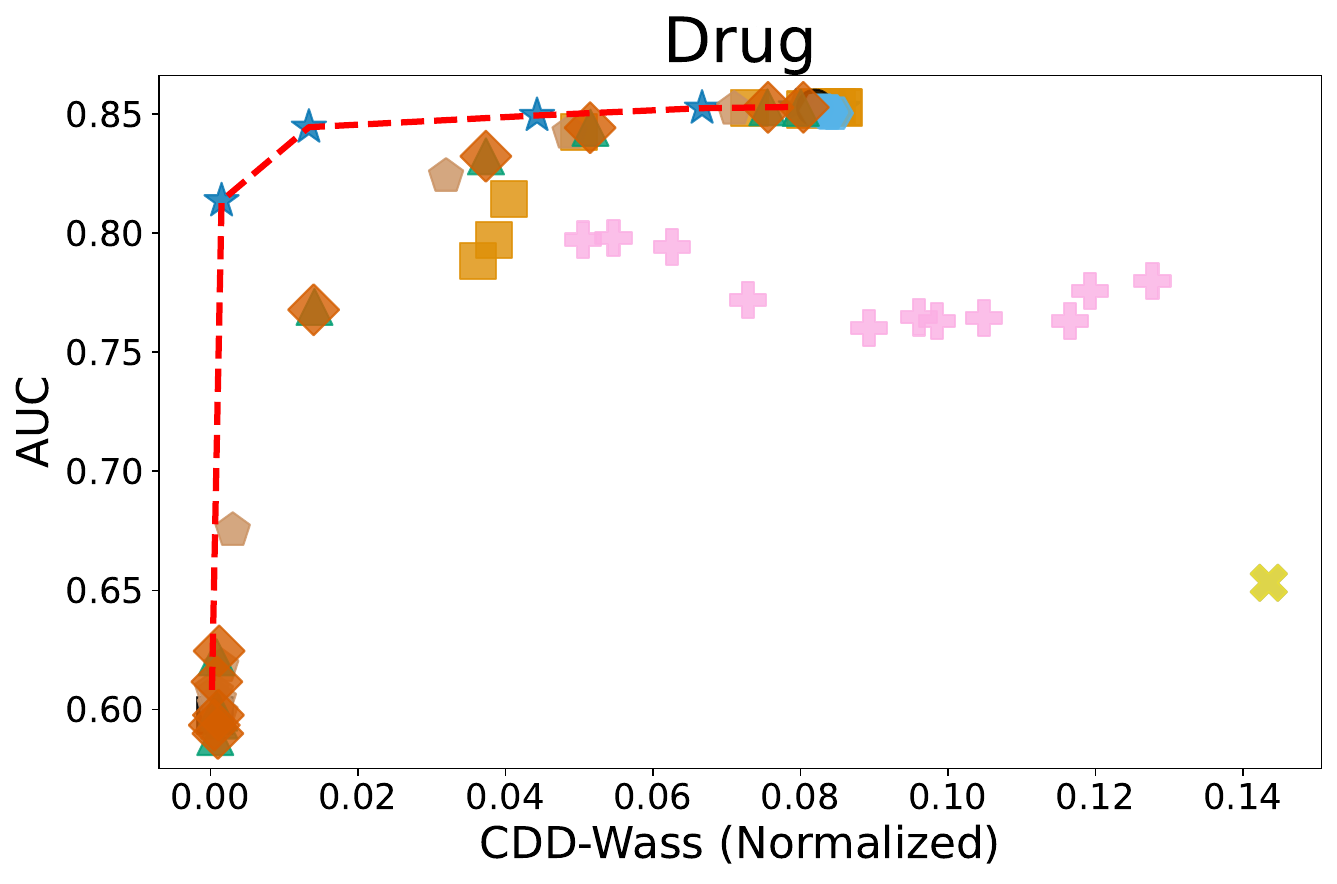}
    \end{subfigure}
    \begin{subfigure}{.225\textwidth}
        \centering
        \includegraphics[width=\linewidth]{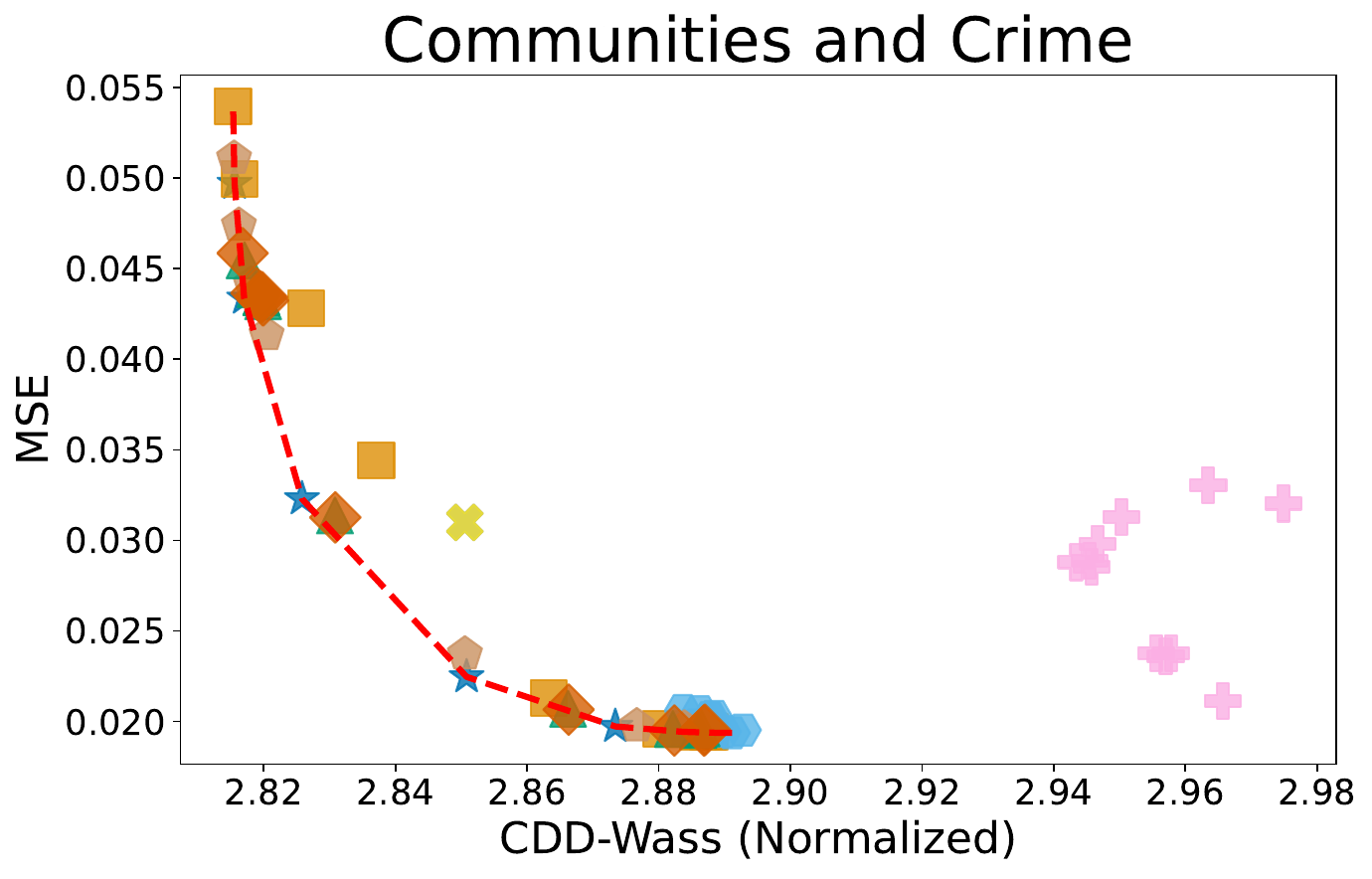}
    \end{subfigure}%
    \begin{subfigure}{0.31\textwidth}
        \centering
        \includegraphics[width=\linewidth]{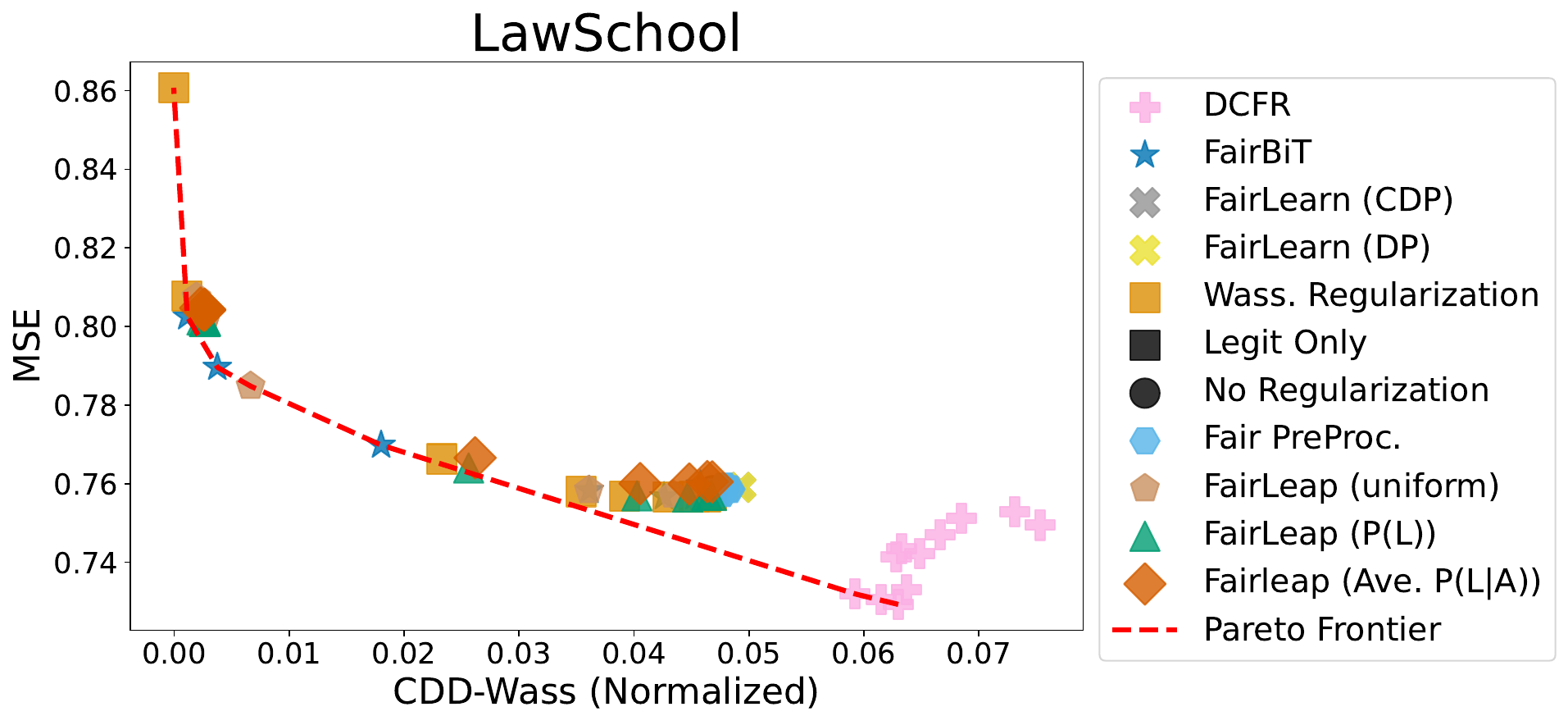}
    \end{subfigure}

    \begin{subfigure}{.215\textwidth}
        \centering
        \includegraphics[width=\linewidth]{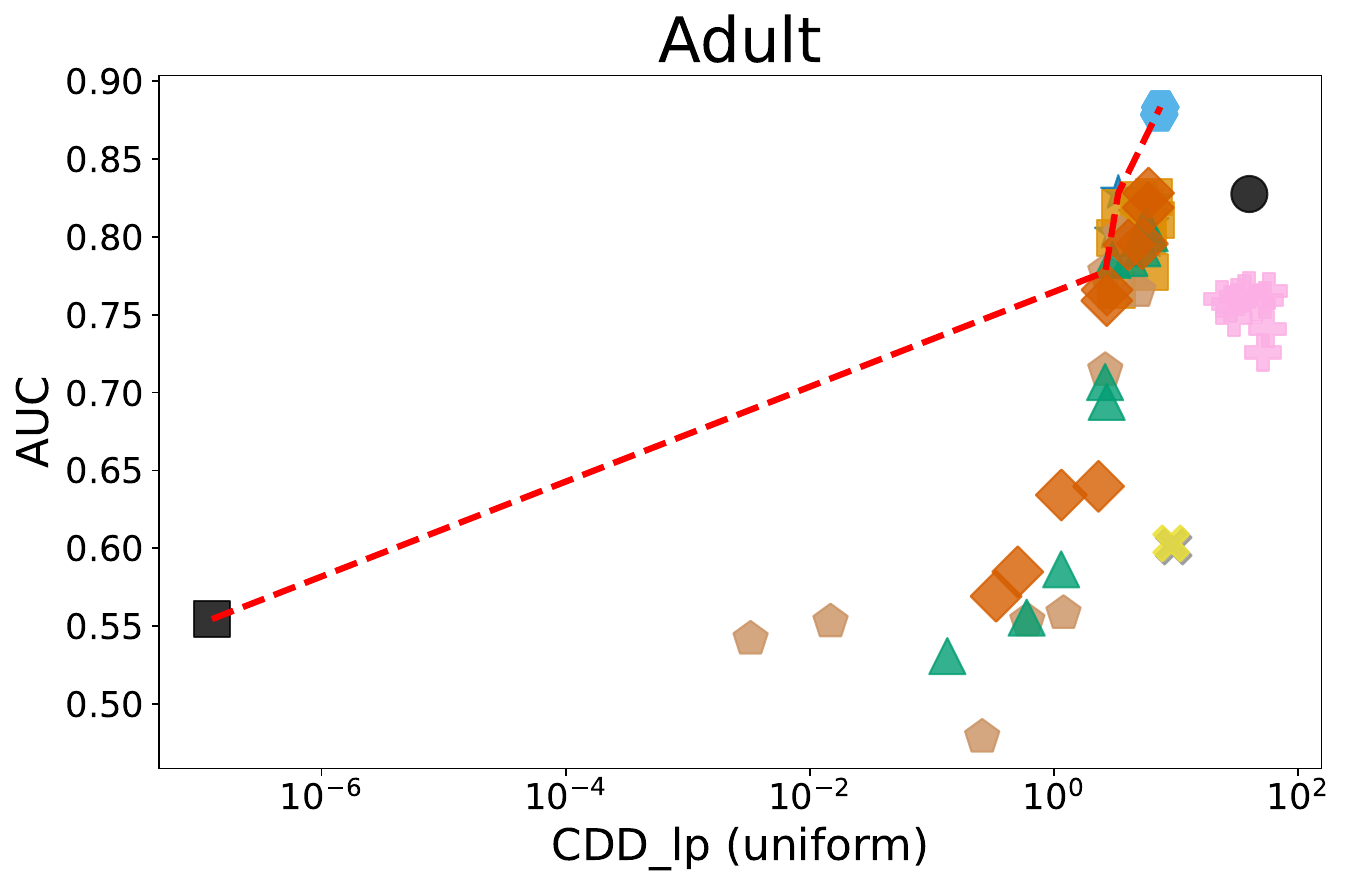}
    \end{subfigure}%
    \begin{subfigure}{0.215\textwidth}
        \centering
        \includegraphics[width=\linewidth]{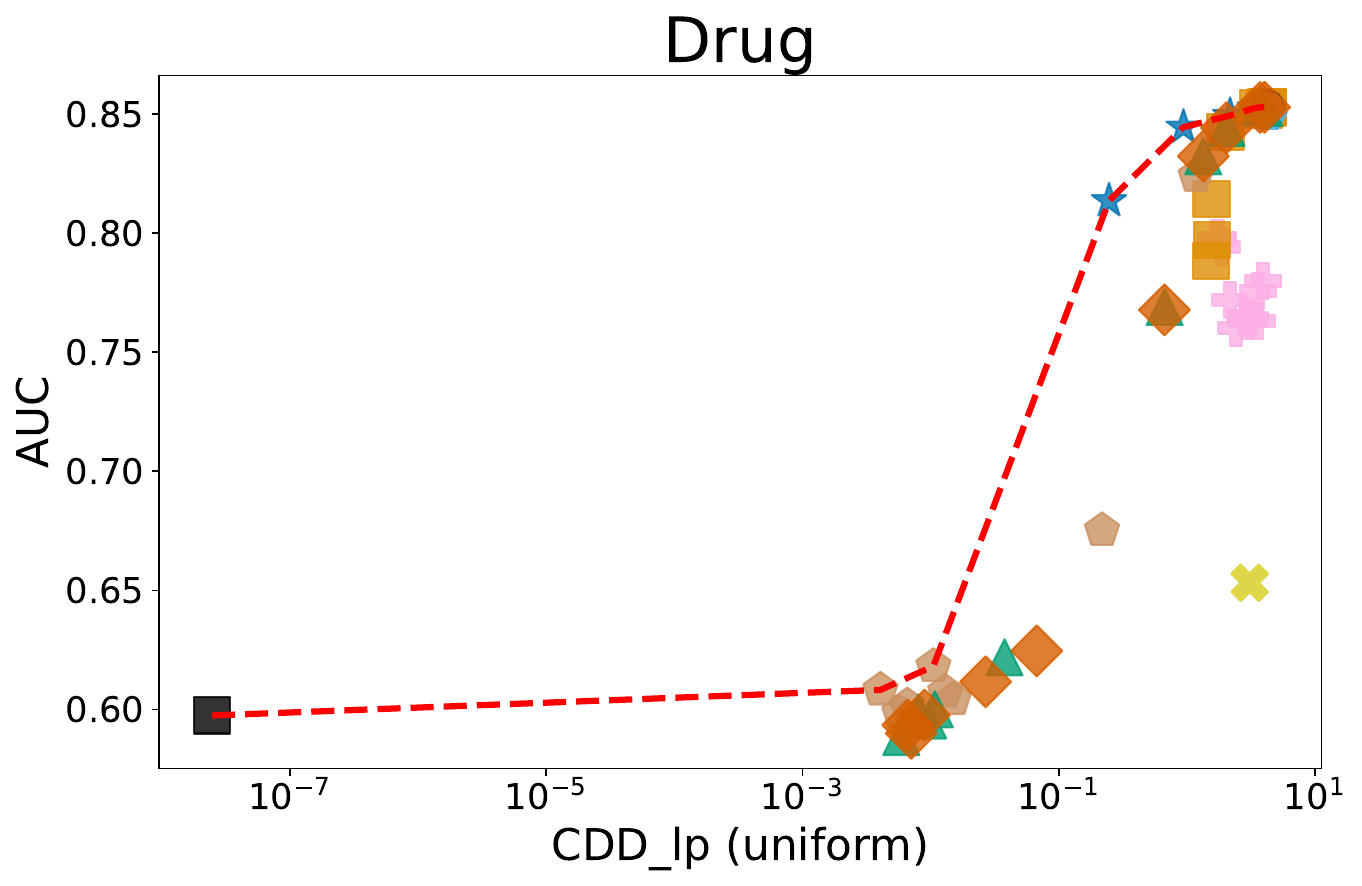}
    \end{subfigure}
    \begin{subfigure}{.225\textwidth}
        \centering
        \includegraphics[width=\linewidth]{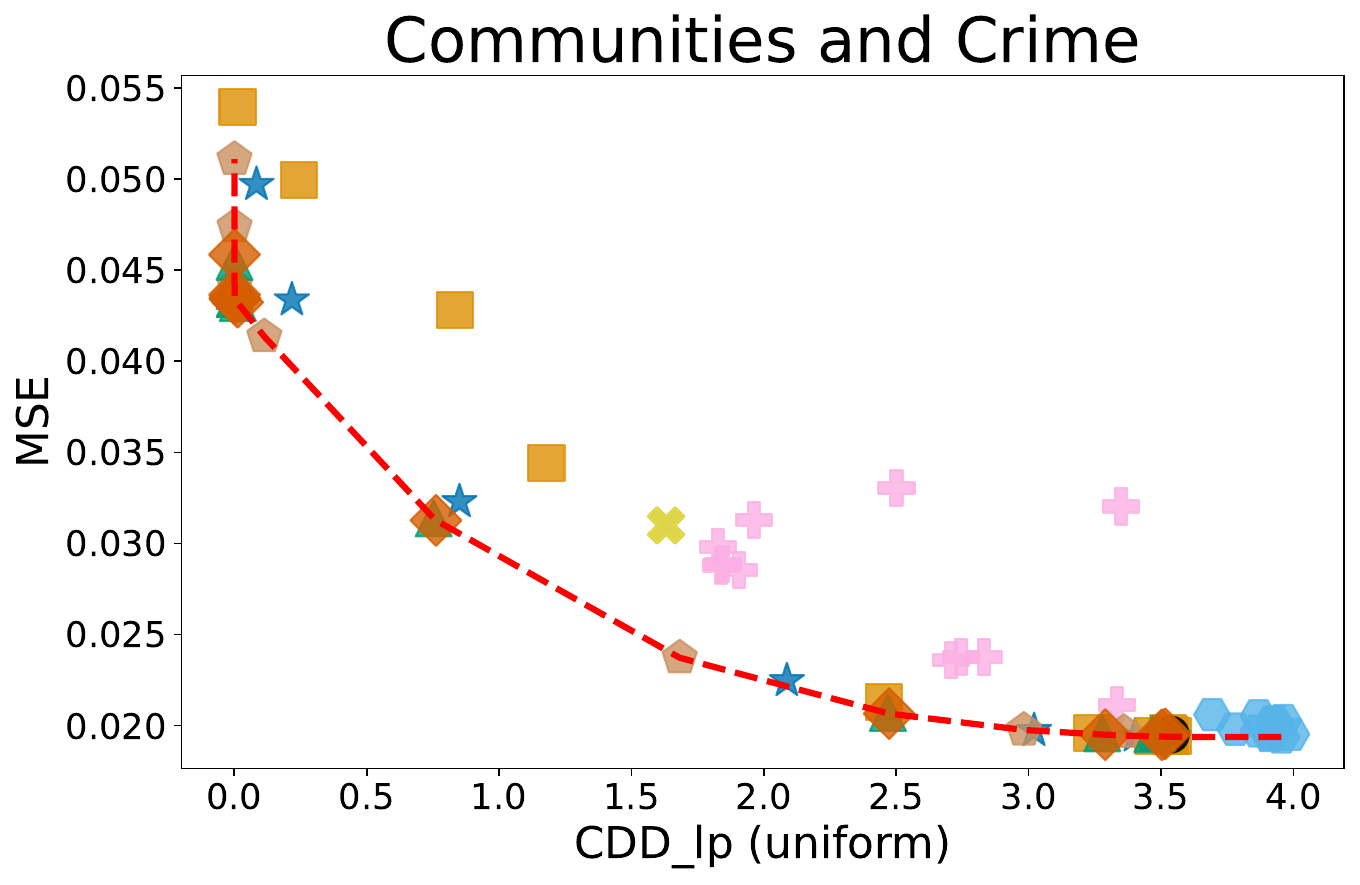}
    \end{subfigure}%
    \begin{subfigure}{0.31\textwidth}
        \centering
        \includegraphics[width=\linewidth]{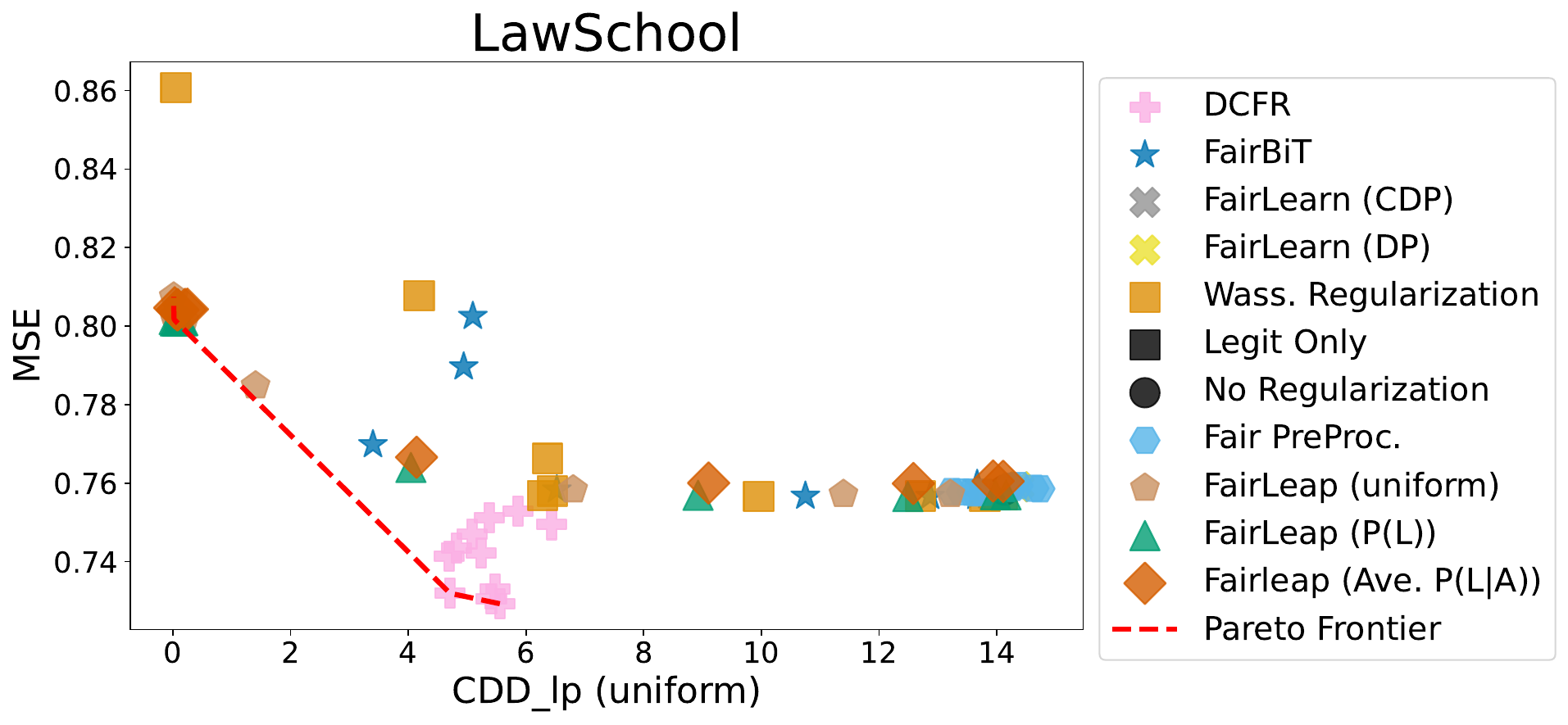}
    \end{subfigure}
    \caption{
    Fairness-predictive power trade-offs for \texttt{Adult}, \texttt{Drug}, \texttt{Communities and Crime}, and \texttt{LawSchool} datasets. The figures in the top row present the results when fairness is measured by \cddw{}. The results when fairness metric is \cddlp{} (with $\mb Q(L) = \mb U(L)$ and $p=1$) are presented in the bottom row. Predictive power (PP) is measured by AUC for Classification and MSE for regression. Results are averaged over 10 runs, with different values for the same methods due to different hyper-parameter settings; see Appendix~\ref{sec:app:exp_details} for details. These figures include \textit{legit-only} as a reference. \textit{Legit-only}, unsurprisingly, achieves full conditional parity but significantly suffers in terms of predictive performance.}
    \label{fig:empirical-comparison-results_including_legit}
\end{figure*}

\begin{figure*}[!t]
    \centering

    \begin{subfigure}{.215\textwidth}
        \centering
        \includegraphics[width=\linewidth]{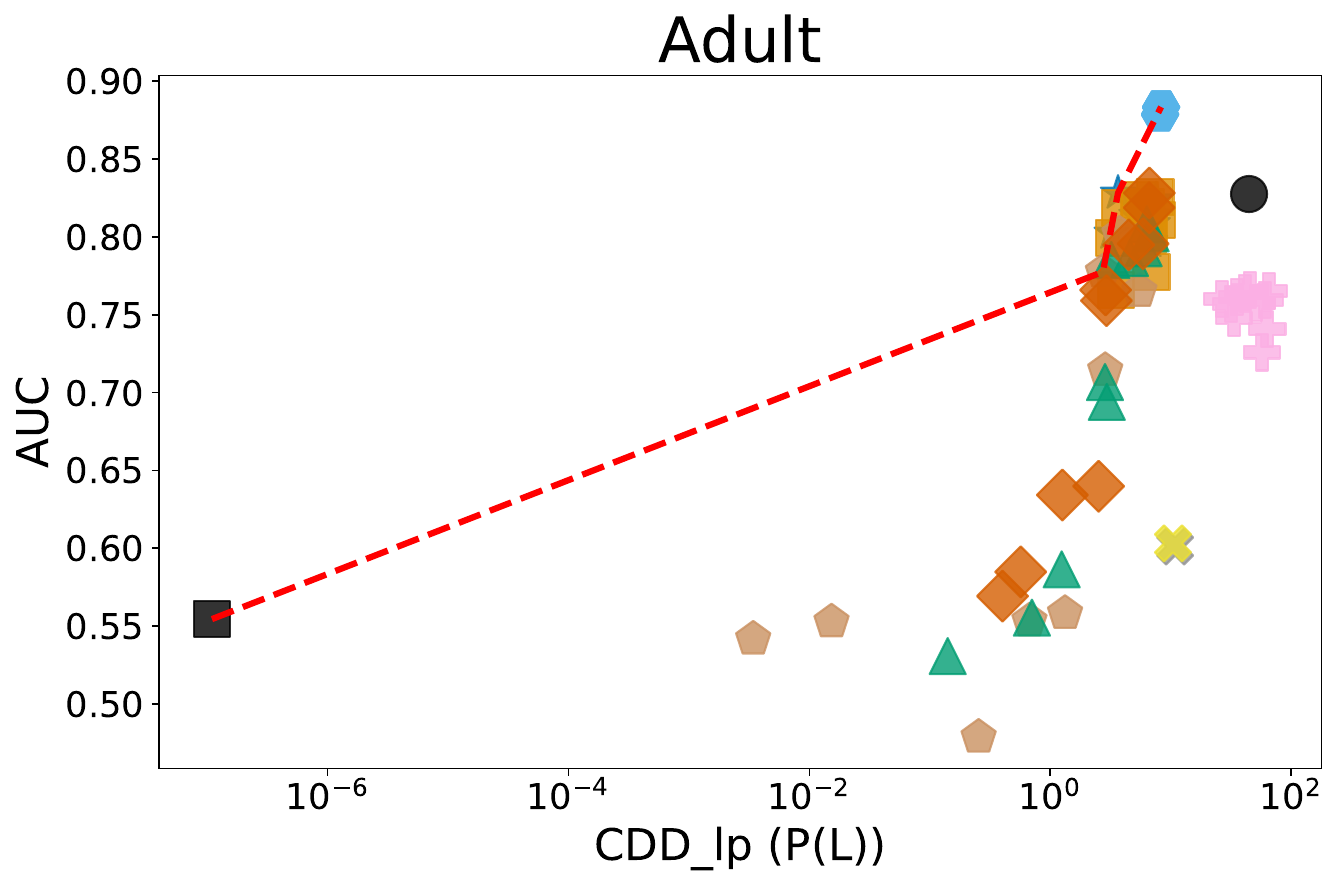}
    \end{subfigure}%
    \begin{subfigure}{0.215\textwidth}
        \centering
        \includegraphics[width=\linewidth]{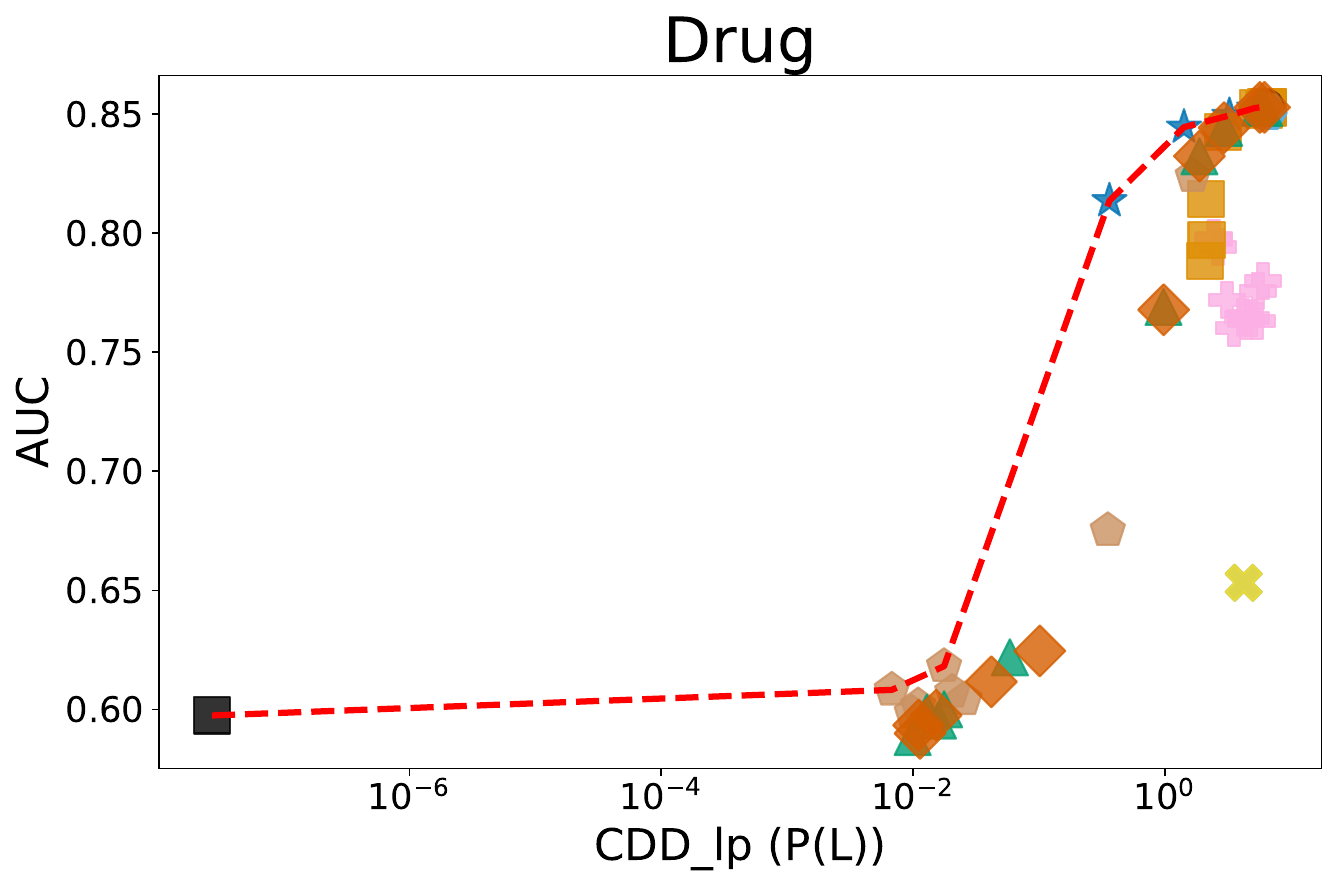}
    \end{subfigure}
    \begin{subfigure}{.225\textwidth}
        \centering
        \includegraphics[width=\linewidth]{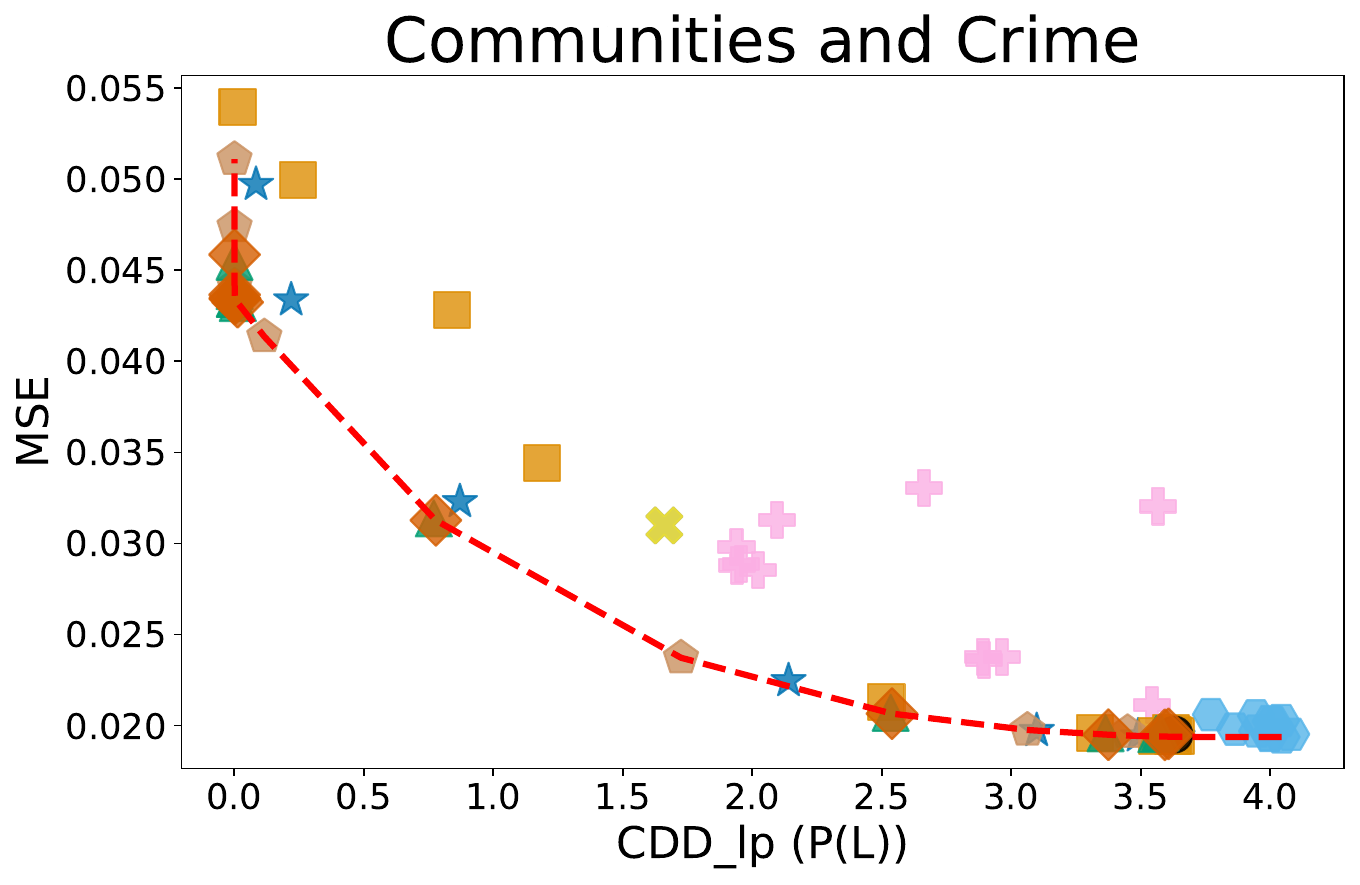}
    \end{subfigure}%
    \begin{subfigure}{0.31\textwidth}
        \centering
        \includegraphics[width=\linewidth]{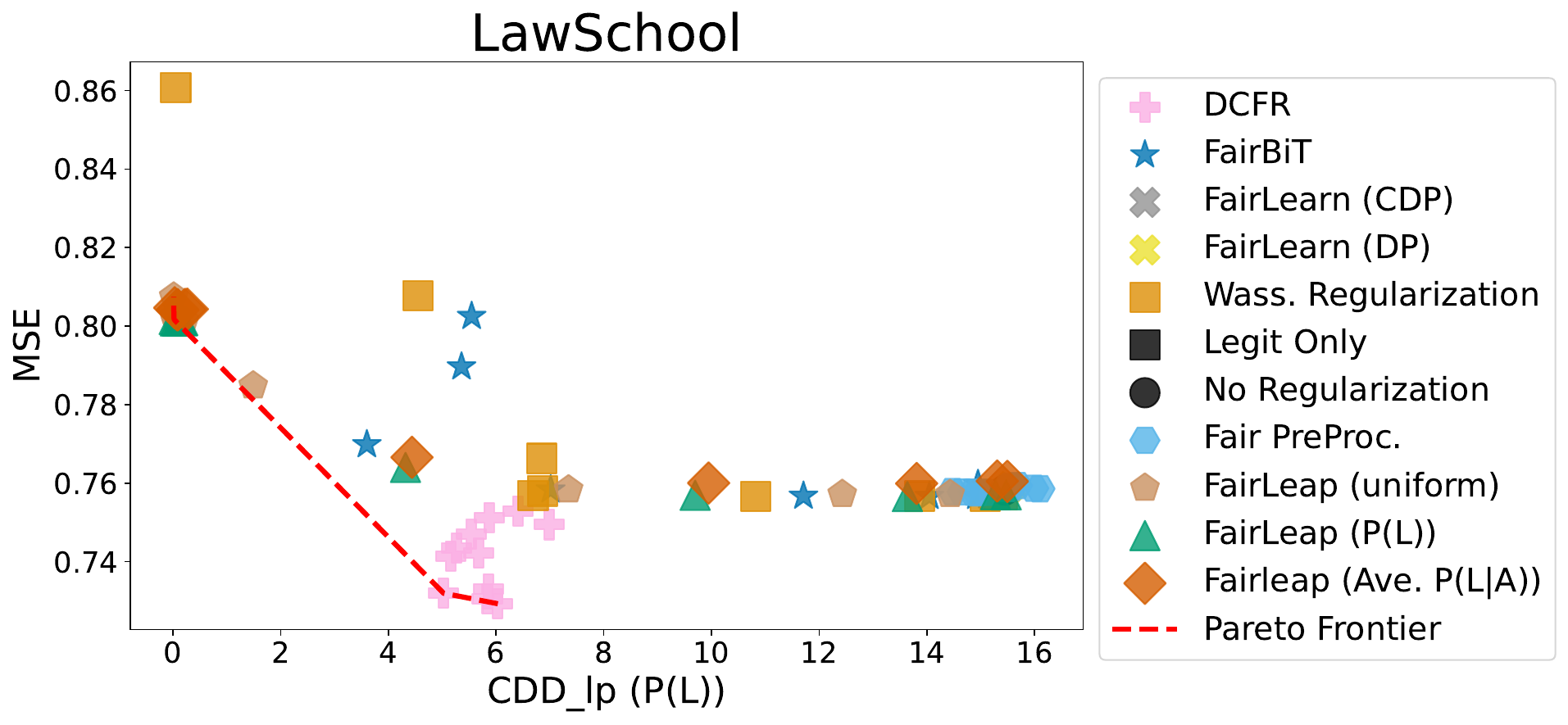}
    \end{subfigure}
     
    \begin{subfigure}{.215\textwidth}
        \centering
        \includegraphics[width=\linewidth]{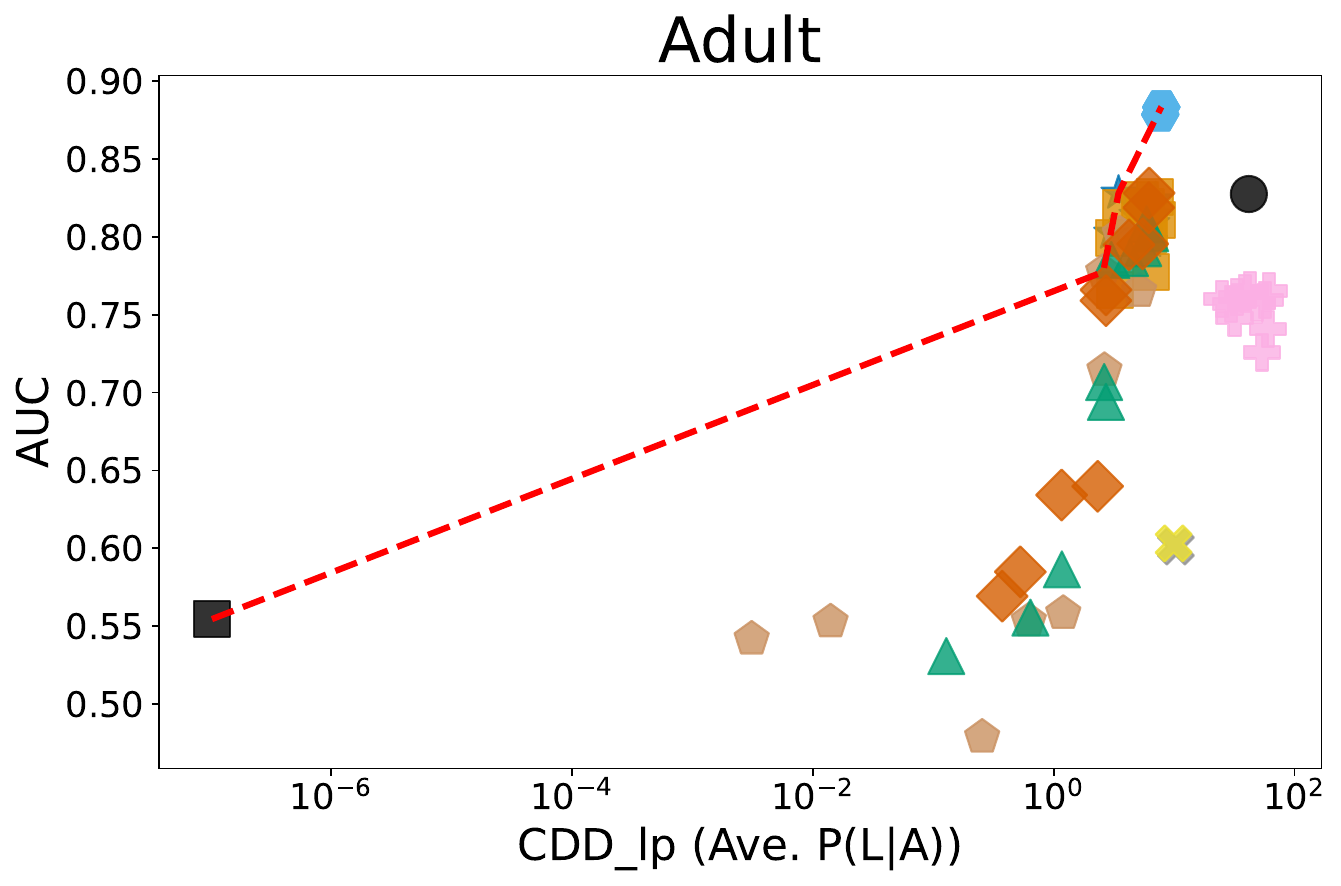}
    \end{subfigure}%
    \begin{subfigure}{0.215\textwidth}
        \centering
        \includegraphics[width=\linewidth]{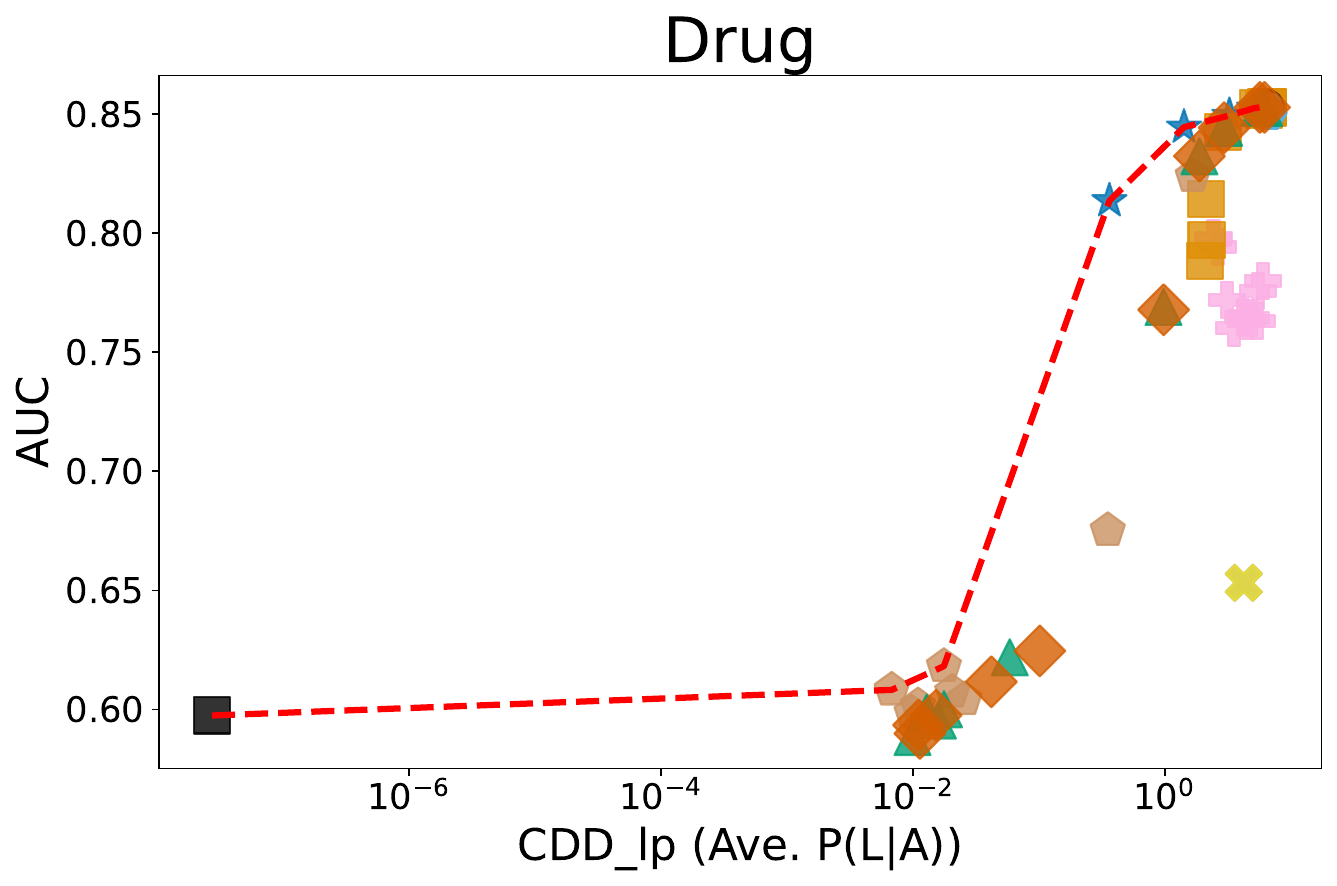}
    \end{subfigure}
    \begin{subfigure}{.225\textwidth}
        \centering
        \includegraphics[width=\linewidth]{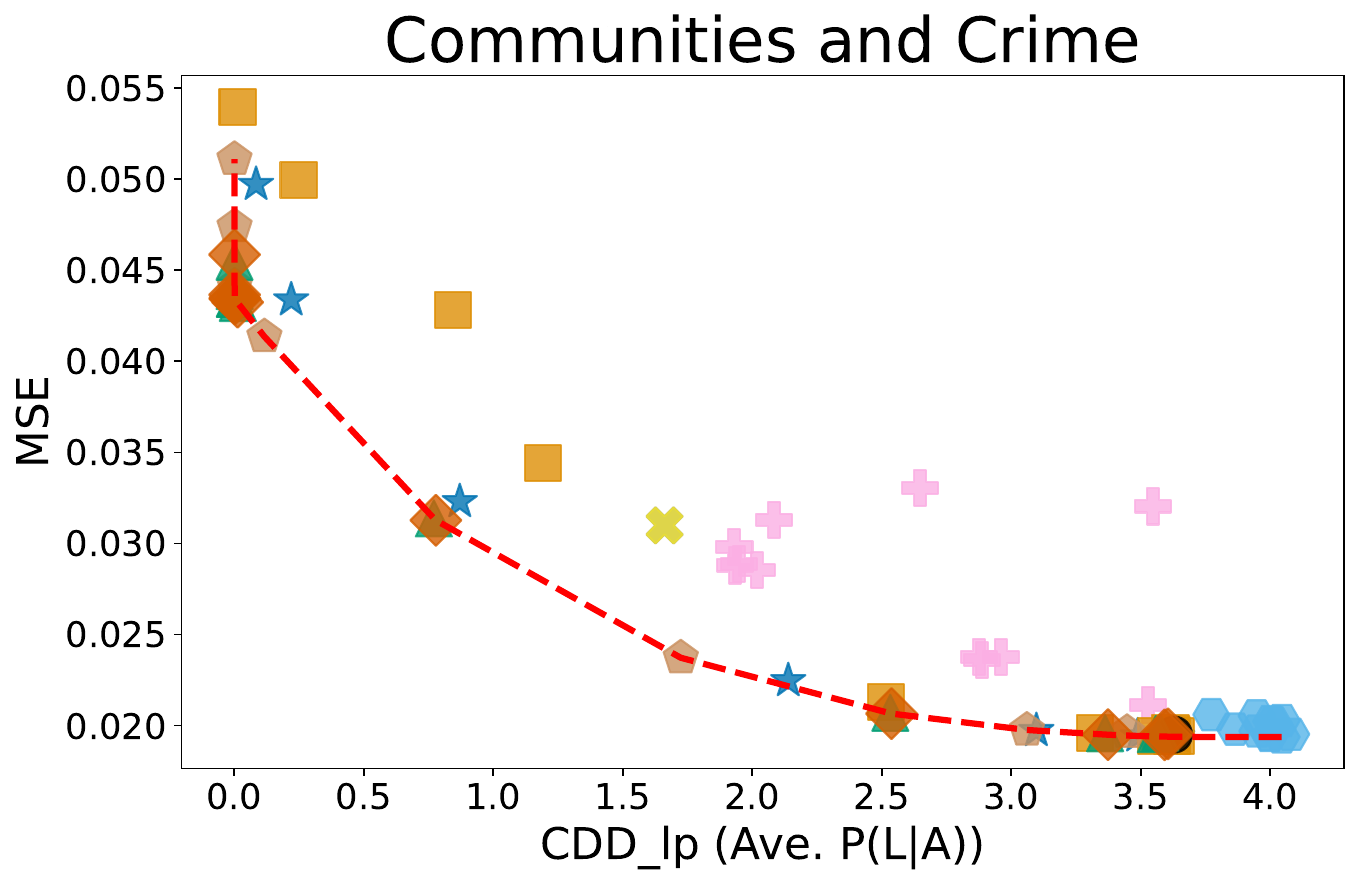}
    \end{subfigure}%
    \begin{subfigure}{0.31\textwidth}
        \centering
        \includegraphics[width=\linewidth]{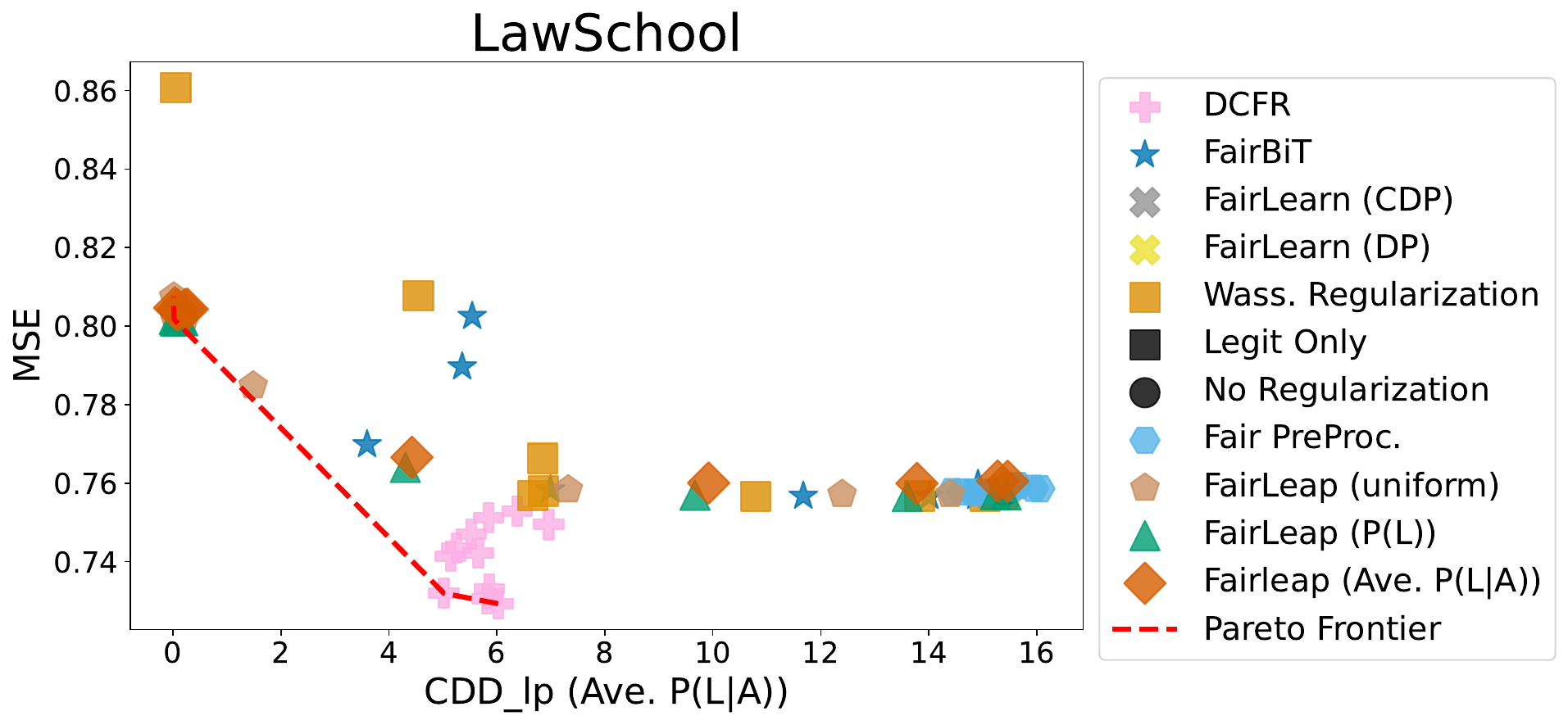}
    \end{subfigure}

    \begin{subfigure}{.215\textwidth}
        \centering
        \includegraphics[width=\linewidth]{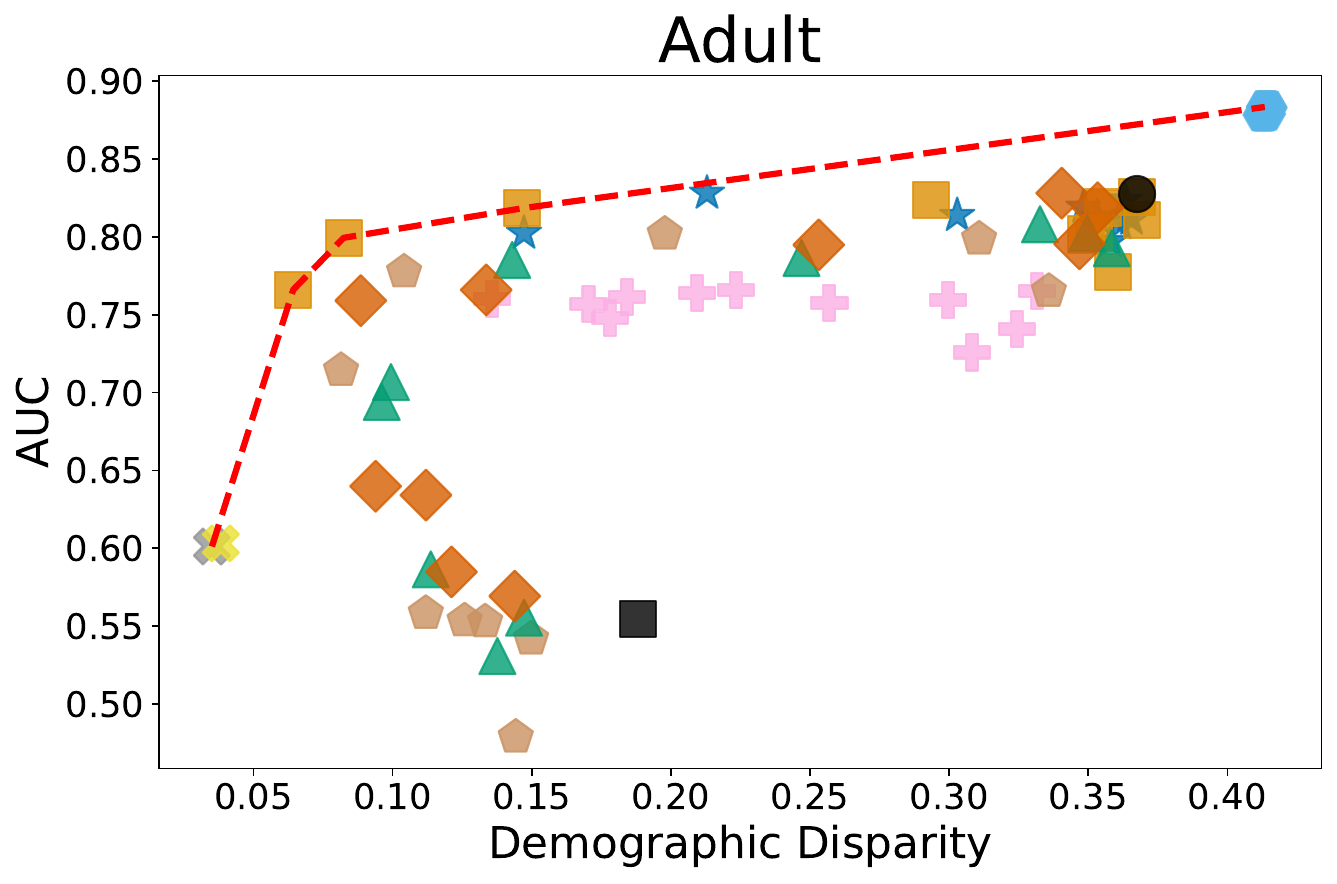}
    \end{subfigure}%
    \begin{subfigure}{0.215\textwidth}
        \centering
        \includegraphics[width=\linewidth]{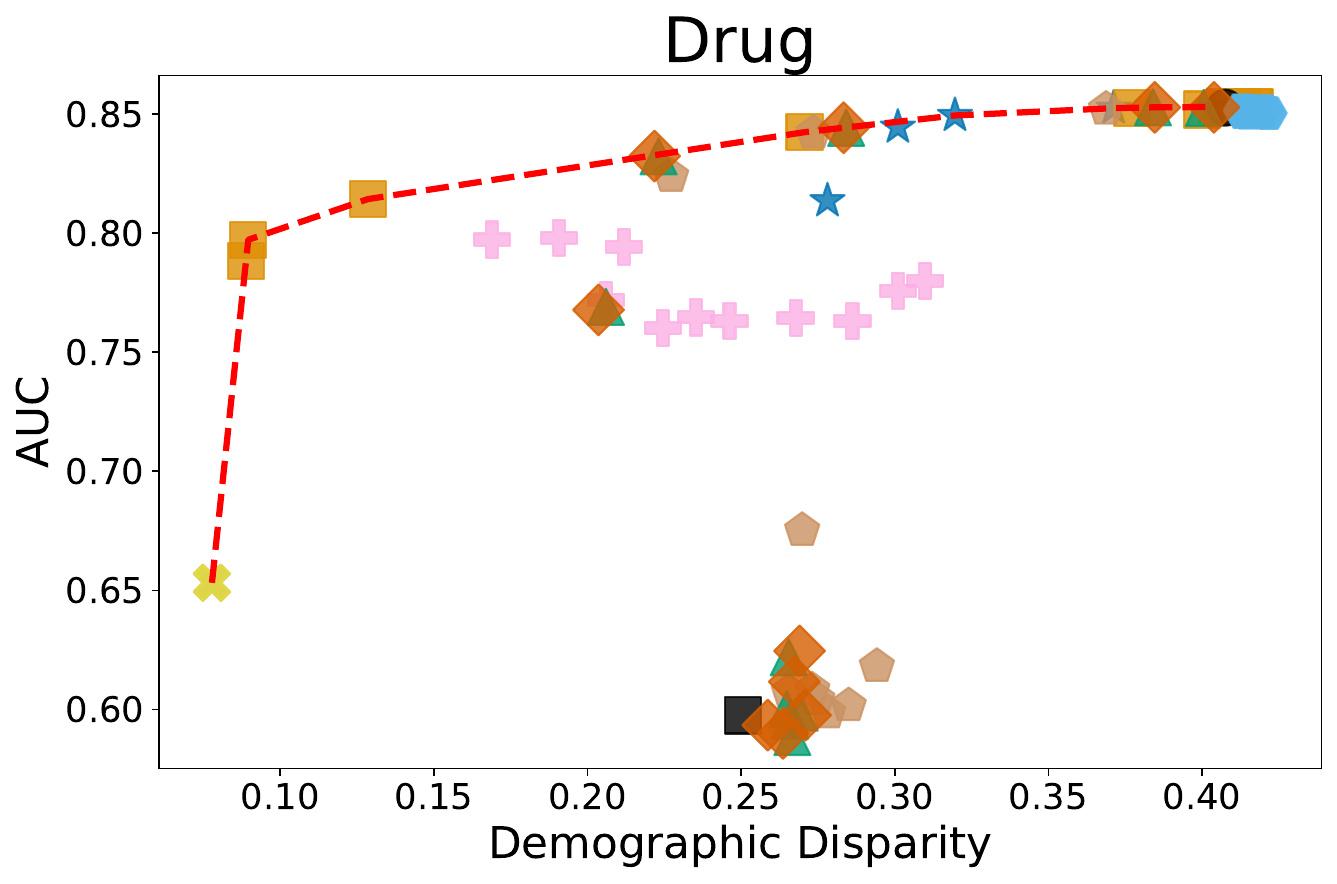}
    \end{subfigure}
    \begin{subfigure}{.225\textwidth}
        \centering
        \includegraphics[width=\linewidth]{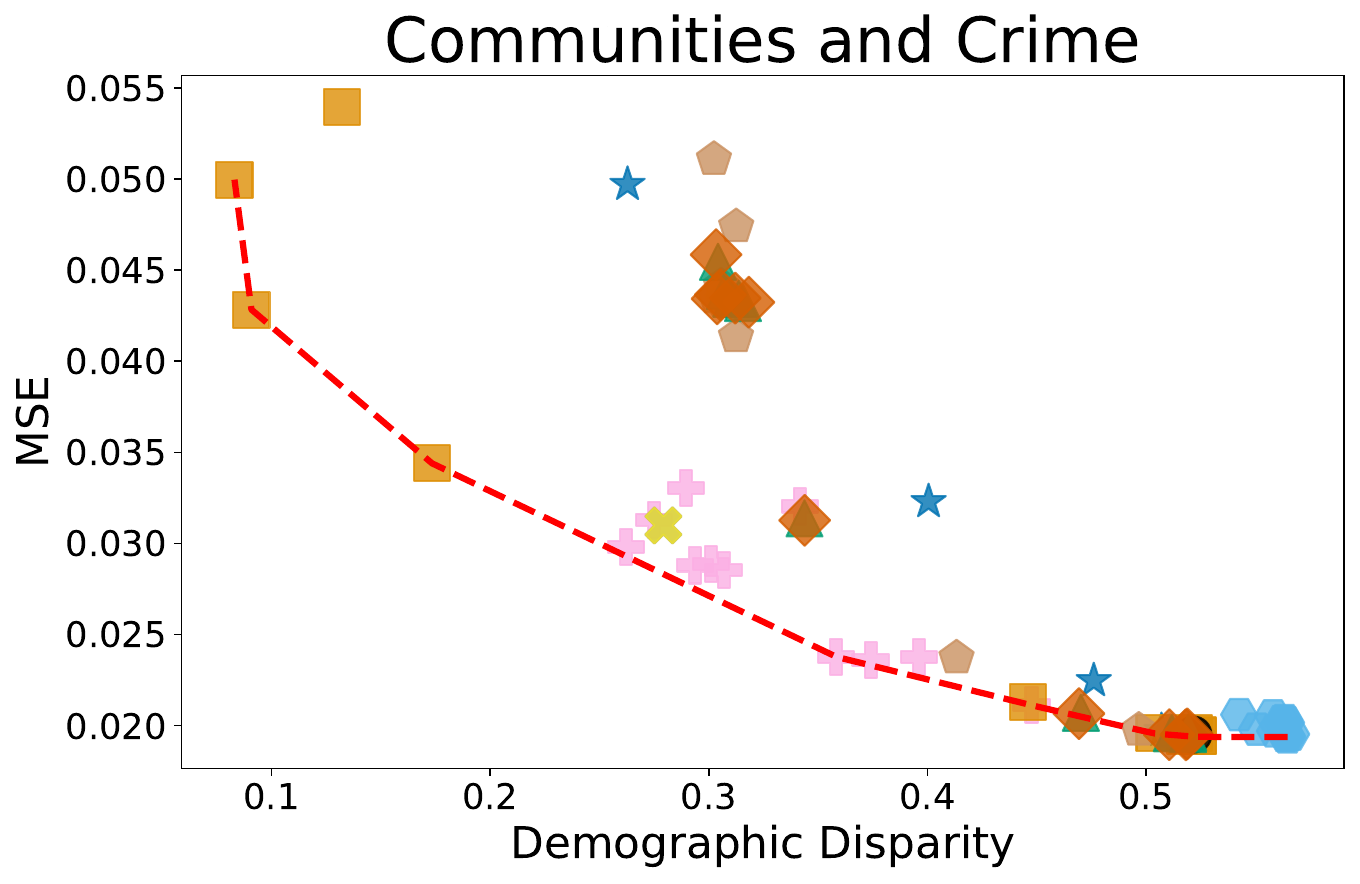}
    \end{subfigure}%
    \begin{subfigure}{0.31\textwidth}
        \centering
        \includegraphics[width=\linewidth]{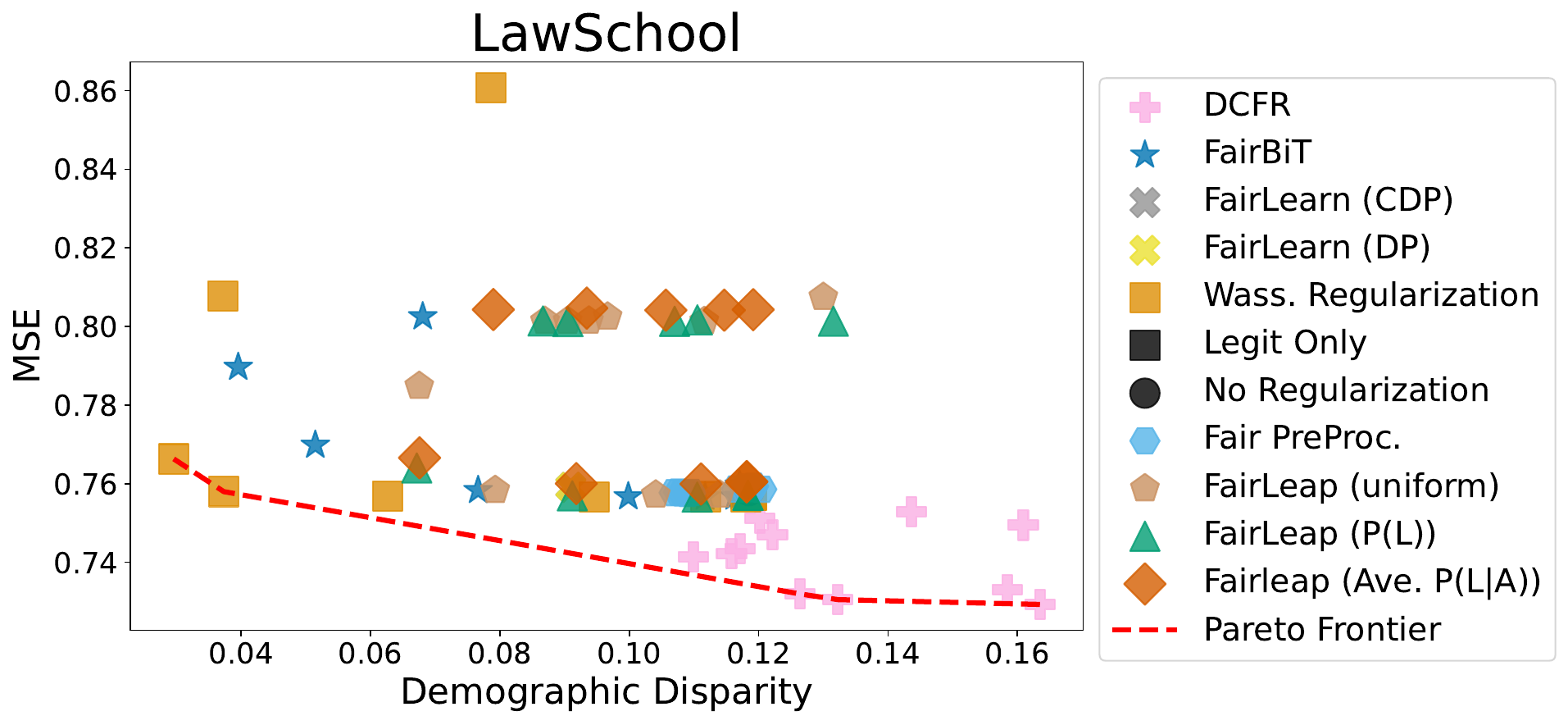}
    \end{subfigure}
    \caption{
    Fairness-performance trade-offs for \texttt{Adult}, \texttt{Drug}, \texttt{Communities and Crime}, and \texttt{LawSchool} datasets. In the top row, we show the results when fairness is measured by \cddlp{} with $\mb Q(L) = \P(L)$ and $p=1$. The figures in the middle row show the results when fairness is measured by \cddlp{} with $\mb Q(L) = \frac{\P(L|A=0)+\P(L|A=1)}{2}$ and $p=1$. The figures in the middle row show the results when fairness is measured by demographic disparity (DD), specifically using the 1-Wasserstein distance. Predictive power is measured by AUC for Classification and MSE for regression. Results are averaged over 10 runs, with different values for the same methods due to different hyper-parameter settings; see Appendix~\ref{sec:app:exp_details} for details. Overall, when fairness is measured by CDD metrics, \bcd\ and the variants of \fairlp{} are consistently among the highest performing, often providing better fairness-predictive power trade-offs than the other proposed methods. When fairness is measured by DD, unsurprisingly \textit{Wasserstein Reg.} performs the best. In this case, DCFR has good performance especially on regression datasets with some points on the frontier. \fairbit{} and the variants of \fairlp{} generally do not improve demographic disparity by much, although they still have many points on the Pareto frontier in classification.}
    \label{fig:empirical-comparison-results_appendix}
\end{figure*}



\begin{table*}[ht]
\centering
\resizebox{1.0\textwidth}{!}{%
\begin{tabular}{|c||c|c|c|c|c|}
\hline
Method & AUC  &  \begin{tabular}[c]{@{}c@{}}\cddwtext{}\\ (normalized) \end{tabular}  & \begin{tabular}[c]{@{}c@{}}\cddlptext{}  \\ (uniform) \end{tabular} & \begin{tabular}[c]{@{}c@{}}\cddlptext{}  \\ (Ave. $\P(L|A)$)  \end{tabular}& \begin{tabular}[c]{@{}c@{}}\cddlptext{}  \\ ($\P(L)$) \end{tabular} \\ \hline
DCFR, $\lambda_d:$0.10 & 0.74 $\pm$ 0.12 & 0.083 $\pm$ 0.029 & 561.824 $\pm$ 97.038 & 63.648 $\pm$ 11.645 & 59.971 $\pm$ 10.741 \\ \hline
DCFR, $\lambda_d:$0.25 & 0.76 $\pm$ 0.13 & 0.080 $\pm$ 0.029 & 533.695 $\pm$ 99.986 & 60.697 $\pm$ 12.268 & 57.169 $\pm$ 11.290 \\ \hline
DCFR, $\lambda_d:$0.50 & 0.76 $\pm$ 0.13 & 0.073 $\pm$ 0.030 & 452.759 $\pm$ 92.157 & 51.592 $\pm$ 11.407 & 48.545 $\pm$ 10.507 \\ \hline
DCFR, $\lambda_d:$0.75 & 0.77 $\pm$ 0.12 & 0.068 $\pm$ 0.029 & 398.082 $\pm$ 92.588 & 45.366 $\pm$ 11.173 & 42.859 $\pm$ 10.354 \\ \hline
DCFR, $\lambda_d:$1.00 & 0.76 $\pm$ 0.12 & 0.065 $\pm$ 0.027 & 363.625 $\pm$ 88.308 & 41.365 $\pm$ 10.721 & 39.031 $\pm$ 9.847 \\ \hline
DCFR, $\lambda_d:$1.50 & 0.75 $\pm$ 0.13 & 0.060 $\pm$ 0.027 & 297.195 $\pm$ 91.580 & 33.682 $\pm$ 10.967 & 31.769 $\pm$ 10.187 \\ \hline
DCFR, $\lambda_d:$2.00 & 0.76 $\pm$ 0.10 & 0.059 $\pm$ 0.030 & 278.981 $\pm$ 73.521 & 31.542 $\pm$ 8.868 & 29.892 $\pm$ 8.116 \\ \hline
DCFR, $\lambda_d:$5.00 & 0.76 $\pm$ 0.11 & 0.056 $\pm$ 0.035 & 239.655 $\pm$ 72.522 & 26.636 $\pm$ 8.521 & 25.056 $\pm$ 7.963 \\ \hline
DCFR, $\lambda_d:$10.00 & 0.76 $\pm$ 0.12 & 0.062 $\pm$ 0.035 & 318.831 $\pm$ 223.980 & 35.677 $\pm$ 25.750 & 33.521 $\pm$ 24.104 \\ \hline
DCFR, $\lambda_d:$15.00 & 0.77 $\pm$ 0.12 & 0.081 $\pm$ 0.022 & 574.569 $\pm$ 134.654 & 65.306 $\pm$ 16.085 & 61.141 $\pm$ 14.891 \\ \hline
DCFR, $\lambda_d:$20.00 & 0.73 $\pm$ 0.09 & 0.073 $\pm$ 0.031 & 511.350 $\pm$ 103.351 & 57.783 $\pm$ 11.787 & 54.074 $\pm$ 11.099 \\ \hline
FairBiT, $\lambda_b:$0.00 & 0.81 $\pm$ 0.01 & 0.058 $\pm$ 0.016 & 55.469 $\pm$ 7.486 & 6.168 $\pm$ 0.990 & 5.726 $\pm$ 0.870 \\ \hline
FairBiT, $\lambda_b:$0.00 & 0.81 $\pm$ 0.03 & 0.067 $\pm$ 0.022 & 61.142 $\pm$ 13.468 & 6.867 $\pm$ 1.717 & 6.369 $\pm$ 1.532 \\ \hline
FairBiT, $\lambda_b:$0.00 & 0.81 $\pm$ 0.05 & 0.067 $\pm$ 0.017 & 62.883 $\pm$ 9.146 & 7.109 $\pm$ 1.211 & 6.589 $\pm$ 1.096 \\ \hline
FairBiT, $\lambda_b:$0.00 & 0.82 $\pm$ 0.02 & 0.076 $\pm$ 0.021 & 67.350 $\pm$ 9.651 & 7.573 $\pm$ 1.386 & 7.037 $\pm$ 1.225 \\ \hline
FairBiT, $\lambda_b:$0.02 & 0.80 $\pm$ 0.05 & 0.063 $\pm$ 0.021 & 60.261 $\pm$ 10.298 & 6.749 $\pm$ 1.354 & 6.286 $\pm$ 1.198 \\ \hline
FairBiT, $\lambda_b:$0.06 & 0.81 $\pm$ 0.06 & 0.069 $\pm$ 0.018 & 63.161 $\pm$ 7.356 & 7.079 $\pm$ 1.109 & 6.576 $\pm$ 0.910 \\ \hline
FairBiT, $\lambda_b:$0.22 & 0.82 $\pm$ 0.04 & 0.068 $\pm$ 0.018 & 61.223 $\pm$ 9.513 & 6.841 $\pm$ 1.230 & 6.391 $\pm$ 1.125 \\ \hline
FairBiT, $\lambda_b:$0.77 & 0.81 $\pm$ 0.06 & 0.053 $\pm$ 0.019 & 46.851 $\pm$ 8.784 & 5.195 $\pm$ 1.199 & 4.878 $\pm$ 1.011 \\ \hline
FairBiT, $\lambda_b:$2.78 & 0.83 $\pm$ 0.02 & 0.041 $\pm$ 0.012 & 33.622 $\pm$ 4.609 & 3.655 $\pm$ 0.628 & 3.457 $\pm$ 0.542 \\ \hline
FairBiT, $\lambda_b:$10.00 & 0.80 $\pm$ 0.02 & 0.035 $\pm$ 0.015 & 29.978 $\pm$ 3.358 & 3.239 $\pm$ 0.398 & 3.011 $\pm$ 0.355 \\ \hline
FairLearn (CDP) & 0.60 $\pm$ 0.04 & 0.068 $\pm$ 0.038 & 95.417 $\pm$ 85.756 & 10.887 $\pm$ 9.986 & 10.340 $\pm$ 9.487 \\ \hline
FairLearn (DP) & 0.60 $\pm$ 0.04 & 0.070 $\pm$ 0.040 & 91.254 $\pm$ 89.047 & 10.468 $\pm$ 10.417 & 9.919 $\pm$ 9.871 \\ \hline
Wass. Regularization, $\lambda_w:$0.00 & 0.82 $\pm$ 0.01 & 0.056 $\pm$ 0.015 & 55.296 $\pm$ 10.003 & 6.249 $\pm$ 1.330 & 5.786 $\pm$ 1.141 \\ \hline
Wass. Regularization, $\lambda_w:$0.00 & 0.80 $\pm$ 0.05 & 0.062 $\pm$ 0.022 & 58.566 $\pm$ 12.635 & 6.611 $\pm$ 1.669 & 6.135 $\pm$ 1.453 \\ \hline
Wass. Regularization, $\lambda_w:$0.00 & 0.83 $\pm$ 0.02 & 0.071 $\pm$ 0.023 & 66.117 $\pm$ 12.194 & 7.477 $\pm$ 1.608 & 6.944 $\pm$ 1.432 \\ \hline
Wass. Regularization, $\lambda_w:$0.00 & 0.81 $\pm$ 0.05 & 0.077 $\pm$ 0.028 & 68.589 $\pm$ 17.184 & 7.735 $\pm$ 2.068 & 7.182 $\pm$ 1.882 \\ \hline
Wass. Regularization, $\lambda_w:$0.02 & 0.78 $\pm$ 0.08 & 0.065 $\pm$ 0.016 & 61.582 $\pm$ 11.624 & 6.929 $\pm$ 1.359 & 6.459 $\pm$ 1.254 \\ \hline
Wass. Regularization, $\lambda_w:$0.06 & 0.80 $\pm$ 0.06 & 0.061 $\pm$ 0.022 & 57.394 $\pm$ 9.826 & 6.481 $\pm$ 1.250 & 6.046 $\pm$ 1.073 \\ \hline
Wass. Regularization, $\lambda_w:$0.22 & 0.82 $\pm$ 0.04 & 0.057 $\pm$ 0.022 & 49.628 $\pm$ 11.510 & 5.516 $\pm$ 1.428 & 5.211 $\pm$ 1.281 \\ \hline
Wass. Regularization, $\lambda_w:$0.77 & 0.82 $\pm$ 0.01 & 0.045 $\pm$ 0.016 & 34.830 $\pm$ 4.871 & 3.834 $\pm$ 0.676 & 3.605 $\pm$ 0.609 \\ \hline
Wass. Regularization, $\lambda_w:$2.78 & 0.80 $\pm$ 0.02 & 0.043 $\pm$ 0.012 & 31.387 $\pm$ 3.768 & 3.391 $\pm$ 0.489 & 3.171 $\pm$ 0.434 \\ \hline
Wass. Regularization, $\lambda_w:$10.00 & 0.77 $\pm$ 0.04 & 0.045 $\pm$ 0.012 & 32.547 $\pm$ 3.389 & 3.493 $\pm$ 0.458 & 3.249 $\pm$ 0.395 \\ \hline
Legit Only & 0.55 $\pm$ 0.02 & -0.004 $\pm$ 0.014 & 0.000 $\pm$ 0.000 & 0.000 $\pm$ 0.000 & 0.000 $\pm$ 0.000 \\ \hline
No Regularization & 0.83 $\pm$ 0.03 & 0.080 $\pm$ 0.034 & 39.842 $\pm$ 35.694 & 44.793 $\pm$ 40.119 & 41.785 $\pm$ 37.418 \\ \hline
Fair PreProc., $\lambda_r:$0.00 & 0.88 $\pm$ 0.01 & 0.090 $\pm$ 0.014 & 7.445 $\pm$ 0.560 & 8.352 $\pm$ 0.734 & 7.850 $\pm$ 0.641 \\ \hline
Fair PreProc., $\lambda_r:$0.10 & 0.88 $\pm$ 0.01 & 0.089 $\pm$ 0.014 & 7.433 $\pm$ 0.570 & 8.339 $\pm$ 0.754 & 7.836 $\pm$ 0.658 \\ \hline
Fair PreProc., $\lambda_r:$0.20 & 0.88 $\pm$ 0.01 & 0.089 $\pm$ 0.014 & 7.431 $\pm$ 0.535 & 8.337 $\pm$ 0.740 & 7.834 $\pm$ 0.636 \\ \hline
Fair PreProc., $\lambda_r:$0.30 & 0.88 $\pm$ 0.01 & 0.089 $\pm$ 0.014 & 7.422 $\pm$ 0.478 & 8.327 $\pm$ 0.687 & 7.827 $\pm$ 0.587 \\ \hline
Fair PreProc., $\lambda_r:$0.40 & 0.88 $\pm$ 0.01 & 0.089 $\pm$ 0.014 & 7.439 $\pm$ 0.493 & 8.346 $\pm$ 0.711 & 7.842 $\pm$ 0.603 \\ \hline
Fair PreProc., $\lambda_r:$0.50 & 0.88 $\pm$ 0.01 & 0.088 $\pm$ 0.014 & 7.415 $\pm$ 0.506 & 8.319 $\pm$ 0.744 & 7.816 $\pm$ 0.620 \\ \hline
Fair PreProc., $\lambda_r:$0.60 & 0.88 $\pm$ 0.01 & 0.087 $\pm$ 0.012 & 7.301 $\pm$ 0.681 & 8.201 $\pm$ 0.928 & 7.703 $\pm$ 0.809 \\ \hline
Fair PreProc., $\lambda_r:$0.70 & 0.88 $\pm$ 0.01 & 0.087 $\pm$ 0.013 & 7.276 $\pm$ 0.696 & 8.174 $\pm$ 0.956 & 7.678 $\pm$ 0.834 \\ \hline
Fair PreProc., $\lambda_r:$0.80 & 0.88 $\pm$ 0.01 & 0.087 $\pm$ 0.013 & 7.282 $\pm$ 0.717 & 8.184 $\pm$ 0.984 & 7.687 $\pm$ 0.854 \\ \hline
Fair PreProc., $\lambda_r:$0.90 & 0.88 $\pm$ 0.01 & 0.087 $\pm$ 0.014 & 7.274 $\pm$ 0.834 & 8.173 $\pm$ 1.116 & 7.674 $\pm$ 0.974 \\ \hline
Fair PreProc., $\lambda_r:$1.00 & 0.88 $\pm$ 0.01 & 0.086 $\pm$ 0.014 & 7.257 $\pm$ 0.840 & 8.154 $\pm$ 1.122 & 7.655 $\pm$ 0.985 \\ \hline
\end{tabular}%
}
\caption{Mean and one standard deviation results for all methods-hyperparameter combinations for all methods except \fairlp, for the \texttt{Adult} dataset across 10 separate runs. }
\label{tab:adult_rest}
\end{table*}

\begin{table*}[ht]
\centering
\resizebox{1.0\textwidth}{!}{%
\begin{tabular}{|c||c|c|c|c|c|}
\hline
Method & AUC  &  \begin{tabular}[c]{@{}c@{}}\cddwtext{}\\ (normalized) \end{tabular}  & \begin{tabular}[c]{@{}c@{}}\cddlptext{}  \\ (uniform) \end{tabular} & \begin{tabular}[c]{@{}c@{}}\cddlptext{}  \\ (Ave. $\P(L|A)$)  \end{tabular}& \begin{tabular}[c]{@{}c@{}}\cddlptext{}  \\ ($\P(L)$) \end{tabular} \\ \hline
FairLeap (uniform), $\lambda_w:$0.00 & 0.77 $\pm$ 0.08 & 0.051 $\pm$ 0.018 & 49.145 $\pm$ 13.342 & 5.508 $\pm$ 1.748 & 5.127 $\pm$ 1.584 \\ \hline
FairLeap (uniform), $\lambda_w:$0.00 & 0.80 $\pm$ 0.05 & 0.056 $\pm$ 0.024 & 51.151 $\pm$ 10.382 & 5.711 $\pm$ 1.349 & 5.318 $\pm$ 1.229 \\ \hline
FairLeap (uniform), $\lambda_w:$0.00 & 0.80 $\pm$ 0.03 & 0.044 $\pm$ 0.017 & 35.180 $\pm$ 7.448 & 3.764 $\pm$ 0.963 & 3.546 $\pm$ 0.833 \\ \hline
FairLeap (uniform), $\lambda_w:$0.01 & 0.78 $\pm$ 0.06 & 0.034 $\pm$ 0.015 & 26.346 $\pm$ 5.901 & 2.739 $\pm$ 0.606 & 2.579 $\pm$ 0.607 \\ \hline
FairLeap (uniform), $\lambda_w:$0.05 & 0.71 $\pm$ 0.04 & 0.037 $\pm$ 0.012 & 26.331 $\pm$ 6.101 & 2.865 $\pm$ 0.756 & 2.640 $\pm$ 0.676 \\ \hline
FairLeap (uniform), $\lambda_w:$0.22 & 0.56 $\pm$ 0.12 & 0.014 $\pm$ 0.027 & 11.964 $\pm$ 10.538 & 1.328 $\pm$ 1.195 & 1.202 $\pm$ 1.080 \\ \hline
FairLeap (uniform), $\lambda_w:$1.00 & 0.55 $\pm$ 0.13 & 0.008 $\pm$ 0.027 & 6.054 $\pm$ 9.312 & 0.672 $\pm$ 1.054 & 0.621 $\pm$ 0.980 \\ \hline
FairLeap (uniform), $\lambda_w:$4.64 & 0.55 $\pm$ 0.20 & -0.004 $\pm$ 0.014 & 0.148 $\pm$ 0.112 & 0.015 $\pm$ 0.011 & 0.014 $\pm$ 0.010 \\ \hline
FairLeap (uniform), $\lambda_w:$21.54 & 0.54 $\pm$ 0.21 & -0.004 $\pm$ 0.014 & 0.033 $\pm$ 0.024 & 0.003 $\pm$ 0.002 & 0.003 $\pm$ 0.002 \\ \hline
FairLeap (uniform), $\lambda_w:$100.00 & 0.48 $\pm$ 0.18 & 0.004 $\pm$ 0.012 & 2.580 $\pm$ 5.533 & 0.254 $\pm$ 0.556 & 0.255 $\pm$ 0.560 \\ \hline
FairLeap (P(L)), $\lambda_w:$0.00 & 0.79 $\pm$ 0.04 & 0.055 $\pm$ 0.017 & 53.119 $\pm$ 9.445 & 5.962 $\pm$ 1.330 & 5.537 $\pm$ 1.181 \\ \hline
FairLeap (P(L)), $\lambda_w:$0.00 & 0.80 $\pm$ 0.05 & 0.066 $\pm$ 0.018 & 60.711 $\pm$ 11.942 & 6.823 $\pm$ 1.558 & 6.334 $\pm$ 1.416 \\ \hline
FairLeap (P(L)), $\lambda_w:$0.00 & 0.81 $\pm$ 0.05 & 0.062 $\pm$ 0.018 & 56.668 $\pm$ 9.152 & 6.349 $\pm$ 1.309 & 5.919 $\pm$ 1.150 \\ \hline
FairLeap (P(L)), $\lambda_w:$0.01 & 0.79 $\pm$ 0.07 & 0.049 $\pm$ 0.025 & 41.535 $\pm$ 11.905 & 4.595 $\pm$ 1.403 & 4.307 $\pm$ 1.302 \\ \hline
FairLeap (P(L)), $\lambda_w:$0.05 & 0.79 $\pm$ 0.06 & 0.041 $\pm$ 0.016 & 30.153 $\pm$ 4.635 & 3.219 $\pm$ 0.611 & 3.012 $\pm$ 0.524 \\ \hline
FairLeap (P(L)), $\lambda_w:$0.22 & 0.71 $\pm$ 0.09 & 0.033 $\pm$ 0.015 & 26.212 $\pm$ 7.478 & 2.851 $\pm$ 0.897 & 2.627 $\pm$ 0.809 \\ \hline
FairLeap (P(L)), $\lambda_w:$1.00 & 0.69 $\pm$ 0.05 & 0.040 $\pm$ 0.017 & 27.009 $\pm$ 7.529 & 2.950 $\pm$ 0.852 & 2.717 $\pm$ 0.770 \\ \hline
FairLeap (P(L)), $\lambda_w:$4.64 & 0.59 $\pm$ 0.12 & 0.019 $\pm$ 0.021 & 11.460 $\pm$ 8.641 & 1.246 $\pm$ 0.963 & 1.170 $\pm$ 0.910 \\ \hline
FairLeap (P(L)), $\lambda_w:$21.54 & 0.56 $\pm$ 0.16 & 0.007 $\pm$ 0.029 & 5.971 $\pm$ 12.050 & 0.705 $\pm$ 1.434 & 0.641 $\pm$ 1.297 \\ \hline
FairLeap (P(L)), $\lambda_w:$100.00 & 0.53 $\pm$ 0.15 & -0.001 $\pm$ 0.019 & 1.337 $\pm$ 4.006 & 0.141 $\pm$ 0.425 & 0.128 $\pm$ 0.387 \\ \hline
Fairleap (Ave. P(L$|$A)), $\lambda_w:$0.00 & 0.80 $\pm$ 0.05 & 0.055 $\pm$ 0.019 & 53.008 $\pm$ 13.259 & 5.913 $\pm$ 1.707 & 5.490 $\pm$ 1.540 \\ \hline
Fairleap (Ave. P(L$|$A)), $\lambda_w:$0.00 & 0.82 $\pm$ 0.01 & 0.064 $\pm$ 0.020 & 59.219 $\pm$ 9.992 & 6.634 $\pm$ 1.402 & 6.167 $\pm$ 1.258 \\ \hline
Fairleap (Ave. P(L$|$A)), $\lambda_w:$0.00 & 0.83 $\pm$ 0.02 & 0.065 $\pm$ 0.019 & 59.554 $\pm$ 8.023 & 6.659 $\pm$ 1.163 & 6.205 $\pm$ 1.017 \\ \hline
Fairleap (Ave. P(L$|$A)), $\lambda_w:$0.01 & 0.79 $\pm$ 0.05 & 0.047 $\pm$ 0.023 & 40.801 $\pm$ 9.111 & 4.484 $\pm$ 1.107 & 4.221 $\pm$ 1.032 \\ \hline
Fairleap (Ave. P(L$|$A)), $\lambda_w:$0.05 & 0.77 $\pm$ 0.07 & 0.036 $\pm$ 0.014 & 27.195 $\pm$ 6.888 & 2.883 $\pm$ 0.761 & 2.720 $\pm$ 0.719 \\ \hline
Fairleap (Ave. P(L$|$A)), $\lambda_w:$0.22 & 0.76 $\pm$ 0.04 & 0.038 $\pm$ 0.015 & 27.067 $\pm$ 5.693 & 2.929 $\pm$ 0.767 & 2.715 $\pm$ 0.677 \\ \hline
Fairleap (Ave. P(L$|$A)), $\lambda_w:$1.00 & 0.64 $\pm$ 0.11 & 0.036 $\pm$ 0.017 & 23.157 $\pm$ 5.814 & 2.522 $\pm$ 0.742 & 2.322 $\pm$ 0.640 \\ \hline
Fairleap (Ave. P(L$|$A)), $\lambda_w:$4.64 & 0.63 $\pm$ 0.10 & 0.018 $\pm$ 0.022 & 11.498 $\pm$ 10.472 & 1.261 $\pm$ 1.228 & 1.165 $\pm$ 1.105 \\ \hline
Fairleap (Ave. P(L$|$A)), $\lambda_w:$21.54 & 0.57 $\pm$ 0.16 & 0.003 $\pm$ 0.029 & 3.356 $\pm$ 9.768 & 0.402 $\pm$ 1.184 & 0.374 $\pm$ 1.103 \\ \hline
Fairleap (Ave. P(L$|$A)), $\lambda_w:$100 & 0.58 $\pm$ 0.19 & 0.004 $\pm$ 0.025 & 5.057 $\pm$ 11.993 & 0.569 $\pm$ 1.372 & 0.529 $\pm$ 1.283 \\ \hline
\end{tabular}%
}
\caption{Mean and one standard deviation results for all methods-hyperparameter combinations for the 3 variants of \fairlp, for the \texttt{Adult} dataset across 10 separate runs. }
\label{tab:adult_fairlp}
\end{table*}


\begin{table*}[ht]
\centering
\resizebox{1.0\textwidth}{!}{%
\begin{tabular}{|c||c|c|c|c|c|}
\hline
Method & AUC  &  \begin{tabular}[c]{@{}c@{}}\cddwtext{}\\ (normalized) \end{tabular}  & \begin{tabular}[c]{@{}c@{}}\cddlptext{}  \\ (uniform) \end{tabular} & \begin{tabular}[c]{@{}c@{}}\cddlptext{}  \\ (Ave. $\P(L|A)$)  \end{tabular}& \begin{tabular}[c]{@{}c@{}}\cddlptext{}  \\ ($\P(L)$) \end{tabular} \\ \hline
DCFR, $\lambda_d:$0.10 & 0.78 $\pm$ 0.02 & 0.128 $\pm$ 0.019 & 35.404 $\pm$ 5.013 & 6.070 $\pm$ 0.892 & 6.061 $\pm$ 0.892 \\ \hline
DCFR, $\lambda_d:$0.25 & 0.78 $\pm$ 0.01 & 0.119 $\pm$ 0.022 & 32.381 $\pm$ 6.650 & 5.504 $\pm$ 1.211 & 5.498 $\pm$ 1.216 \\ \hline
DCFR, $\lambda_d:$0.50 & 0.76 $\pm$ 0.02 & 0.117 $\pm$ 0.017 & 31.468 $\pm$ 5.937 & 5.411 $\pm$ 1.159 & 5.404 $\pm$ 1.146 \\ \hline
DCFR, $\lambda_d:$0.75 & 0.76 $\pm$ 0.02 & 0.105 $\pm$ 0.016 & 27.762 $\pm$ 4.501 & 4.827 $\pm$ 0.745 & 4.822 $\pm$ 0.751 \\ \hline
DCFR, $\lambda_d:$1.00 & 0.76 $\pm$ 0.02 & 0.099 $\pm$ 0.011 & 26.282 $\pm$ 2.846 & 4.403 $\pm$ 0.524 & 4.399 $\pm$ 0.526 \\ \hline
DCFR, $\lambda_d:$1.50 & 0.76 $\pm$ 0.02 & 0.096 $\pm$ 0.014 & 25.750 $\pm$ 5.251 & 4.189 $\pm$ 0.960 & 4.184 $\pm$ 0.964 \\ \hline
DCFR, $\lambda_d:$2.00 & 0.76 $\pm$ 0.02 & 0.089 $\pm$ 0.018 & 21.700 $\pm$ 6.444 & 3.530 $\pm$ 1.041 & 3.527 $\pm$ 1.043 \\ \hline
DCFR, $\lambda_d:$5.00 & 0.77 $\pm$ 0.02 & 0.073 $\pm$ 0.011 & 19.651 $\pm$ 3.494 & 3.117 $\pm$ 0.667 & 3.113 $\pm$ 0.664 \\ \hline
DCFR, $\lambda_d:$10.00 & 0.79 $\pm$ 0.02 & 0.063 $\pm$ 0.009 & 16.741 $\pm$ 4.430 & 2.664 $\pm$ 0.824 & 2.663 $\pm$ 0.825 \\ \hline
DCFR, $\lambda_d:$15.00 & 0.80 $\pm$ 0.02 & 0.055 $\pm$ 0.008 & 15.813 $\pm$ 2.478 & 2.444 $\pm$ 0.415 & 2.442 $\pm$ 0.416 \\ \hline
DCFR, $\lambda_d:$20.00 & 0.80 $\pm$ 0.02 & 0.051 $\pm$ 0.007 & 15.213 $\pm$ 4.191 & 2.436 $\pm$ 0.709 & 2.434 $\pm$ 0.708 \\ \hline
FairBiT, $\lambda_b:$0.00 & 0.85 $\pm$ 0.02 & 0.081 $\pm$ 0.012 & 36.468 $\pm$ 5.642 & 6.232 $\pm$ 1.079 & 6.238 $\pm$ 1.080 \\ \hline
FairBiT, $\lambda_b:$0.00 & 0.85 $\pm$ 0.02 & 0.086 $\pm$ 0.011 & 38.809 $\pm$ 3.924 & 6.630 $\pm$ 0.697 & 6.635 $\pm$ 0.699 \\ \hline
FairBiT, $\lambda_b:$0.00 & 0.85 $\pm$ 0.02 & 0.086 $\pm$ 0.011 & 38.823 $\pm$ 3.954 & 6.632 $\pm$ 0.701 & 6.638 $\pm$ 0.703 \\ \hline
FairBiT, $\lambda_b:$0.00 & 0.85 $\pm$ 0.02 & 0.085 $\pm$ 0.011 & 38.641 $\pm$ 3.955 & 6.600 $\pm$ 0.701 & 6.606 $\pm$ 0.704 \\ \hline
FairBiT, $\lambda_b:$0.02 & 0.85 $\pm$ 0.02 & 0.084 $\pm$ 0.011 & 38.058 $\pm$ 3.941 & 6.498 $\pm$ 0.698 & 6.503 $\pm$ 0.700 \\ \hline
FairBiT, $\lambda_b:$0.06 & 0.85 $\pm$ 0.02 & 0.079 $\pm$ 0.010 & 36.089 $\pm$ 3.867 & 6.152 $\pm$ 0.682 & 6.158 $\pm$ 0.685 \\ \hline
FairBiT, $\lambda_b:$0.22 & 0.85 $\pm$ 0.02 & 0.067 $\pm$ 0.008 & 30.326 $\pm$ 3.605 & 5.141 $\pm$ 0.638 & 5.146 $\pm$ 0.640 \\ \hline
FairBiT, $\lambda_b:$0.77 & 0.85 $\pm$ 0.02 & 0.044 $\pm$ 0.006 & 19.536 $\pm$ 3.026 & 3.262 $\pm$ 0.550 & 3.267 $\pm$ 0.551 \\ \hline
FairBiT, $\lambda_b:$2.78 & 0.84 $\pm$ 0.02 & 0.013 $\pm$ 0.002 & 8.458 $\pm$ 1.443 & 1.421 $\pm$ 0.286 & 1.423 $\pm$ 0.286 \\ \hline
FairBiT, $\lambda_b:$10.00 & 0.81 $\pm$ 0.02 & 0.002 $\pm$ 0.000 & 2.220 $\pm$ 0.420 & 0.363 $\pm$ 0.086 & 0.364 $\pm$ 0.086 \\ \hline
FairLearn (CDP) & 0.65 $\pm$ 0.02 & 0.144 $\pm$ 0.030 & 27.817 $\pm$ 9.400 & 4.239 $\pm$ 1.449 & 4.246 $\pm$ 1.458 \\ \hline
FairLearn (DP) & 0.65 $\pm$ 0.02 & 0.144 $\pm$ 0.030 & 27.817 $\pm$ 9.400 & 4.239 $\pm$ 1.449 & 4.246 $\pm$ 1.458 \\ \hline
Wass. Regularization, $\lambda_w:$0.00 & 0.85 $\pm$ 0.02 & 0.081 $\pm$ 0.012 & 36.200 $\pm$ 6.182 & 6.184 $\pm$ 1.170 & 6.191 $\pm$ 1.172 \\ \hline
Wass. Regularization, $\lambda_w:$0.00 & 0.85 $\pm$ 0.02 & 0.086 $\pm$ 0.011 & 38.842 $\pm$ 3.981 & 6.635 $\pm$ 0.702 & 6.641 $\pm$ 0.704 \\ \hline
Wass. Regularization, $\lambda_w:$0.00 & 0.85 $\pm$ 0.02 & 0.086 $\pm$ 0.011 & 38.772 $\pm$ 3.958 & 6.623 $\pm$ 0.701 & 6.628 $\pm$ 0.704 \\ \hline
Wass. Regularization, $\lambda_w:$0.00 & 0.85 $\pm$ 0.02 & 0.085 $\pm$ 0.011 & 38.394 $\pm$ 3.970 & 6.556 $\pm$ 0.705 & 6.562 $\pm$ 0.707 \\ \hline
Wass. Regularization, $\lambda_w:$0.02 & 0.85 $\pm$ 0.02 & 0.082 $\pm$ 0.011 & 37.124 $\pm$ 4.007 & 6.329 $\pm$ 0.707 & 6.335 $\pm$ 0.709 \\ \hline
Wass. Regularization, $\lambda_w:$0.06 & 0.85 $\pm$ 0.02 & 0.073 $\pm$ 0.010 & 32.556 $\pm$ 4.114 & 5.514 $\pm$ 0.712 & 5.520 $\pm$ 0.713 \\ \hline
Wass. Regularization, $\lambda_w:$0.22 & 0.84 $\pm$ 0.02 & 0.050 $\pm$ 0.007 & 17.982 $\pm$ 3.748 & 2.882 $\pm$ 0.673 & 2.887 $\pm$ 0.674 \\ \hline
Wass. Regularization, $\lambda_w:$0.77 & 0.81 $\pm$ 0.02 & 0.040 $\pm$ 0.005 & 14.006 $\pm$ 2.199 & 2.139 $\pm$ 0.447 & 2.142 $\pm$ 0.449 \\ \hline
Wass. Regularization, $\lambda_w:$2.78 & 0.80 $\pm$ 0.02 & 0.038 $\pm$ 0.005 & 14.166 $\pm$ 1.957 & 2.147 $\pm$ 0.337 & 2.149 $\pm$ 0.339 \\ \hline
Wass. Regularization, $\lambda_w:$10.00 & 0.79 $\pm$ 0.02 & 0.036 $\pm$ 0.006 & 13.862 $\pm$ 2.145 & 2.074 $\pm$ 0.349 & 2.076 $\pm$ 0.350 \\ \hline
Legit Only & 0.60 $\pm$ 0.02 & 0.001 $\pm$ 0.000 & 0.000 $\pm$ 0.000 & 0.000 $\pm$ 0.000 & 0.000 $\pm$ 0.000 \\ \hline
No Regularization & 0.85 $\pm$ 0.02 & 0.082 $\pm$ 0.009 & 4.128 $\pm$ 0.391 & 6.343 $\pm$ 0.664 & 6.349 $\pm$ 0.666 \\ \hline
Fair PreProc., $\lambda_r:$0.00 & 0.85 $\pm$ 0.02 & 0.084 $\pm$ 0.010 & 4.292 $\pm$ 0.375 & 6.597 $\pm$ 0.639 & 6.603 $\pm$ 0.641 \\ \hline
Fair PreProc., $\lambda_r:$0.10 & 0.85 $\pm$ 0.02 & 0.084 $\pm$ 0.010 & 4.291 $\pm$ 0.375 & 6.586 $\pm$ 0.646 & 6.592 $\pm$ 0.649 \\ \hline
Fair PreProc., $\lambda_r:$0.20 & 0.85 $\pm$ 0.02 & 0.084 $\pm$ 0.010 & 4.293 $\pm$ 0.395 & 6.599 $\pm$ 0.702 & 6.605 $\pm$ 0.704 \\ \hline
Fair PreProc., $\lambda_r:$0.30 & 0.85 $\pm$ 0.02 & 0.083 $\pm$ 0.009 & 4.223 $\pm$ 0.326 & 6.483 $\pm$ 0.626 & 6.489 $\pm$ 0.628 \\ \hline
Fair PreProc., $\lambda_r:$0.40 & 0.85 $\pm$ 0.02 & 0.083 $\pm$ 0.009 & 4.225 $\pm$ 0.337 & 6.488 $\pm$ 0.638 & 6.494 $\pm$ 0.640 \\ \hline
Fair PreProc., $\lambda_r:$0.50 & 0.85 $\pm$ 0.02 & 0.082 $\pm$ 0.009 & 4.206 $\pm$ 0.341 & 6.459 $\pm$ 0.658 & 6.465 $\pm$ 0.660 \\ \hline
Fair PreProc., $\lambda_r:$0.60 & 0.85 $\pm$ 0.02 & 0.084 $\pm$ 0.009 & 4.311 $\pm$ 0.344 & 6.623 $\pm$ 0.624 & 6.629 $\pm$ 0.624 \\ \hline
Fair PreProc., $\lambda_r:$0.70 & 0.85 $\pm$ 0.02 & 0.085 $\pm$ 0.009 & 4.351 $\pm$ 0.366 & 6.701 $\pm$ 0.715 & 6.707 $\pm$ 0.714 \\ \hline
Fair PreProc., $\lambda_r:$0.80 & 0.85 $\pm$ 0.02 & 0.084 $\pm$ 0.009 & 4.301 $\pm$ 0.327 & 6.624 $\pm$ 0.696 & 6.630 $\pm$ 0.695 \\ \hline
Fair PreProc., $\lambda_r:$0.90 & 0.85 $\pm$ 0.02 & 0.084 $\pm$ 0.009 & 4.278 $\pm$ 0.353 & 6.586 $\pm$ 0.750 & 6.592 $\pm$ 0.748 \\ \hline
Fair PreProc., $\lambda_r:$1.00 & 0.85 $\pm$ 0.02 & 0.084 $\pm$ 0.009 & 4.285 $\pm$ 0.342 & 6.597 $\pm$ 0.732 & 6.603 $\pm$ 0.731 \\ \hline
\end{tabular}%
}
\caption{Mean and one standard deviation results for all methods-hyperparameter combinations for all methods except \fairlp, for the \texttt{Drug} dataset across 10 separate runs. }
\label{tab:drug_rest}
\end{table*}

\begin{table*}[ht]
\centering
\resizebox{1.0\textwidth}{!}{%
\begin{tabular}{|c||c|c|c|c|c|}
\hline
Method & AUC  &  \begin{tabular}[c]{@{}c@{}}\cddwtext{}\\ (normalized) \end{tabular}  & \begin{tabular}[c]{@{}c@{}}\cddlptext{}  \\ (uniform) \end{tabular} & \begin{tabular}[c]{@{}c@{}}\cddlptext{}  \\ (Ave. $\P(L|A)$)  \end{tabular}& \begin{tabular}[c]{@{}c@{}}\cddlptext{}  \\ ($\P(L)$) \end{tabular} \\ \hline
FairLeap (uniform), $\lambda_w:$0.00 & 0.85 $\pm$ 0.02 & 0.071 $\pm$ 0.009 & 31.246 $\pm$ 3.632 & 5.289 $\pm$ 0.633 & 5.295 $\pm$ 0.635 \\ \hline
FairLeap (uniform), $\lambda_w:$0.00 & 0.84 $\pm$ 0.02 & 0.049 $\pm$ 0.006 & 16.752 $\pm$ 3.309 & 2.693 $\pm$ 0.615 & 2.698 $\pm$ 0.616 \\ \hline
FairLeap (uniform), $\lambda_w:$0.00 & 0.82 $\pm$ 0.02 & 0.032 $\pm$ 0.005 & 10.529 $\pm$ 2.412 & 1.667 $\pm$ 0.448 & 1.669 $\pm$ 0.449 \\ \hline
FairLeap (uniform), $\lambda_w:$0.01 & 0.68 $\pm$ 0.04 & 0.003 $\pm$ 0.003 & 1.956 $\pm$ 1.463 & 0.353 $\pm$ 0.278 & 0.353 $\pm$ 0.277 \\ \hline
FairLeap (uniform), $\lambda_w:$0.05 & 0.61 $\pm$ 0.03 & 0.001 $\pm$ 0.000 & 0.117 $\pm$ 0.050 & 0.021 $\pm$ 0.010 & 0.021 $\pm$ 0.010 \\ \hline
FairLeap (uniform), $\lambda_w:$0.22 & 0.60 $\pm$ 0.04 & 0.001 $\pm$ 0.000 & 0.136 $\pm$ 0.088 & 0.026 $\pm$ 0.017 & 0.026 $\pm$ 0.017 \\ \hline
FairLeap (uniform), $\lambda_w:$1.00 & 0.62 $\pm$ 0.03 & 0.001 $\pm$ 0.000 & 0.095 $\pm$ 0.045 & 0.018 $\pm$ 0.009 & 0.018 $\pm$ 0.009 \\ \hline
FairLeap (uniform), $\lambda_w:$4.64 & 0.60 $\pm$ 0.02 & 0.001 $\pm$ 0.000 & 0.051 $\pm$ 0.017 & 0.010 $\pm$ 0.004 & 0.010 $\pm$ 0.004 \\ \hline
FairLeap (uniform), $\lambda_w:$21.54 & 0.60 $\pm$ 0.05 & 0.001 $\pm$ 0.000 & 0.059 $\pm$ 0.023 & 0.011 $\pm$ 0.004 & 0.011 $\pm$ 0.004 \\ \hline
FairLeap (uniform), $\lambda_w:$100.00 & 0.61 $\pm$ 0.04 & 0.000 $\pm$ 0.000 & 0.037 $\pm$ 0.013 & 0.007 $\pm$ 0.002 & 0.007 $\pm$ 0.002 \\ \hline
FairLeap (P(L)), $\lambda_w:$0.00 & 0.85 $\pm$ 0.02 & 0.080 $\pm$ 0.009 & 36.136 $\pm$ 3.628 & 6.157 $\pm$ 0.651 & 6.163 $\pm$ 0.655 \\ \hline
FairLeap (P(L)), $\lambda_w:$0.00 & 0.85 $\pm$ 0.02 & 0.076 $\pm$ 0.010 & 33.516 $\pm$ 4.020 & 5.689 $\pm$ 0.704 & 5.695 $\pm$ 0.706 \\ \hline
FairLeap (P(L)), $\lambda_w:$0.00 & 0.84 $\pm$ 0.02 & 0.052 $\pm$ 0.007 & 18.332 $\pm$ 3.602 & 2.962 $\pm$ 0.651 & 2.967 $\pm$ 0.652 \\ \hline
FairLeap (P(L)), $\lambda_w:$0.01 & 0.83 $\pm$ 0.02 & 0.037 $\pm$ 0.005 & 12.080 $\pm$ 2.786 & 1.887 $\pm$ 0.506 & 1.890 $\pm$ 0.508 \\ \hline
FairLeap (P(L)), $\lambda_w:$0.05 & 0.77 $\pm$ 0.03 & 0.014 $\pm$ 0.005 & 6.028 $\pm$ 1.694 & 0.977 $\pm$ 0.265 & 0.979 $\pm$ 0.266 \\ \hline
FairLeap (P(L)), $\lambda_w:$0.22 & 0.62 $\pm$ 0.03 & 0.001 $\pm$ 0.000 & 0.340 $\pm$ 0.396 & 0.059 $\pm$ 0.072 & 0.059 $\pm$ 0.072 \\ \hline
FairLeap (P(L)), $\lambda_w:$1.00 & 0.60 $\pm$ 0.03 & 0.001 $\pm$ 0.000 & 0.097 $\pm$ 0.042 & 0.018 $\pm$ 0.008 & 0.018 $\pm$ 0.008 \\ \hline
FairLeap (P(L)), $\lambda_w:$4.64 & 0.60 $\pm$ 0.04 & 0.001 $\pm$ 0.000 & 0.086 $\pm$ 0.043 & 0.016 $\pm$ 0.008 & 0.016 $\pm$ 0.008 \\ \hline
FairLeap (P(L)), $\lambda_w:$21.54 & 0.60 $\pm$ 0.04 & 0.001 $\pm$ 0.000 & 0.072 $\pm$ 0.047 & 0.013 $\pm$ 0.009 & 0.013 $\pm$ 0.009 \\ \hline
FairLeap (P(L)), $\lambda_w:$100.00 & 0.59 $\pm$ 0.04 & 0.001 $\pm$ 0.000 & 0.053 $\pm$ 0.019 & 0.010 $\pm$ 0.004 & 0.010 $\pm$ 0.004 \\ \hline
Fairleap (Ave. P(L$|$A)), $\lambda_w:$0.00 & 0.85 $\pm$ 0.02 & 0.080 $\pm$ 0.009 & 36.337 $\pm$ 3.452 & 6.198 $\pm$ 0.652 & 6.204 $\pm$ 0.655 \\ \hline
Fairleap (Ave. P(L$|$A)), $\lambda_w:$0.00 & 0.85 $\pm$ 0.02 & 0.076 $\pm$ 0.010 & 33.548 $\pm$ 4.008 & 5.695 $\pm$ 0.699 & 5.701 $\pm$ 0.701 \\ \hline
Fairleap (Ave. P(L$|$A)), $\lambda_w:$0.00 & 0.84 $\pm$ 0.02 & 0.051 $\pm$ 0.007 & 18.292 $\pm$ 3.542 & 2.956 $\pm$ 0.639 & 2.961 $\pm$ 0.641 \\ \hline
Fairleap (Ave. P(L$|$A)), $\lambda_w:$0.01 & 0.83 $\pm$ 0.02 & 0.037 $\pm$ 0.005 & 12.087 $\pm$ 2.736 & 1.886 $\pm$ 0.501 & 1.889 $\pm$ 0.503 \\ \hline
Fairleap (Ave. P(L$|$A)), $\lambda_w:$0.05 & 0.77 $\pm$ 0.03 & 0.014 $\pm$ 0.005 & 6.013 $\pm$ 1.764 & 0.980 $\pm$ 0.279 & 0.982 $\pm$ 0.280 \\ \hline
Fairleap (Ave. P(L$|$A)), $\lambda_w:$0.22 & 0.62 $\pm$ 0.05 & 0.001 $\pm$ 0.001 & 0.605 $\pm$ 1.024 & 0.102 $\pm$ 0.161 & 0.102 $\pm$ 0.162 \\ \hline
Fairleap (Ave. P(L$|$A)), $\lambda_w:$1.00 & 0.61 $\pm$ 0.04 & 0.001 $\pm$ 0.000 & 0.242 $\pm$ 0.364 & 0.042 $\pm$ 0.061 & 0.042 $\pm$ 0.061 \\ \hline
Fairleap (Ave. P(L$|$A)), $\lambda_w:$4.64 & 0.60 $\pm$ 0.04 & 0.001 $\pm$ 0.000 & 0.081 $\pm$ 0.044 & 0.015 $\pm$ 0.009 & 0.015 $\pm$ 0.009 \\ \hline
Fairleap (Ave. P(L$|$A)), $\lambda_w:$21.54 & 0.59 $\pm$ 0.03 & 0.001 $\pm$ 0.000 & 0.063 $\pm$ 0.036 & 0.011 $\pm$ 0.007 & 0.011 $\pm$ 0.007 \\ \hline
Fairleap (Ave. P(L$|$A)), $\lambda_w:$100 & 0.59 $\pm$ 0.03 & 0.001 $\pm$ 0.000 & 0.060 $\pm$ 0.028 & 0.011 $\pm$ 0.006 & 0.011 $\pm$ 0.006 \\ \hline
\end{tabular}%
}
\caption{Mean and one standard deviation results for all methods-hyperparameter combinations for the 3 variants of \fairlp, for the \texttt{Drug} dataset across 10 separate runs. }
\label{tab:drug_fairlp}
\end{table*}


\begin{table*}[ht]
\centering
\resizebox{1.0\textwidth}{!}{%
\begin{tabular}{|c||c|c|c|c|c|}
\hline
Method & AUC  &  \begin{tabular}[c]{@{}c@{}}\cddwtext{}\\ (normalized) \end{tabular}  & \begin{tabular}[c]{@{}c@{}}\cddlptext{}  \\ (uniform) \end{tabular} & \begin{tabular}[c]{@{}c@{}}\cddlptext{}  \\ (Ave. $\P(L|A)$)  \end{tabular}& \begin{tabular}[c]{@{}c@{}}\cddlptext{}  \\ ($\P(L)$) \end{tabular} \\ \hline
DCFR, $\lambda_d:$0.10 & 0.75 $\pm$ 0.03 & 0.075 $\pm$ 0.010 & 64.365 $\pm$ 6.454 & 6.989 $\pm$ 0.807 & 6.962 $\pm$ 0.793 \\ \hline
DCFR, $\lambda_d:$0.25 & 0.75 $\pm$ 0.03 & 0.073 $\pm$ 0.010 & 58.616 $\pm$ 6.114 & 6.407 $\pm$ 0.734 & 6.382 $\pm$ 0.718 \\ \hline
DCFR, $\lambda_d:$0.50 & 0.75 $\pm$ 0.03 & 0.069 $\pm$ 0.009 & 53.749 $\pm$ 9.718 & 5.871 $\pm$ 1.030 & 5.850 $\pm$ 1.021 \\ \hline
DCFR, $\lambda_d:$0.75 & 0.75 $\pm$ 0.03 & 0.067 $\pm$ 0.008 & 50.864 $\pm$ 11.320 & 5.547 $\pm$ 1.268 & 5.522 $\pm$ 1.266 \\ \hline
DCFR, $\lambda_d:$1.00 & 0.74 $\pm$ 0.03 & 0.063 $\pm$ 0.007 & 48.152 $\pm$ 6.224 & 5.273 $\pm$ 0.666 & 5.250 $\pm$ 0.660 \\ \hline
DCFR, $\lambda_d:$1.50 & 0.74 $\pm$ 0.03 & 0.063 $\pm$ 0.007 & 46.919 $\pm$ 7.567 & 5.159 $\pm$ 0.802 & 5.141 $\pm$ 0.802 \\ \hline
DCFR, $\lambda_d:$2.00 & 0.74 $\pm$ 0.03 & 0.065 $\pm$ 0.012 & 52.374 $\pm$ 14.938 & 5.676 $\pm$ 1.728 & 5.655 $\pm$ 1.720 \\ \hline
DCFR, $\lambda_d:$5.00 & 0.73 $\pm$ 0.02 & 0.059 $\pm$ 0.009 & 47.078 $\pm$ 12.969 & 5.027 $\pm$ 1.437 & 5.018 $\pm$ 1.433 \\ \hline
DCFR, $\lambda_d:$10.00 & 0.73 $\pm$ 0.02 & 0.064 $\pm$ 0.014 & 54.736 $\pm$ 19.358 & 5.863 $\pm$ 2.115 & 5.855 $\pm$ 2.112 \\ \hline
DCFR, $\lambda_d:$15.00 & 0.73 $\pm$ 0.03 & 0.063 $\pm$ 0.007 & 55.557 $\pm$ 12.458 & 6.034 $\pm$ 1.416 & 6.025 $\pm$ 1.411 \\ \hline
DCFR, $\lambda_d:$20.00 & 0.73 $\pm$ 0.03 & 0.062 $\pm$ 0.016 & 54.114 $\pm$ 18.042 & 5.838 $\pm$ 2.050 & 5.818 $\pm$ 2.036 \\ \hline
FairBiT, $\lambda_b:$0.00 & 0.76 $\pm$ 0.02 & 0.046 $\pm$ 0.004 & 136.669 $\pm$ 24.976 & 14.949 $\pm$ 2.789 & 14.911 $\pm$ 2.781 \\ \hline
FairBiT, $\lambda_b:$0.00 & 0.76 $\pm$ 0.01 & 0.046 $\pm$ 0.003 & 138.824 $\pm$ 18.970 & 15.171 $\pm$ 2.138 & 15.133 $\pm$ 2.129 \\ \hline
FairBiT, $\lambda_b:$0.00 & 0.76 $\pm$ 0.02 & 0.046 $\pm$ 0.003 & 138.424 $\pm$ 18.793 & 15.126 $\pm$ 2.110 & 15.088 $\pm$ 2.102 \\ \hline
FairBiT, $\lambda_b:$0.00 & 0.76 $\pm$ 0.02 & 0.046 $\pm$ 0.003 & 136.560 $\pm$ 19.170 & 14.919 $\pm$ 2.125 & 14.881 $\pm$ 2.115 \\ \hline
FairBiT, $\lambda_b:$0.02 & 0.76 $\pm$ 0.02 & 0.045 $\pm$ 0.003 & 128.668 $\pm$ 18.087 & 14.048 $\pm$ 1.988 & 14.011 $\pm$ 1.977 \\ \hline
FairBiT, $\lambda_b:$0.06 & 0.76 $\pm$ 0.02 & 0.043 $\pm$ 0.002 & 107.480 $\pm$ 14.507 & 11.709 $\pm$ 1.587 & 11.676 $\pm$ 1.576 \\ \hline
FairBiT, $\lambda_b:$0.22 & 0.76 $\pm$ 0.02 & 0.036 $\pm$ 0.002 & 65.246 $\pm$ 9.924 & 7.015 $\pm$ 1.088 & 6.991 $\pm$ 1.074 \\ \hline
FairBiT, $\lambda_b:$0.77 & 0.77 $\pm$ 0.02 & 0.018 $\pm$ 0.002 & 33.994 $\pm$ 6.489 & 3.602 $\pm$ 0.706 & 3.596 $\pm$ 0.705 \\ \hline
FairBiT, $\lambda_b:$2.78 & 0.79 $\pm$ 0.01 & 0.004 $\pm$ 0.000 & 49.416 $\pm$ 11.565 & 5.359 $\pm$ 1.245 & 5.357 $\pm$ 1.244 \\ \hline
FairBiT, $\lambda_b:$10.00 & 0.80 $\pm$ 0.01 & 0.001 $\pm$ 0.000 & 50.967 $\pm$ 8.455 & 5.547 $\pm$ 0.934 & 5.543 $\pm$ 0.932 \\ \hline
FairLearn (CDP) & 0.76 $\pm$ 0.03 & 0.049 $\pm$ 0.010 & 143.814 $\pm$ 75.740 & 15.553 $\pm$ 8.365 & 15.520 $\pm$ 8.354 \\ \hline
FairLearn (DP) & 0.76 $\pm$ 0.03 & 0.049 $\pm$ 0.010 & 143.814 $\pm$ 75.740 & 15.553 $\pm$ 8.365 & 15.520 $\pm$ 8.354 \\ \hline
Wass. Regularization, $\lambda_w:$0.00 & 0.76 $\pm$ 0.02 & 0.047 $\pm$ 0.002 & 140.926 $\pm$ 13.829 & 15.402 $\pm$ 1.521 & 15.364 $\pm$ 1.510 \\ \hline
Wass. Regularization, $\lambda_w:$0.00 & 0.76 $\pm$ 0.02 & 0.046 $\pm$ 0.002 & 139.740 $\pm$ 12.253 & 15.272 $\pm$ 1.337 & 15.234 $\pm$ 1.330 \\ \hline
Wass. Regularization, $\lambda_w:$0.00 & 0.76 $\pm$ 0.02 & 0.046 $\pm$ 0.002 & 137.978 $\pm$ 12.025 & 15.082 $\pm$ 1.406 & 15.044 $\pm$ 1.407 \\ \hline
Wass. Regularization, $\lambda_w:$0.00 & 0.76 $\pm$ 0.02 & 0.045 $\pm$ 0.002 & 126.986 $\pm$ 12.668 & 13.864 $\pm$ 1.494 & 13.828 $\pm$ 1.496 \\ \hline
Wass. Regularization, $\lambda_w:$0.02 & 0.76 $\pm$ 0.02 & 0.043 $\pm$ 0.001 & 99.530 $\pm$ 8.605 & 10.823 $\pm$ 1.009 & 10.791 $\pm$ 1.005 \\ \hline
Wass. Regularization, $\lambda_w:$0.06 & 0.76 $\pm$ 0.02 & 0.039 $\pm$ 0.001 & 62.866 $\pm$ 7.935 & 6.689 $\pm$ 0.890 & 6.668 $\pm$ 0.879 \\ \hline
Wass. Regularization, $\lambda_w:$0.22 & 0.76 $\pm$ 0.01 & 0.035 $\pm$ 0.001 & 64.476 $\pm$ 12.326 & 6.867 $\pm$ 1.375 & 6.858 $\pm$ 1.372 \\ \hline
Wass. Regularization, $\lambda_w:$0.77 & 0.77 $\pm$ 0.01 & 0.023 $\pm$ 0.002 & 63.646 $\pm$ 12.699 & 6.852 $\pm$ 1.337 & 6.850 $\pm$ 1.339 \\ \hline
Wass. Regularization, $\lambda_w:$2.78 & 0.81 $\pm$ 0.01 & 0.001 $\pm$ 0.000 & 41.850 $\pm$ 5.725 & 4.549 $\pm$ 0.630 & 4.546 $\pm$ 0.628 \\ \hline
Wass. Regularization, $\lambda_w:$10.00 & 0.86 $\pm$ 0.02 & -0.000 $\pm$ 0.000 & 0.494 $\pm$ 0.479 & 0.054 $\pm$ 0.053 & 0.054 $\pm$ 0.053 \\ \hline
Legit Only & 0.81 $\pm$ 0.01 & 0.003 $\pm$ 0.001 & 0.000 $\pm$ 0.000 & 0.000 $\pm$ 0.000 & 0.000 $\pm$ 0.000 \\ \hline
No Regularization & 0.76 $\pm$ 0.02 & 0.047 $\pm$ 0.002 & 14.133 $\pm$ 1.406 & 15.444 $\pm$ 1.500 & 15.406 $\pm$ 1.491 \\ \hline
Fair PreProc., $\lambda_r:$0.00 & 0.76 $\pm$ 0.02 & 0.048 $\pm$ 0.003 & 14.364 $\pm$ 3.107 & 15.683 $\pm$ 3.610 & 15.644 $\pm$ 3.613 \\ \hline
Fair PreProc., $\lambda_r:$0.10 & 0.76 $\pm$ 0.02 & 0.048 $\pm$ 0.003 & 14.243 $\pm$ 3.231 & 15.547 $\pm$ 3.732 & 15.509 $\pm$ 3.736 \\ \hline
Fair PreProc., $\lambda_r:$0.20 & 0.76 $\pm$ 0.02 & 0.048 $\pm$ 0.003 & 14.264 $\pm$ 3.322 & 15.571 $\pm$ 3.842 & 15.532 $\pm$ 3.843 \\ \hline
Fair PreProc., $\lambda_r:$0.30 & 0.76 $\pm$ 0.02 & 0.048 $\pm$ 0.004 & 14.734 $\pm$ 3.860 & 16.100 $\pm$ 4.436 & 16.059 $\pm$ 4.435 \\ \hline
Fair PreProc., $\lambda_r:$0.40 & 0.76 $\pm$ 0.02 & 0.048 $\pm$ 0.004 & 14.175 $\pm$ 4.103 & 15.482 $\pm$ 4.707 & 15.441 $\pm$ 4.703 \\ \hline
Fair PreProc., $\lambda_r:$0.50 & 0.76 $\pm$ 0.02 & 0.048 $\pm$ 0.005 & 14.632 $\pm$ 4.545 & 15.984 $\pm$ 5.201 & 15.943 $\pm$ 5.195 \\ \hline
Fair PreProc., $\lambda_r:$0.60 & 0.76 $\pm$ 0.02 & 0.047 $\pm$ 0.005 & 13.237 $\pm$ 5.258 & 14.477 $\pm$ 5.981 & 14.435 $\pm$ 5.971 \\ \hline
Fair PreProc., $\lambda_r:$0.70 & 0.76 $\pm$ 0.02 & 0.047 $\pm$ 0.006 & 13.397 $\pm$ 5.474 & 14.657 $\pm$ 6.223 & 14.614 $\pm$ 6.213 \\ \hline
Fair PreProc., $\lambda_r:$0.80 & 0.76 $\pm$ 0.02 & 0.047 $\pm$ 0.007 & 13.573 $\pm$ 6.436 & 14.864 $\pm$ 7.262 & 14.821 $\pm$ 7.246 \\ \hline
Fair PreProc., $\lambda_r:$0.90 & 0.76 $\pm$ 0.02 & 0.048 $\pm$ 0.008 & 13.771 $\pm$ 6.689 & 15.076 $\pm$ 7.556 & 15.031 $\pm$ 7.536 \\ \hline
Fair PreProc., $\lambda_r:$1.00 & 0.76 $\pm$ 0.02 & 0.048 $\pm$ 0.007 & 13.636 $\pm$ 6.555 & 14.933 $\pm$ 7.408 & 14.888 $\pm$ 7.392 \\ \hline

\end{tabular}%
}
\caption{Mean and one standard deviation results for all methods-hyperparameter combinations for all methods except \fairlp, for the \texttt{LawSchool} dataset across 10 separate runs. }
\label{tab:lawschool_rest}
\end{table*}

\begin{table*}[ht]
\centering
\resizebox{1.0\textwidth}{!}{%
\begin{tabular}{|c||c|c|c|c|c|}
\hline
Method & AUC  &  \begin{tabular}[c]{@{}c@{}}\cddwtext{}\\ (normalized) \end{tabular}  & \begin{tabular}[c]{@{}c@{}}\cddlptext{}  \\ (uniform) \end{tabular} & \begin{tabular}[c]{@{}c@{}}\cddlptext{}  \\ (Ave. $\P(L|A)$)  \end{tabular}& \begin{tabular}[c]{@{}c@{}}\cddlptext{}  \\ ($\P(L)$) \end{tabular} \\ \hline
FairLeap (uniform), $\lambda_w:$0.00 & 0.76 $\pm$ 0.02 & 0.046 $\pm$ 0.002 & 132.110 $\pm$ 14.357 & 14.432 $\pm$ 1.624 & 14.395 $\pm$ 1.619 \\ \hline
FairLeap (uniform), $\lambda_w:$0.00 & 0.76 $\pm$ 0.02 & 0.043 $\pm$ 0.002 & 113.947 $\pm$ 14.296 & 12.429 $\pm$ 1.589 & 12.395 $\pm$ 1.581 \\ \hline
FairLeap (uniform), $\lambda_w:$0.00 & 0.76 $\pm$ 0.02 & 0.036 $\pm$ 0.002 & 68.021 $\pm$ 9.092 & 7.347 $\pm$ 1.048 & 7.323 $\pm$ 1.038 \\ \hline
FairLeap (uniform), $\lambda_w:$0.01 & 0.78 $\pm$ 0.01 & 0.007 $\pm$ 0.001 & 14.041 $\pm$ 3.161 & 1.491 $\pm$ 0.346 & 1.486 $\pm$ 0.345 \\ \hline
FairLeap (uniform), $\lambda_w:$0.05 & 0.80 $\pm$ 0.01 & 0.003 $\pm$ 0.000 & 1.980 $\pm$ 1.219 & 0.217 $\pm$ 0.134 & 0.216 $\pm$ 0.134 \\ \hline
FairLeap (uniform), $\lambda_w:$0.22 & 0.80 $\pm$ 0.01 & 0.003 $\pm$ 0.000 & 1.332 $\pm$ 0.542 & 0.146 $\pm$ 0.059 & 0.145 $\pm$ 0.059 \\ \hline
FairLeap (uniform), $\lambda_w:$1.00 & 0.80 $\pm$ 0.01 & 0.003 $\pm$ 0.000 & 0.664 $\pm$ 0.415 & 0.072 $\pm$ 0.045 & 0.072 $\pm$ 0.045 \\ \hline
FairLeap (uniform), $\lambda_w:$4.64 & 0.80 $\pm$ 0.01 & 0.003 $\pm$ 0.000 & 0.404 $\pm$ 0.336 & 0.044 $\pm$ 0.037 & 0.044 $\pm$ 0.037 \\ \hline
FairLeap (uniform), $\lambda_w:$21.54 & 0.80 $\pm$ 0.01 & 0.002 $\pm$ 0.001 & 0.171 $\pm$ 0.143 & 0.019 $\pm$ 0.016 & 0.019 $\pm$ 0.016 \\ \hline
FairLeap (uniform), $\lambda_w:$100.00 & 0.81 $\pm$ 0.01 & 0.002 $\pm$ 0.001 & 0.137 $\pm$ 0.128 & 0.015 $\pm$ 0.014 & 0.015 $\pm$ 0.014 \\ \hline
FairLeap (P(L)), $\lambda_w:$0.00 & 0.76 $\pm$ 0.02 & 0.047 $\pm$ 0.002 & 141.579 $\pm$ 17.141 & 15.474 $\pm$ 1.950 & 15.435 $\pm$ 1.944 \\ \hline
FairLeap (P(L)), $\lambda_w:$0.00 & 0.76 $\pm$ 0.02 & 0.046 $\pm$ 0.002 & 139.786 $\pm$ 15.280 & 15.275 $\pm$ 1.747 & 15.237 $\pm$ 1.741 \\ \hline
FairLeap (P(L)), $\lambda_w:$0.00 & 0.76 $\pm$ 0.02 & 0.045 $\pm$ 0.002 & 124.893 $\pm$ 14.577 & 13.638 $\pm$ 1.697 & 13.603 $\pm$ 1.696 \\ \hline
FairLeap (P(L)), $\lambda_w:$0.01 & 0.76 $\pm$ 0.02 & 0.040 $\pm$ 0.002 & 89.241 $\pm$ 11.527 & 9.698 $\pm$ 1.320 & 9.669 $\pm$ 1.312 \\ \hline
FairLeap (P(L)), $\lambda_w:$0.05 & 0.76 $\pm$ 0.01 & 0.026 $\pm$ 0.001 & 40.443 $\pm$ 6.568 & 4.320 $\pm$ 0.751 & 4.305 $\pm$ 0.743 \\ \hline
FairLeap (P(L)), $\lambda_w:$0.22 & 0.80 $\pm$ 0.01 & 0.003 $\pm$ 0.000 & 1.684 $\pm$ 0.915 & 0.183 $\pm$ 0.099 & 0.182 $\pm$ 0.099 \\ \hline
FairLeap (P(L)), $\lambda_w:$1.00 & 0.80 $\pm$ 0.01 & 0.003 $\pm$ 0.000 & 1.545 $\pm$ 0.469 & 0.168 $\pm$ 0.051 & 0.168 $\pm$ 0.051 \\ \hline
FairLeap (P(L)), $\lambda_w:$4.64 & 0.80 $\pm$ 0.01 & 0.003 $\pm$ 0.000 & 0.829 $\pm$ 0.552 & 0.090 $\pm$ 0.060 & 0.090 $\pm$ 0.060 \\ \hline
FairLeap (P(L)), $\lambda_w:$21.54 & 0.80 $\pm$ 0.01 & 0.003 $\pm$ 0.000 & 0.437 $\pm$ 0.351 & 0.047 $\pm$ 0.038 & 0.047 $\pm$ 0.038 \\ \hline
FairLeap (P(L)), $\lambda_w:$100.00 & 0.80 $\pm$ 0.01 & 0.002 $\pm$ 0.000 & 0.210 $\pm$ 0.124 & 0.023 $\pm$ 0.013 & 0.023 $\pm$ 0.013 \\ \hline
Fairleap (Ave. P(L$|$A)), $\lambda_w:$0.00 & 0.76 $\pm$ 0.01 & 0.047 $\pm$ 0.002 & 141.081 $\pm$ 10.070 & 15.495 $\pm$ 1.004 & 15.462 $\pm$ 1.003 \\ \hline
Fairleap (Ave. P(L$|$A)), $\lambda_w:$0.00 & 0.76 $\pm$ 0.02 & 0.046 $\pm$ 0.002 & 139.381 $\pm$ 8.513 & 15.306 $\pm$ 0.910 & 15.274 $\pm$ 0.915 \\ \hline
Fairleap (Ave. P(L$|$A)), $\lambda_w:$0.00 & 0.76 $\pm$ 0.01 & 0.045 $\pm$ 0.002 & 125.836 $\pm$ 9.474 & 13.811 $\pm$ 1.067 & 13.781 $\pm$ 1.074 \\ \hline
Fairleap (Ave. P(L$|$A)), $\lambda_w:$0.01 & 0.76 $\pm$ 0.01 & 0.041 $\pm$ 0.002 & 91.037 $\pm$ 9.269 & 9.947 $\pm$ 1.031 & 9.923 $\pm$ 1.021 \\ \hline
Fairleap (Ave. P(L$|$A)), $\lambda_w:$0.05 & 0.77 $\pm$ 0.01 & 0.026 $\pm$ 0.001 & 41.407 $\pm$ 7.574 & 4.440 $\pm$ 0.853 & 4.428 $\pm$ 0.844 \\ \hline
Fairleap (Ave. P(L$|$A)), $\lambda_w:$0.22 & 0.80 $\pm$ 0.01 & 0.003 $\pm$ 0.001 & 2.453 $\pm$ 1.993 & 0.267 $\pm$ 0.220 & 0.267 $\pm$ 0.220 \\ \hline
Fairleap (Ave. P(L$|$A)), $\lambda_w:$1.00 & 0.80 $\pm$ 0.01 & 0.003 $\pm$ 0.001 & 2.062 $\pm$ 0.795 & 0.225 $\pm$ 0.088 & 0.225 $\pm$ 0.088 \\ \hline
Fairleap (Ave. P(L$|$A)), $\lambda_w:$4.64 & 0.80 $\pm$ 0.01 & 0.003 $\pm$ 0.000 & 0.829 $\pm$ 0.869 & 0.091 $\pm$ 0.096 & 0.091 $\pm$ 0.096 \\ \hline
Fairleap (Ave. P(L$|$A)), $\lambda_w:$21.54 & 0.80 $\pm$ 0.01 & 0.003 $\pm$ 0.000 & 0.737 $\pm$ 0.591 & 0.081 $\pm$ 0.065 & 0.081 $\pm$ 0.065 \\ \hline
Fairleap (Ave. P(L$|$A)), $\lambda_w:$100 & 0.80 $\pm$ 0.01 & 0.002 $\pm$ 0.000 & 0.303 $\pm$ 0.217 & 0.033 $\pm$ 0.024 & 0.033 $\pm$ 0.024 \\ \hline
\end{tabular}%
}
\caption{Mean and one standard deviation results for all methods-hyperparameter combinations for the 3 variants of \fairlp, for the \texttt{LawSchool} dataset across 10 separate runs. }
\label{tab:lawschool_fairlp}
\end{table*}


\begin{table*}[ht]
\centering
\resizebox{1.0\textwidth}{!}{%
\begin{tabular}{|c||c|c|c|c|c|}
\hline
Method & AUC  &  \begin{tabular}[c]{@{}c@{}}\cddwtext{}\\ (normalized) \end{tabular}  & \begin{tabular}[c]{@{}c@{}}\cddlptext{}  \\ (uniform) \end{tabular} & \begin{tabular}[c]{@{}c@{}}\cddlptext{}  \\ (Ave. $\P(L|A)$)  \end{tabular}& \begin{tabular}[c]{@{}c@{}}\cddlptext{}  \\ ($\P(L)$) \end{tabular} \\ \hline
DCFR, $\lambda_d:$0.10 & 0.03 $\pm$ 0.00 & 2.975 $\pm$ 1.093 & 33.472 $\pm$ 5.698 & 3.566 $\pm$ 0.715 & 3.548 $\pm$ 0.722 \\ \hline
DCFR, $\lambda_d:$0.25 & 0.03 $\pm$ 0.00 & 2.963 $\pm$ 1.092 & 25.007 $\pm$ 4.511 & 2.662 $\pm$ 0.583 & 2.649 $\pm$ 0.588 \\ \hline
DCFR, $\lambda_d:$0.50 & 0.03 $\pm$ 0.00 & 2.950 $\pm$ 1.092 & 19.614 $\pm$ 5.013 & 2.095 $\pm$ 0.626 & 2.085 $\pm$ 0.627 \\ \hline
DCFR, $\lambda_d:$0.75 & 0.03 $\pm$ 0.00 & 2.947 $\pm$ 1.093 & 18.263 $\pm$ 4.990 & 1.939 $\pm$ 0.624 & 1.931 $\pm$ 0.623 \\ \hline
DCFR, $\lambda_d:$1.00 & 0.03 $\pm$ 0.00 & 2.945 $\pm$ 1.093 & 18.464 $\pm$ 3.793 & 1.957 $\pm$ 0.487 & 1.950 $\pm$ 0.490 \\ \hline
DCFR, $\lambda_d:$1.50 & 0.03 $\pm$ 0.00 & 2.946 $\pm$ 1.097 & 19.063 $\pm$ 6.640 & 2.023 $\pm$ 0.730 & 2.017 $\pm$ 0.744 \\ \hline
DCFR, $\lambda_d:$2.00 & 0.03 $\pm$ 0.00 & 2.943 $\pm$ 1.095 & 18.377 $\pm$ 4.436 & 1.941 $\pm$ 0.509 & 1.934 $\pm$ 0.512 \\ \hline
DCFR, $\lambda_d:$5.00 & 0.02 $\pm$ 0.00 & 2.957 $\pm$ 1.101 & 27.067 $\pm$ 15.656 & 2.894 $\pm$ 1.747 & 2.887 $\pm$ 1.748 \\ \hline
DCFR, $\lambda_d:$10.00 & 0.02 $\pm$ 0.00 & 2.958 $\pm$ 1.101 & 28.300 $\pm$ 17.502 & 2.964 $\pm$ 1.833 & 2.959 $\pm$ 1.842 \\ \hline
DCFR, $\lambda_d:$15.00 & 0.02 $\pm$ 0.00 & 2.966 $\pm$ 1.096 & 33.330 $\pm$ 14.067 & 3.542 $\pm$ 1.528 & 3.529 $\pm$ 1.528 \\ \hline
DCFR, $\lambda_d:$20.00 & 0.02 $\pm$ 0.00 & 2.956 $\pm$ 1.084 & 27.457 $\pm$ 14.423 & 2.891 $\pm$ 1.592 & 2.876 $\pm$ 1.583 \\ \hline
FairBiT, $\lambda_b:$0.00 & 0.02 $\pm$ 0.00 & 2.887 $\pm$ 0.971 & 35.226 $\pm$ 3.210 & 3.615 $\pm$ 0.350 & 3.614 $\pm$ 0.353 \\ \hline
FairBiT, $\lambda_b:$0.00 & 0.02 $\pm$ 0.00 & 2.888 $\pm$ 0.971 & 35.491 $\pm$ 3.707 & 3.642 $\pm$ 0.399 & 3.641 $\pm$ 0.403 \\ \hline
FairBiT, $\lambda_b:$0.00 & 0.02 $\pm$ 0.00 & 2.888 $\pm$ 0.971 & 35.475 $\pm$ 3.764 & 3.640 $\pm$ 0.407 & 3.639 $\pm$ 0.410 \\ \hline
FairBiT, $\lambda_b:$0.00 & 0.02 $\pm$ 0.00 & 2.887 $\pm$ 0.971 & 35.171 $\pm$ 3.787 & 3.608 $\pm$ 0.410 & 3.607 $\pm$ 0.414 \\ \hline
FairBiT, $\lambda_b:$0.02 & 0.02 $\pm$ 0.00 & 2.884 $\pm$ 0.971 & 34.018 $\pm$ 3.718 & 3.489 $\pm$ 0.403 & 3.488 $\pm$ 0.406 \\ \hline
FairBiT, $\lambda_b:$0.06 & 0.02 $\pm$ 0.00 & 2.873 $\pm$ 0.971 & 30.206 $\pm$ 3.313 & 3.098 $\pm$ 0.360 & 3.097 $\pm$ 0.363 \\ \hline
FairBiT, $\lambda_b:$0.22 & 0.02 $\pm$ 0.00 & 2.851 $\pm$ 0.970 & 20.869 $\pm$ 2.518 & 2.140 $\pm$ 0.273 & 2.139 $\pm$ 0.275 \\ \hline
FairBiT, $\lambda_b:$0.77 & 0.03 $\pm$ 0.00 & 2.826 $\pm$ 0.970 & 8.504 $\pm$ 1.165 & 0.871 $\pm$ 0.125 & 0.871 $\pm$ 0.126 \\ \hline
FairBiT, $\lambda_b:$2.78 & 0.04 $\pm$ 0.00 & 2.817 $\pm$ 0.970 & 2.175 $\pm$ 0.278 & 0.220 $\pm$ 0.030 & 0.220 $\pm$ 0.030 \\ \hline
FairBiT, $\lambda_b:$10.00 & 0.05 $\pm$ 0.00 & 2.816 $\pm$ 0.970 & 0.838 $\pm$ 0.143 & 0.084 $\pm$ 0.015 & 0.084 $\pm$ 0.015 \\ \hline
FairLearn (CDP) & 0.03 $\pm$ 0.01 & 2.851 $\pm$ 0.969 & 16.307 $\pm$ 4.753 & 1.660 $\pm$ 0.489 & 1.661 $\pm$ 0.490 \\ \hline
FairLearn (DP) & 0.03 $\pm$ 0.01 & 2.851 $\pm$ 0.969 & 16.307 $\pm$ 4.753 & 1.660 $\pm$ 0.489 & 1.661 $\pm$ 0.490 \\ \hline
Wass. Regularization, $\lambda_w:$0.00 & 0.02 $\pm$ 0.00 & 2.887 $\pm$ 0.971 & 35.248 $\pm$ 3.370 & 3.616 $\pm$ 0.375 & 3.615 $\pm$ 0.378 \\ \hline
Wass. Regularization, $\lambda_w:$0.00 & 0.02 $\pm$ 0.00 & 2.888 $\pm$ 0.971 & 35.439 $\pm$ 3.633 & 3.636 $\pm$ 0.397 & 3.635 $\pm$ 0.401 \\ \hline
Wass. Regularization, $\lambda_w:$0.00 & 0.02 $\pm$ 0.00 & 2.887 $\pm$ 0.971 & 35.322 $\pm$ 3.720 & 3.623 $\pm$ 0.406 & 3.622 $\pm$ 0.409 \\ \hline
Wass. Regularization, $\lambda_w:$0.00 & 0.02 $\pm$ 0.00 & 2.886 $\pm$ 0.971 & 34.715 $\pm$ 3.736 & 3.561 $\pm$ 0.407 & 3.560 $\pm$ 0.410 \\ \hline
Wass. Regularization, $\lambda_w:$0.02 & 0.02 $\pm$ 0.00 & 2.880 $\pm$ 0.971 & 32.396 $\pm$ 3.645 & 3.323 $\pm$ 0.397 & 3.322 $\pm$ 0.400 \\ \hline
Wass. Regularization, $\lambda_w:$0.06 & 0.02 $\pm$ 0.00 & 2.863 $\pm$ 0.970 & 24.545 $\pm$ 3.189 & 2.518 $\pm$ 0.348 & 2.517 $\pm$ 0.351 \\ \hline
Wass. Regularization, $\lambda_w:$0.22 & 0.03 $\pm$ 0.00 & 2.837 $\pm$ 0.968 & 11.789 $\pm$ 2.384 & 1.189 $\pm$ 0.245 & 1.192 $\pm$ 0.246 \\ \hline
Wass. Regularization, $\lambda_w:$0.77 & 0.04 $\pm$ 0.00 & 2.826 $\pm$ 0.969 & 8.325 $\pm$ 1.393 & 0.841 $\pm$ 0.146 & 0.845 $\pm$ 0.147 \\ \hline
Wass. Regularization, $\lambda_w:$2.78 & 0.05 $\pm$ 0.00 & 2.816 $\pm$ 0.970 & 2.453 $\pm$ 0.492 & 0.247 $\pm$ 0.051 & 0.248 $\pm$ 0.051 \\ \hline
Wass. Regularization, $\lambda_w:$10.00 & 0.05 $\pm$ 0.00 & 2.815 $\pm$ 0.969 & 0.121 $\pm$ 0.066 & 0.012 $\pm$ 0.007 & 0.012 $\pm$ 0.007 \\ \hline
Legit Only & 0.04 $\pm$ 0.00 & 2.820 $\pm$ 0.970 & 0.000 $\pm$ 0.000 & 0.000 $\pm$ 0.000 & 0.000 $\pm$ 0.000 \\ \hline
No Regularization & 0.02 $\pm$ 0.00 & 2.887 $\pm$ 0.971 & 3.537 $\pm$ 0.317 & 3.629 $\pm$ 0.353 & 3.628 $\pm$ 0.356 \\ \hline
Fair PreProc., $\lambda_r:$0.00 & 0.02 $\pm$ 0.00 & 2.893 $\pm$ 0.971 & 3.992 $\pm$ 0.420 & 4.081 $\pm$ 0.444 & 4.081 $\pm$ 0.447 \\ \hline
Fair PreProc., $\lambda_r:$0.10 & 0.02 $\pm$ 0.00 & 2.891 $\pm$ 0.971 & 3.955 $\pm$ 0.321 & 4.044 $\pm$ 0.345 & 4.045 $\pm$ 0.348 \\ \hline
Fair PreProc., $\lambda_r:$0.20 & 0.02 $\pm$ 0.00 & 2.890 $\pm$ 0.970 & 3.909 $\pm$ 0.388 & 3.997 $\pm$ 0.408 & 3.998 $\pm$ 0.411 \\ \hline
Fair PreProc., $\lambda_r:$0.30 & 0.02 $\pm$ 0.00 & 2.889 $\pm$ 0.970 & 3.914 $\pm$ 0.326 & 3.998 $\pm$ 0.348 & 3.999 $\pm$ 0.351 \\ \hline
Fair PreProc., $\lambda_r:$0.40 & 0.02 $\pm$ 0.00 & 2.886 $\pm$ 0.970 & 3.780 $\pm$ 0.376 & 3.863 $\pm$ 0.386 & 3.864 $\pm$ 0.389 \\ \hline
Fair PreProc., $\lambda_r:$0.50 & 0.02 $\pm$ 0.00 & 2.886 $\pm$ 0.969 & 3.868 $\pm$ 0.362 & 3.948 $\pm$ 0.381 & 3.950 $\pm$ 0.385 \\ \hline
Fair PreProc., $\lambda_r:$0.60 & 0.02 $\pm$ 0.00 & 2.888 $\pm$ 0.970 & 3.931 $\pm$ 0.485 & 4.011 $\pm$ 0.493 & 4.013 $\pm$ 0.497 \\ \hline
Fair PreProc., $\lambda_r:$0.70 & 0.02 $\pm$ 0.00 & 2.888 $\pm$ 0.970 & 3.924 $\pm$ 0.487 & 4.001 $\pm$ 0.499 & 4.004 $\pm$ 0.503 \\ \hline
Fair PreProc., $\lambda_r:$0.80 & 0.02 $\pm$ 0.00 & 2.888 $\pm$ 0.967 & 3.962 $\pm$ 0.605 & 4.042 $\pm$ 0.610 & 4.045 $\pm$ 0.615 \\ \hline
Fair PreProc., $\lambda_r:$0.90 & 0.02 $\pm$ 0.00 & 2.886 $\pm$ 0.968 & 3.868 $\pm$ 0.606 & 3.944 $\pm$ 0.613 & 3.947 $\pm$ 0.618 \\ \hline
Fair PreProc., $\lambda_r:$1.00 & 0.02 $\pm$ 0.00 & 2.884 $\pm$ 0.968 & 3.693 $\pm$ 0.576 & 3.770 $\pm$ 0.577 & 3.773 $\pm$ 0.582 \\ \hline
\end{tabular}%
}
\caption{Mean and one standard deviation results for all methods-hyperparameter combinations for all methods except \fairlp, for the \texttt{Communities and Crime} dataset across 10 separate runs. }
\label{tab:crime_rest}
\end{table*}

\begin{table*}[ht]
\centering
\resizebox{1.0\textwidth}{!}{%
\begin{tabular}{|c||c|c|c|c|c|}
\hline
Method & AUC  &  \begin{tabular}[c]{@{}c@{}}\cddwtext{}\\ (normalized) \end{tabular}  & \begin{tabular}[c]{@{}c@{}}\cddlptext{}  \\ (uniform) \end{tabular} & \begin{tabular}[c]{@{}c@{}}\cddlptext{}  \\ (Ave. $\P(L|A)$)  \end{tabular}& \begin{tabular}[c]{@{}c@{}}\cddlptext{}  \\ ($\P(L)$) \end{tabular} \\ \hline
FairLeap (uniform), $\lambda_w:$0.00 & 0.02 $\pm$ 0.00 & 2.884 $\pm$ 0.971 & 33.601 $\pm$ 2.883 & 3.449 $\pm$ 0.314 & 3.447 $\pm$ 0.318 \\ \hline
FairLeap (uniform), $\lambda_w:$0.00 & 0.02 $\pm$ 0.00 & 2.877 $\pm$ 0.971 & 29.820 $\pm$ 3.179 & 3.062 $\pm$ 0.349 & 3.060 $\pm$ 0.352 \\ \hline
FairLeap (uniform), $\lambda_w:$0.00 & 0.02 $\pm$ 0.00 & 2.851 $\pm$ 0.970 & 16.821 $\pm$ 2.517 & 1.725 $\pm$ 0.269 & 1.724 $\pm$ 0.271 \\ \hline
FairLeap (uniform), $\lambda_w:$0.01 & 0.04 $\pm$ 0.00 & 2.820 $\pm$ 0.970 & 1.133 $\pm$ 0.599 & 0.116 $\pm$ 0.063 & 0.116 $\pm$ 0.063 \\ \hline
FairLeap (uniform), $\lambda_w:$0.05 & 0.04 $\pm$ 0.00 & 2.820 $\pm$ 0.971 & 0.058 $\pm$ 0.021 & 0.006 $\pm$ 0.002 & 0.006 $\pm$ 0.002 \\ \hline
FairLeap (uniform), $\lambda_w:$0.22 & 0.04 $\pm$ 0.00 & 2.820 $\pm$ 0.971 & 0.027 $\pm$ 0.007 & 0.003 $\pm$ 0.001 & 0.003 $\pm$ 0.001 \\ \hline
FairLeap (uniform), $\lambda_w:$1.00 & 0.04 $\pm$ 0.00 & 2.819 $\pm$ 0.970 & 0.016 $\pm$ 0.005 & 0.002 $\pm$ 0.001 & 0.002 $\pm$ 0.001 \\ \hline
FairLeap (uniform), $\lambda_w:$4.64 & 0.04 $\pm$ 0.00 & 2.818 $\pm$ 0.970 & 0.013 $\pm$ 0.005 & 0.001 $\pm$ 0.000 & 0.001 $\pm$ 0.000 \\ \hline
FairLeap (uniform), $\lambda_w:$21.54 & 0.05 $\pm$ 0.00 & 2.816 $\pm$ 0.970 & 0.010 $\pm$ 0.005 & 0.001 $\pm$ 0.001 & 0.001 $\pm$ 0.001 \\ \hline
FairLeap (uniform), $\lambda_w:$100.00 & 0.05 $\pm$ 0.00 & 2.815 $\pm$ 0.970 & 0.006 $\pm$ 0.002 & 0.001 $\pm$ 0.000 & 0.001 $\pm$ 0.000 \\ \hline
FairLeap (P(L)), $\lambda_w:$0.00 & 0.02 $\pm$ 0.00 & 2.886 $\pm$ 0.972 & 34.671 $\pm$ 3.001 & 3.560 $\pm$ 0.331 & 3.559 $\pm$ 0.334 \\ \hline
FairLeap (P(L)), $\lambda_w:$0.00 & 0.02 $\pm$ 0.00 & 2.887 $\pm$ 0.971 & 34.868 $\pm$ 3.827 & 3.577 $\pm$ 0.414 & 3.576 $\pm$ 0.417 \\ \hline
FairLeap (P(L)), $\lambda_w:$0.00 & 0.02 $\pm$ 0.00 & 2.882 $\pm$ 0.971 & 32.780 $\pm$ 3.571 & 3.364 $\pm$ 0.390 & 3.363 $\pm$ 0.393 \\ \hline
FairLeap (P(L)), $\lambda_w:$0.01 & 0.02 $\pm$ 0.00 & 2.866 $\pm$ 0.970 & 24.682 $\pm$ 3.075 & 2.534 $\pm$ 0.334 & 2.533 $\pm$ 0.336 \\ \hline
FairLeap (P(L)), $\lambda_w:$0.05 & 0.03 $\pm$ 0.00 & 2.831 $\pm$ 0.969 & 7.530 $\pm$ 1.620 & 0.770 $\pm$ 0.169 & 0.771 $\pm$ 0.170 \\ \hline
FairLeap (P(L)), $\lambda_w:$0.22 & 0.04 $\pm$ 0.00 & 2.820 $\pm$ 0.971 & 0.133 $\pm$ 0.081 & 0.014 $\pm$ 0.008 & 0.014 $\pm$ 0.008 \\ \hline
FairLeap (P(L)), $\lambda_w:$1.00 & 0.04 $\pm$ 0.00 & 2.820 $\pm$ 0.971 & 0.033 $\pm$ 0.008 & 0.003 $\pm$ 0.001 & 0.003 $\pm$ 0.001 \\ \hline
FairLeap (P(L)), $\lambda_w:$4.64 & 0.04 $\pm$ 0.00 & 2.820 $\pm$ 0.971 & 0.017 $\pm$ 0.004 & 0.002 $\pm$ 0.000 & 0.002 $\pm$ 0.000 \\ \hline
FairLeap (P(L)), $\lambda_w:$21.54 & 0.04 $\pm$ 0.00 & 2.819 $\pm$ 0.970 & 0.013 $\pm$ 0.005 & 0.001 $\pm$ 0.001 & 0.001 $\pm$ 0.001 \\ \hline
FairLeap (P(L)), $\lambda_w:$100.00 & 0.05 $\pm$ 0.00 & 2.817 $\pm$ 0.970 & 0.009 $\pm$ 0.003 & 0.001 $\pm$ 0.000 & 0.001 $\pm$ 0.000 \\ \hline
Fairleap (Ave. P(L$|$A)), $\lambda_w:$0.00 & 0.02 $\pm$ 0.00 & 2.887 $\pm$ 0.971 & 35.155 $\pm$ 3.198 & 3.607 $\pm$ 0.356 & 3.605 $\pm$ 0.358 \\ \hline
Fairleap (Ave. P(L$|$A)), $\lambda_w:$0.00 & 0.02 $\pm$ 0.00 & 2.887 $\pm$ 0.971 & 35.033 $\pm$ 3.772 & 3.593 $\pm$ 0.408 & 3.592 $\pm$ 0.411 \\ \hline
Fairleap (Ave. P(L$|$A)), $\lambda_w:$0.00 & 0.02 $\pm$ 0.00 & 2.882 $\pm$ 0.971 & 32.893 $\pm$ 3.562 & 3.375 $\pm$ 0.387 & 3.374 $\pm$ 0.390 \\ \hline
Fairleap (Ave. P(L$|$A)), $\lambda_w:$0.01 & 0.02 $\pm$ 0.00 & 2.866 $\pm$ 0.970 & 24.730 $\pm$ 3.019 & 2.539 $\pm$ 0.328 & 2.538 $\pm$ 0.331 \\ \hline
Fairleap (Ave. P(L$|$A)), $\lambda_w:$0.05 & 0.03 $\pm$ 0.00 & 2.831 $\pm$ 0.970 & 7.615 $\pm$ 1.553 & 0.778 $\pm$ 0.161 & 0.778 $\pm$ 0.162 \\ \hline
Fairleap (Ave. P(L$|$A)), $\lambda_w:$0.22 & 0.04 $\pm$ 0.00 & 2.820 $\pm$ 0.971 & 0.126 $\pm$ 0.129 & 0.013 $\pm$ 0.013 & 0.013 $\pm$ 0.013 \\ \hline
Fairleap (Ave. P(L$|$A)), $\lambda_w:$1.00 & 0.04 $\pm$ 0.00 & 2.820 $\pm$ 0.971 & 0.027 $\pm$ 0.006 & 0.003 $\pm$ 0.001 & 0.003 $\pm$ 0.001 \\ \hline
Fairleap (Ave. P(L$|$A)), $\lambda_w:$4.64 & 0.04 $\pm$ 0.00 & 2.819 $\pm$ 0.970 & 0.018 $\pm$ 0.005 & 0.002 $\pm$ 0.001 & 0.002 $\pm$ 0.001 \\ \hline
Fairleap (Ave. P(L$|$A)), $\lambda_w:$21.54 & 0.04 $\pm$ 0.00 & 2.819 $\pm$ 0.970 & 0.013 $\pm$ 0.004 & 0.001 $\pm$ 0.000 & 0.001 $\pm$ 0.000 \\ \hline
Fairleap (Ave. P(L$|$A)), $\lambda_w:$100 & 0.05 $\pm$ 0.00 & 2.817 $\pm$ 0.970 & 0.010 $\pm$ 0.002 & 0.001 $\pm$ 0.000 & 0.001 $\pm$ 0.000 \\ \hline
\end{tabular}%
}
\caption{Mean and one standard deviation results for all methods-hyperparameter combinations for the 3 variants of \fairlp, for the \texttt{Commuinites and Crime} dataset across 10 separate runs. }
\label{tab:crime_fairlp}
\end{table*}

\cmnt{
\section{Reproducibility Checklist\mg{update}}

Authors must complete a reproducibility checklist at the time of paper submission, which outlines how to reproduce the results of the submission. These responses will become part of each paper submission and will be shared with reviewers. Information related to reproducing experimental results described in the submission may be included in the main paper or the Code and Data Appendix, as appropriate. Further technical details (proofs, descriptions of assumptions, algorithm pseudocode) may be included in the Technical Appendix. When appropriate, authors are encouraged to include detailed information about each reproducibility criterion as part of their Technical Appendix. Reviewers will be asked to assess the degree to which the results reported in a paper are reproducible, and this assessment will be weighed when making final decisions about each paper.

Unless specified otherwise, please answer “yes” to each question if the relevant information is described either in the paper itself or in a technical appendix with an explicit reference from the main paper. If you wish to explain an answer further, please do so in a section titled “Reproducibility Checklist” at the end of the technical appendix.
}

\end{document}